\title{Follow-the-Perturbed-Leader for Adversarial Markov Decision Processes with Bandit Feedback}
\author{%
  Yan Dai \\
  Tsinghua University \\
  \texttt{yan-dai20@mails.tsinghua.edu.cn} \\
   \AND
   Haipeng Luo \\
   University of Southern California \\
   \texttt{haipengl@usc.edu} \\
   \And
   Liyu Chen \\
   University of Southern California \\
   \texttt{liyuc@usc.edu} \\
}
\newcounter{HALG@line}
\renewcommand{\theHALG@line}{\thealgorithm.\arabic{ALG@line}}
\newcommand{\cmark}{\ding{51}}%
\newcommand{\xmark}{\ding{55}}%
\newtheorem{theorem}{Theorem}
\newtheorem{lemma}[theorem]{Lemma}
\newtheorem{corollary}[theorem]{Corollary}
\newtheorem{definition}[theorem]{Definition}
\newtheorem{remark}[theorem]{Remark}
\newtheorem{assumption}[theorem]{Assumption}
\newcommand{\E}{\operatornamewithlimits{\mathbb{E}}}
\newcommand{\D}{~\mathrm{d}}
\newcommand{\argmin}{\operatornamewithlimits{\mathrm{argmin}}}
\renewcommand{\O}{\operatorname{\mathcal O}}
\newcommand{\Otil}{\operatorname{\tilde{\mathcal O}}}
\newcommand{\mS}{\mathcal{S}}
\newcommand{\mA}{\mathcal{A}}
\newcommand{\delay}{\mathfrak{D}}
\renewcommand{\tilde}{\widetilde}
\renewcommand{\hat}{\widehat}
\renewcommand{\bar}{\overline}
\newcommand\footnoteref[1]{\protected@xdef\@thefnmark{\ref{#1}}\@footnotemark}
\begin{document}

\maketitle

\begin{abstract}
We consider regret minimization for Adversarial Markov Decision Processes (AMDPs), where the loss functions are changing over time and adversarially chosen, and the learner only observes the losses for the visited state-action pairs (i.e., bandit feedback).
While there has been a surge of studies on this problem using Online-Mirror-Descent (OMD) methods, very little is known about the Follow-the-Perturbed-Leader (FTPL) methods, which are usually computationally more efficient and also easier to implement since it only requires solving an offline planning problem.
Motivated by this, we take a closer look at FTPL for learning AMDPs, starting from the standard episodic finite-horizon setting.
We find some unique and intriguing difficulties in the analysis and propose a workaround to eventually show that FTPL is also able to achieve near-optimal regret bounds in this case.
More importantly, we then find two significant applications:
First, the analysis of FTPL turns out to be readily generalizable to delayed bandit feedback with order-optimal regret, while OMD methods exhibit extra difficulties~\citep{jin2022near}. 
Second, using FTPL, we also develop the first no-regret algorithm for learning communicating AMDPs in the infinite-horizon setting with bandit feedback and stochastic transitions.
Our algorithm is efficient assuming access to an offline planning oracle, while even for the easier full-information setting, the only existing algorithm~\citep{chandrasekaran2021online} is computationally inefficient. 
\end{abstract}

\section{Introduction}
Markov Decision Processes (MDPs) have long been used to model problems in reinforcement learning, where the agent takes sequential actions in an environment, leading to transitions among different states and observations on loss (or reward equivalently) signals. 
While the classical MDP model assumes a fixed loss function,
there has been increasing interest in studying regret minimization under non-stationary or even adversarial loss functions via the Adversarial MDP (AMDP) model, starting from the work of~\citet{even2009online}.

Similar to other regret minimization problems,
there are typically two categories of algorithms for AMDPs: those based on the Follow-the-Perturbed-Leader (FTPL) framework~\citep{even2009online,neu2010online,neu2012adversarial,chandrasekaran2021online} and those based on the Online-Mirror-Descent (OMD) or the closely related Follow-the-Regularized-Leader (FTRL) framework~\citep{zimin2013online,rosenberg2019onlinea,rosenberg2019onlineb,jin2020learning,jin2021best,jin2022near}.
FTPL methods are usually computationally more efficient and easier to implement as it only requires solving an offline optimization problem (a.k.a. a \textit{planning} problem in the MDP literature). In contrast, OMD/FTRL methods require solving convex optimization problems over a complicated occupancy measure space.

Despite its computational advantages and ease in implementation, FTPL methods are much less studied (especially for learning AMDPs) since they are harder to analyze, less versatile, and are believed to suffer worse regret compared to OMD/FTRL methods.
A recent work by~\citet{wang2020refined} disputes the last common belief and shows that, 
for episodic AMDPs with full-information feedback, FTPL  also enjoys near-optimal regret, similarly to OMD/FTRL.
Nevertheless, little is known about FTPL for learning AMDPs with the more challenging bandit feedback --- to our knowledge, the only FTPL algorithm for this case is by~\citet{neu2010online}. However, that algorithm is analyzed under a strong assumption that every state is reachable by any policy with at least a constant probability $\alpha>0$. Such an exploratory assumption is too strong to be used in realistic applications.

\begin{table}[t]
\begin{minipage}{\textwidth}
    \caption{An overview of the proposed algorithms/results and comparisons with related works.}
    \label{tab:related work}
    \begin{savenotes}
    \renewcommand{\arraystretch}{1.5}
    \resizebox{\textwidth}{!}{%
    \begin{tabular}{|c|c|c|c|c|c|c|}\hline
    Setting
    & Transition & Feedback & Algorithm & Regret\footnote{Here, $S$ and $A$ are the number of states and actions respectively, $K$ is the number of episodes, $T$ is the total number of steps, $\delay$ is the total amount of delay, $\tau$ is the mixing time of an ergodic MDP, and $D$ is the diameter of a communicating MDP.
    Several related works use different notations from ours, and their regret bounds have been converted based on our notations. For infinite-horizon AMDPs, the extra assumptions are listed after the regret bounds, with ``Ergodic'' standing for ergodic MDPs, ``Deterministic'' standing for MDPs with deterministic transitions, and ``Commu'' standing for communicating MDPs (the weakest assumption).} & Method & Effi.\footnote{This column indicates the algorithm's efficiency: \cmark~means polynomial (in all parameters) time complexity, \xmark~means 
    $\Omega(A^S)$ time complexity, and \cmark!~means efficient assuming access to a planning oracle (that returns the best policy given all the MDP's parameters).
    Note that FTPL-based algorithms are usually easier to implement compared to OMD/FTRL-based ones (both treated as OMD-based in this table as they are quite similar).} \\\hline

    \multirow{8}{*}{\makecell{Episodic \\ $H$-horizon \\ AMDPs}} & \multirow{2}{*}{Known} & \multirow{4}{*}{Bandit} & \citet{zimin2013online} & $\Otil(H\sqrt{SAK})$ & OMD & \cmark \\\cline{4-7}
     & & & \textbf{This work} (\Cref{thm:regret of episodic AMDPs known}) & $\Otil(H^{\nicefrac 32}\sqrt{SAK})$ & FTPL & \cmark \\ \cline{2-2}\cline{4-7}
     
     &  \multirow{6}{*}{Unknown} & & \citet{jin2020learning} & $\Otil(H^2S\sqrt{AK})$ & OMD & \cmark \\\cline{4-7}
     & & & \textbf{This work} (\Cref{thm:regret of episodic AMDPs unknown}) & $\Otil(H^2S\sqrt{AK})$ & FTPL & \cmark \\ \cline{3-7}

    & & \multirow{4}{*}{\makecell{Bandit \& \\ Delayed}} & Delayed \textsc{Hedge} \citep{jin2022near} & $\Otil(H^2S\sqrt{AK}+H^{\nicefrac 32}\sqrt{S\delay})$ & OMD & \xmark \\\cline{4-7}
     & & & Delayed \textsc{UOB-FTRL} \citep{jin2022near} & $\Otil(H^2S\sqrt{AK}+H^{\nicefrac 32}SA\sqrt{\delay})$ & OMD & \cmark \\\cline{4-7}
     & & & \makecell{Delayed  \textsc{UOB-REPS} 
      \citep{jin2022near}} & $\Otil(H^2S\sqrt{AK}+H^{\nicefrac 54}(SA)^{\nicefrac 14}\sqrt{\delay})$ & OMD & \cmark \\\cline{4-7}
     & & & \textbf{This work} (\Cref{thm:regret of delayed AMDPs}) & $\Otil(H^2S\sqrt{AK}+H^{\nicefrac 32}\sqrt{SA\delay})$ & FTPL & \cmark \\\cline{1-7}

     \multirow{8}{*}{\makecell{Infinite- \\ horizon \\ AMDPs}} & \multirow{8}{*}{Known} & \multirow{3}{*}{Full-info} & \citet{even2009online} & $\Otil(\tau^2\sqrt{T})$ (Ergodic) & OMD & \cmark \\\cline{4-7}
     & &  & \citet{chandrasekaran2021online} & $\Otil(S^4\sqrt T)$ (Deterministic) & FTPL & \cmark \\\cline{4-7}
     & & & \citet{chandrasekaran2021online} & $\Otil(D^2\sqrt {ST})$ (Commu) & FTPL & \xmark \\\cline{3-7}
     & & \multirow{5}{*}{Bandit} & \citet{neu2014online} & $\Otil(\sqrt {\tau^3 AT})$ (Ergodic) & OMD & \cmark \\\cline{4-7}
     & &  & \citet{dekel2013better} & $\Otil(S^3 A T^{\nicefrac 23})$ (Deterministic) & OMD & \cmark \\\cline{4-7}
     & & & \textbf{This work} (\Cref{thm:regret of infinite-horizon AMDPs}) & $\Otil(A^{\nicefrac 12}(SD)^{\nicefrac 23}T^{\nicefrac 56})$ (Commu) & FTPL & \cmark ! \\\cline{4-7}
     & & & \textbf{This work} (\Cref{thm:regret of Hedge}) & $\Otil(A^{\nicefrac 13}(SDT)^{\nicefrac 23})$ (Commu) & OMD & \xmark \\\cline{4-7}
     & & & \citet{dekel2014bandits} & $\Omega(S^{\nicefrac 13}T^{\nicefrac 23})$ (if only Commu) & --- & --- \\\cline{1-7}
    \end{tabular}}
    \end{savenotes}
\end{minipage}
\end{table}

Motivated by this fact, we take a closer look at FTPL for learning AMDPs under bandit feedback, aiming at showing strong regret guarantees while enjoying its computational advantages.
We start with the standard episodic finite-horizon setting and indeed find some intriguing difficulties compared to OMD/FTRL.
After addressing these difficulties, we then show critical applications of FTPL methods to two more challenging setups: episodic AMDPs with \textit{delayed} bandit feedback and \textit{infinite-horizon} AMDPs with only communicating assumptions, with the latter result advancing the state-of-the-art.
More specifically,  our contributions are (see also \Cref{tab:related work} for a summary):
\begin{enumerate}[leftmargin=*]
    \item We start with the heavily studied episodic setting with $K$ episodes, $H$ steps in each episode, $S$ states, and $A$ actions. Our first intriguing observation is that: since the loss of each policy is linear in a non-binary vector (i.e., the occupancy measure), existing analysis for the stability term of FTPL fails, even though it works for the binary case (e.g.,~\citet{neu2016importance}). Our next important observation is that there exists a simple fix to this issue that only leads to an extra $H$ factor. This eventually leads to $\Otil (H^{\nicefrac 32}\sqrt{SAK})$ regret when the transition is known (\Cref{alg:finite-horizon bandit feedback known transition}, \Cref{thm:regret of episodic AMDPs known}), which is only $\sqrt{H}$ factor larger than the near-optimal regret achieved by OMD~\citep{zimin2013online}, and $\Otil(H^2S\sqrt{AK})$ regret when the transition is unknown (\Cref{alg:appendix finite-horizon bandit feedback unknown transition}, \Cref{thm:regret of episodic AMDPs unknown}), matching the state-of-the-art again achieved by OMD~\citep{jin2020learning}.
    See \Cref{sec:episodic} for details.

    \item We next find that compared to OMD, the analysis of FTPL is much easier to be generalized to the delayed feedback setting where losses for episode $k$ are observed only at the end of episode $k+d_k$ for some $d_k \geq 0$~\citep{lancewicki2020learning, jin2022near}. Indeed, these two prior works demonstrate the difficulty of analyzing OMD with delay feedback, with~\citet{lancewicki2020learning} only achieving $\Otil((K+\delay)^{\nicefrac 23})$ regret (where $\delay=\sum_k d_k$ is the total amount of delay; dependence on other parameters is omitted) and~\citet{jin2022near} improving it to $\Otil(\sqrt{K+\delay})$ via either an inefficient algorithm or an efficient OMD-based algorithm with more involved analysis and/or new delayed-adapted loss estimators.
    FTPL, on the other hand, achieves $\Otil(\sqrt{K+\delay})$ regret by a simple extension of the analysis (\Cref{thm:regret of delayed AMDPs}). The dependence on $S$ and $A$ is also better than the OMD method of~\citep{jin2022near} with the same kind of standard loss estimators (though worse than their best result with the delayed-adapted estimators; see \Cref{tab:related work} and \Cref{sec:delay} for details).
    
    \item While our results above do not improve the best existing ones, our final application of FTPL provides the \textit{first} result for learning infinite-horizon communicating AMDPs with bandit feedback and known stochastic transitions.
    Specifically, our algorithm achieves $\Otil(A^{\nicefrac 12}(SD)^{\nicefrac 23}T^{\nicefrac 56})$ regret (\Cref{alg:appendix infinite-horizon bandit feedback known transition}, \Cref{thm:regret of infinite-horizon AMDPs}), where $D$ is the diameter of the MDP and $T$ is the total number of steps. It is efficient assuming access to an offline planning oracle (that returns the best stationary policy given a fixed transition function and a sequence of loss functions for each step).
    Previous results either only handle deterministic transitions~\citep{dekel2013better} or full-information loss feedback~\citep{chandrasekaran2021online}.
    Moreover, the FTPL algorithm of~\citet{chandrasekaran2021online} for stochastic transitions is inefficient even given the same planning oracle (since it explicitly adds independent noise to \textit{every} policy).    
    For completeness, we also provide an inefficient algorithm (\Cref{alg:appendix hedge}) that achieves $\Otil(A^{\nicefrac 12}(SDT)^{\nicefrac 23})$ regret in our bandit setting, matching the $\Omega(T^{\nicefrac 23})$ lower bound of~\citet{dekel2014bandits} in terms of $T$. 
    See \Cref{sec:infinite} for details.
\end{enumerate}

\subsection{Related Work}

\textbf{Follow-the-Perturbed-Leader:} FTPL is first proposed by~\citet{hannan1957approximation} and later popularized by~\citet{kalai2005efficient}.
It has proven to be extremely powerful for structured online learning problems (such as online shortest path) since its implementation is as easy as solving the corresponding offline optimization problem (such as finding the shortest path of a given graph).
Over the years, FTPL has been extended to problems with semi-bandit feedback~\citep{neu2015first, neu2016importance}, contextual information~\citep{syrgkanis2016efficient}, non-linear losses~\citep{dudik2020oracle}, smoothed adversaries~\citep{block2022smoothed, haghtalab2022oracle}, and others.
However, FTPL for learning AMDPs under bandit feedback is poorly understood, which motivates this work.
As we successfully show, improving our understanding of FTPL is indeed beneficial since it at least leads to new results for the infinite-horizon setting (in addition to its computational advantages for other settings).
Below, we briefly review the literature of AMDPs for the three settings we consider.

\textbf{Episodic Finite-Horizon AMDPs:}
Earlier works on this topic focus on the easier known transition case.
In particular, the OMD-based \textsc{O-REPS} algorithm by~\citet{zimin2013online} achieves $\Otil(H\sqrt K)$ regret with full-information feedback and $\Otil(H\sqrt{SAK})$ regret with bandit feedback, both optimal up to logarithmic factors. On the other hand, FTPL is recently shown to achieve $\Otil(H^2\sqrt K)$ regret with full-information feedback~\citep{wang2020refined}.
As mentioned, the only FTPL algorithm for bandit feedback is by~\citet{neu2010online}, which guarantees $\Otil(H^2\sqrt{AK}/\alpha)$ regret assuming that all states are reachable by any policy with a probability of at least $\alpha$.
In contrast, our FTPL algorithm removes this requirement and achieves $\Otil (H^{\nicefrac 32}\sqrt{SAK})$ regret, which is only $\sqrt{H}$ away from optimal.

When the transition is unknown, with full-information feedback, the OMD-based algorithm \textsc{UC-O-REPS}~\citep{rosenberg2019onlinea} achieves $\Otil(H^2S\sqrt{AK})$ regret, while the FTPL-based \textsc{FPOP}~\citep{neu2012adversarial} is 
shown to achieve $\Otil(H^2S\sqrt{AK})$ regret as well~\citep{wang2020refined}. With bandit feedback, the OMD-based algorithm \textsc{UOB-REPS}~\citep{jin2020learning} also achieves the same $\Otil(H^2S\sqrt{AK})$ regret. 
At the same time, our algorithm enjoys the same guarantee and is the first FTPL algorithm for bandit feedback and unknown transition.
However, the current best lower bound for this problem is $\Omega(H^{\nicefrac 32}\sqrt{SAK})$~\citep{jin2018q}, so there is still an $\O(\sqrt{HS})$ gap.

Besides OMD and FTPL, there is, in fact, another category of algorithms for learning AMDPs: policy optimization~\citep{shani2020optimistic, luo2021policy}, which performs OMD in each \textit{state} and is also efficient. However, the regret bounds are worse by at least an $H$ factor~\citep{luo2021policy}.

\textbf{Delayed Feedback:} 
The most related works are~\citet{lancewicki2020learning} and \citet{jin2022near}, and we refer the reader to the references therein for the literature on delayed feedback for different problems.
Importantly, \citet{jin2022near} point out the unique difficulty when analyzing OMD/FTRL for AMDPs with delayed feedback.
Circumventing this difficulty one way or another, they develop three algorithms: the first one, Delayed \textsc{Hedge}, is inefficient; the second one, Delayed \textsc{UOB-FTRL}, achieves worse regret ($\sqrt{SA}$ larger for the delay-related term) compared to ours; and the third one makes use of a delay-adapted estimator and achieves the best bound (see \Cref{tab:related work}).
We emphasize again that our FTPL analysis is much simpler and a direct extension of the non-delayed case.
The current best lower bound for this problem is $\Omega(H^{\nicefrac 32}\sqrt{SAK}+H\sqrt{\delay})$~\citep{lancewicki2020learning}.


\textbf{Infinite-Horizon AMDPs:} 
Learning AMDPs becomes significantly more difficult in the infinite horizon setting. As far as we know, all works in this line (including ours) assume a known transition function.
Earlier works focus on the simpler case with a strong \textit{ergodic} assumption~\citep{even2009online, neu2014online}.
For the more general \textit{communicating} assumptions, a recent work~\citep{chandrasekaran2021online} considers full-information feedback and develops an efficient FTPL  algorithm for deterministic transitions with $\Otil(S^4\sqrt T)$ regret and another inefficient FTPL algorithm for stochastic transitions with $\Otil(D^2\sqrt {ST})$ regret.
Under bandit feedback, prior works only study deterministic transitions~\citep{arora2012deterministic,dekel2013better}, with~\citet{dekel2013better} achieving $\Otil(S^3A T^{\nicefrac 23})$ regret, matching the lower bound~\citep{dekel2014bandits} for the $T$-dependency.
Our results are the first for bandit feedback and stochastic transitions.
Note that since bandit feedback is only more general, our oracle-efficient algorithm can also be applied to the full-information setting, while the only existing algorithm~\citep{chandrasekaran2021online} is computationally inefficient.



\section{Preliminaries}
\textbf{General Notations:} We use $[N]$ to denote the set $\{1,2,\ldots,N\}$.
For a (finite) set $X$, we use $\triangle(X)\triangleq \{x\in \mathbb R_{\ge 0}^{\lvert X\rvert}\mid \sum_{i=1}^{\lvert X\rvert}x_i=1\}$ to denote the probability simplex over the set $X$.
We use $\Otil(\cdot)$ to hide all terms logarithmic in $H,S,A,K$ and $T$.
$\text{Laplace}(\eta)$ denotes the Laplace (also known as double-exponential) distribution with center $0$ and parameter $\eta$, whose probability density is $f(x)=\frac{\eta}{2}\exp(-\eta \lvert x\rvert)$, $\forall x\in \mathbb R$.
For an event $\mathcal E$, let $\mathbbm 1[\mathcal E]$ be its indicator.
In episodic settings, let $\{\mathcal F_k\}_{k=0}^K$ be the natural filtration such that $\mathcal F_k$ contains the history of episodes $1, \ldots, k$. With a slight abuse of notation, in the infinite-horizon setting, we also use $\{\mathcal F_t\}_{t=0}^T$ to denote the natural filtration.

\textbf{Episodic Adversarial Markov Decision Process:}
An episodic Adversarial Markov Decision Process (AMDP) is defined by a tuple $\mathcal M=(\mathcal S,\mathcal A,\mathbb P,\ell,K,H,s^1)$, where
$\mathcal S$ is the state space,
$\mathcal A$ is the action space,
$\mathbb P\colon [H]\times \mathcal S\times \mathcal A\to \triangle(\mathcal S)$ is the  transition function, 
$\ell\colon [K]\times [H]\times \mathcal S\times \mathcal A\to [0,1]$ is the loss function unknown to the agent but fixed before the game (i.e., we are assuming an \textit{oblivious} adversary),\footnote{Note that the loss function can vary arbitrarily for different $(k,h)$-pairs, instead of being stochastic.}
$K$ is the number of episodes,
$H$ is the horizon length, and
$s^1\in \mathcal S$ is the initial state.
Denote by $S=\lvert \mathcal S\rvert<\infty$ and $A=\lvert \mathcal A\rvert<\infty$, the number of states and actions, respectively.

The agent interacts with the environment for $K$ episodes.
For the $k$-th one ($k\le K$), she starts from the initial state $s^1$ and sequentially interacts with the environment for $H$ steps. At the $h$-th step (where $h\in [H]$), the agent observes state $s_k^h\in \mathcal S$, chooses an action $a_k^h\in \mathcal A$, observes and suffers the loss $\ell_k^h(s_k^h,a_k^h)$ (bandit feedback),\footnote{On the other hand, in the easier full-information setting, the entire $\ell_k^h$ is revealed.} and then transits to state $s_k^{h+1}$ according to the probability distribution $\mathbb P^h(\cdot \mid s_k^h,a_k^h)$. After $H$ steps, the episode ends and the agent proceeds to episode $k+1$.

A (deterministic) \textit{policy} of the agent is defined by $\pi=\{\pi^h\colon \mathcal S\to \mathcal A\}_{h\in [H]}$.
Denote the set of all deterministic policies by $\Pi$.
The expected loss incurred by policy $\pi\in \Pi$ for an episode with loss function $\hat\ell$ is denoted by
$
    V(\pi;\hat\ell)\triangleq \E \left [\sum_{h=1}^H \hat\ell^h(s^h,\pi^h(s^h))\middle \vert s^{h+1}\sim \mathbb P^h(\cdot\mid s^h,\pi^h(s^h)), \forall h < H\right ]
$.
Suppose the agent uses policies $\pi_1,\pi_2,\ldots,\pi_K$ for episodes $1,2,\ldots,K$, respectively. The total expected loss of the agent is then $\E\left [\sum_{k=1}^K V(\pi_k;\ell_k)\right ]$, where the expectation is taken with respect to the agent's private randomness.
The baseline is the best deterministic policy in hindsight, defined by $\pi^\ast\in \argmin_{\pi\in \Pi}\sum_{k=1}^K V(\pi;\ell_k)$.
The goal of the agent is to minimize her \textit{regret} over $K$ episodes, which is the difference between her total loss and that of $\pi^\ast$, formally defined as
\begin{equation*}
    \mathcal R_K\triangleq \E\left [\sum_{k=1}^K V(\pi_k;\ell_k)\right ]-\sum_{k=1}^K V(\pi^\ast;\ell_k).
\end{equation*}

\textbf{Episodic AMDPs with Delayed Feedback:} This setup is exactly the same as the episodic AMDPs, except that the feedback $\{\ell_k^h(s_k^h,a_k^h)\}_{h=1}^H$ for episode $k$ is only available after $d_k$ episodes, i.e., at the end of the $(k+d_k)$-th episode.
Define $\delay=\sum_{k=1}^K d_k$ to be the total feedback delay, assumed to be known to the agent as this assumption can be easily relaxed via a {doubling trick} \citep{thune2019nonstochastic}.\footnote{As in \citet{jin2022near}, we only consider delayed loss feedback, but not delayed trajectory feedback, since the latter only affects the transition estimation and can be handled similarly to~\citet{lancewicki2020learning}.}

\textbf{Infinite-Horizon AMDPs:}
Similar to episodic AMDPs, infinite-horizon AMDPs is defined by a tuple $\mathcal M=(\mathcal S,\mathcal A,\mathbb P,\ell,T,s^1)$.
Here, starting from the initial state $s^1 \in \mathcal S$, the agent interacts with the environment for $T$ total steps without any reset, under the transition model $\mathbb P\colon \mathcal S\times \mathcal A\to \triangle(\mathcal S)$ (which does not vary over time)
and loss functions $\ell\colon [T] \times \mathcal S\times \mathcal A\to [0,1]$.
More specifically, 
at time $t\in [T]$, the agent observes state $s^t\in \mathcal S$, chooses an action $a^t\in \mathcal A$, observes and suffers loss $\ell^t(s^t,a^t)$, and then transits to $s^{t+1}\sim \mathbb P(\cdot \mid s^t,a^t)$.
Her goal is also to minimize the regret, defined as
\begin{equation}\label{eq:regret_def_infinite_horizon}
\begin{split}
    \mathcal R_T\triangleq \E\left [\sum_{t=1}^T \ell^t(s^t,a^t)\middle \vert s^{t+1}\sim \mathbb P(\cdot \mid s^t,a^t) \right ]-\min_{\pi\in \Pi}\E\left [\sum_{t=1}^T \ell^t(s^t,\pi(s^t))\middle \vert s^{t+1}\sim \mathbb P(\cdot \mid s^t,\pi(s^t))\right ],
\end{split}
\end{equation}
where $\Pi$ is now the set of all deterministic policies mapping from $\mathcal S$ to $\mathcal A$.
As pointed out by \citet{bartlett2009regal}, without any extra assumptions, sublinear regret is impossible for this problem due to the lack of resets.
Earlier works make a strong \textit{ergodic} assumption such that, intuitively, any mistake will be forgiven after logarithmic steps \citep{even2009online}.
Here, we instead focus on the much weaker \textit{communicating} assumption as in~\citet{chandrasekaran2021online}:
\begin{definition}[Communicating MDP]\label{def:communicating MDP}
We call an MDP $\mathcal M$ communicating if it has a finite \textit{diameter} $D \triangleq\max_{s\ne s'}\min_{\pi\in \Pi} \E[T(s'\mid \mathcal M,\pi,s)]$ where 
$T(s'\mid \mathcal M,\pi,s)$ is the (random) time step when state $s'$ is first reached by policy $\pi$ starting from state $s$. 
\end{definition}

Just like~\citet{chandrasekaran2021online}, for technical reasons, we also need the following mild assumption saying that there exists a special state for the agent to ``park'' there without moving.
\begin{assumption}\label{assump:staying state}
There exist state $s^\ast \in \mathcal S$ and action $a^\ast \in \mathcal A$ such that $\mathbb P(s^\ast\mid s^\ast,a^\ast)=1$.
\end{assumption}


\section{FTPL for Episodic AMDPs}\label{sec:episodic}
In this section, we consider the basic (non-delayed) episodic setting. To best illustrate the unique difficulty we meet when analyzing FTPL and the way we address it, we first discuss the known-transition case (i.e., $\{\mathbb P^h\}_{h=1}^H$ is known to the agent), and then move on to unknown transitions.

\subsection{Known Transition}\label{sec:known transition}

Our algorithm follows the standard FTPL framework (see \Cref{alg:finite-horizon bandit feedback known transition}).
Ahead of time (as the adversary is oblivious), we sample a perturbation vector $z\colon [H]\times \mathcal S \times \mathcal A \rightarrow \mathbb{R}$ so that $z^h(s,a)$ is an independent sample from $\text{Laplace}(\eta)$ for some parameter $\eta$.
At the beginning of episode $k$, given the loss estimators $\hat \ell_1, \ldots, \hat \ell_{k-1}$ from previous episodes (whose construction will be specified later),
we simply play the policy that minimizes the cumulative perturbed estimated loss (break tie arbitrarily):
\begin{equation*}
    \pi_k=\argmin_{\pi\in \Pi}\left (V(\pi;z)+\sum_{k'=1}^{k-1} V(\pi;\hat \ell_{k'})\right )=\argmin_{\pi\in\Pi}V\left (\pi; \hat \ell_{0:k-1} \right ),
\end{equation*}
where we use $\hat \ell_{l:r}$ (where $0\le l\le r\le K$) as a shorthand notation for $\sum_{k'=l}^{r}\hat \ell_{k'}$ and $\hat\ell_0$ as an alias for $z$ for notational convenience.
This optimization over $\pi\in \Pi$ is a simple planning problem and can be solved by dynamic programming efficiently.

Upon seeing $s_k^h$, $a_k^h$, and $\ell_k^h(s_k^h,a_k^h)$, we construct the loss estimator $\hat \ell_k^h$ using the Geometric Re-sampling technique \citep{neu2016importance}.
The idea is to repeat the sampling procedure (\Cref{line:GR1} to \ref{line:GR2}) until the same pair $(s_k^h, a_k^h)$ is visited again at step $h$ or this has been repeated $L$ times for some parameter $L$.
Let the total number of trials be $M_k^h$, then the estimator is defined as $\hat \ell_k^h(s,a)=M_k^h\cdot \ell_k^h(s_k^h,a_k^h)\cdot \mathbbm 1[(s_k^h,a_k^h)=(s,a)]$ (\Cref{line:definition of M_k^h}).
Note that the sampling procedure can be done freely without interacting with the environment as the transition is known. 
The rational behind this estimator is that as long as $L$ is reasonably large, $M_k^h$ is a good approximation of the inverse probability of visiting $(s_k^h, a_k^h)$ (which is hard to calculate directly for FTPL), making $\hat\ell_k^h$ a good (and efficient) approximation of the standard importance weighted estimator~\citep{zimin2013online}.

%

\begin{algorithm}[t]
\caption{FTPL for Episodic AMDPs with Bandit Feedback and Known Transition}
\label{alg:finite-horizon bandit feedback known transition}
\begin{algorithmic}[1]
\Require{Laplace distribution parameter $\eta$. Geometric Re-sampling parameter $L$.}
\State  Sample perturbation $\hat\ell_0=z$ such that $z^h(s,a)$ is an independent sample of $\text{Laplace}(\eta)$.
\For{$k=1,2,\ldots,K$}
\State Calculate $\pi_k=\argmin_{\pi \in \Pi}V(\pi;\hat \ell_{0:k-1})$ (via dynamic programming). \label{line:find_the_leader}
\For{$h=1,2,\ldots,H$}
\State Observe $s_k^h$, play $a_k^h=\pi_k(s_k^h)$, suffer and observe loss $\ell_k^h(s_k^h,a_k^h)$.
\State Calculate loss estimator $\hat \ell_k^h$ via Geometric Re-sampling \citep{neu2016importance}:
\For{$M_k^h=1,2,\ldots,L$}
    \State Sample a fresh perturbation $\tilde z$ in the same way as $z$. \label{line:GR1}
    \State Calculate $\pi_k'=\argmin_{\pi \in \Pi}V(\pi;\hat \ell_{1:k-1}+\tilde z)$. 
    \State Simulate $\pi_k'$ for $h$ steps starting from $s^1$ and following transitions $\mathbb P^1, \ldots, \mathbb P^h$.   \label{line:GR2}
    \If{$(s_k^h,a_k^h)$ is visited at step $h$ or $M_k^h=L$}
        \State{Set $\hat \ell_k^h(s,a)=M_k^h\cdot \ell_k^h(s_k^h,a_k^h)\cdot \mathbbm 1[(s_k^h,a_k^h)=(s,a)]$ and break. \label{line:definition of M_k^h}}
    \EndIf
\EndFor
\EndFor
\EndFor
\end{algorithmic}
\end{algorithm}



\textbf{Analysis Sketch:}
While our algorithm follows the standard FTPL framework, we find some intriguing difficulty in the analysis that is unique to MDPs and undiscovered before.
To illustrate this difficulty, let us first describe an overview of the analysis.
First, since the loss estimators are almost unbiased (as shown by~\citet{neu2016importance}), we only need to focus on the regret with respect to the estimated losses, that is,
$\E\big [\sum_{k=1}^K V(\pi_k;\hat\ell_k)-\sum_{k=1}^K V(\pi^\ast;\hat\ell_k)\big ]$.
Adding and subtracting $\E\big [\sum_{k=1}^K V(\pi_{k+1};\hat\ell_k)\big]$ (the loss of an imaginary ``leader'' that looks one episode ahead),
our next goal is to bound the so-called \textit{stability term} $\E\big [\sum_{k=1}^K V(\pi_k;\hat\ell_k)-\sum_{k=1}^K V(\pi_{k+1};\hat\ell_k)\big ]$ (the rest, usually referred as the \textit{error term}, can be bounded by the
standard ``be-the-leader'' lemma).

For the stability term, fix an episode $k$ and define $p_k(\pi)$ as the probability of selecting $\pi$ as $\pi_k$ w.r.t. the randomness of the perturbation $z$.
Further introduce the notion of \textit{occupancy measures} \citep{altman1999constrained,neu2012adversarial}: each policy $\pi\in \Pi$ induces $H$ occupancy measures $\mu_\pi^h\in  \triangle(\mathcal S\times \mathcal A)$, $\forall h\in [H]$, where $\mu_{\pi}^h(s,a)$ denotes the probability of visiting $(s,a)$ at step $h$ if one executes policy $\pi$ starting from the initial state $s^1$.
With these notations, each summand for the stability term becomes:
\[
\E\left[ V(\pi_k;\hat\ell_k)- V(\pi_{k+1};\hat\ell_k) \right]
= \E\left[ \sum_{\pi\in \Pi}(p_k(\pi)-p_{k+1}(\pi)) \left\langle \mu_\pi,\hat \ell_k \right\rangle \right],
\]
where $\big\langle \mu_\pi,\hat \ell_k \big\rangle \triangleq \sum_{h=1}^H \big\langle \mu_\pi^h,\hat \ell_k^h \big\rangle$.
This stability term is exactly in the same form as that in Lemma~8 of~\citet{neu2016importance} or Lemma~10 of~\citet{syrgkanis2016efficient} for (contextual) semi-bandit problems, except that in their contexts, $\mu_\pi$ is a \textit{binary} vector.
This seemingly slight difference turns out to be important!
Specifically, in these two prior works, they both show (using our notations): 
\begin{equation}\label{eq:ideal_stability1}
p_{k+1}(\pi) \geq p_k(\pi)\exp\left(-\eta   \left\langle \mu_\pi,\hat \ell_k \right\rangle \right),
\end{equation}
which, together with the fact $\exp(-x) \geq 1-x$, implies
\begin{equation}\label{eq:ideal_stability2}
\E\left[ V(\pi_k;\hat\ell_k)- V(\pi_{k+1};\hat\ell_k) \right] 
\leq \eta\E\left[ \sum_{\pi\in \Pi}p_k(\pi)\left\langle \mu_\pi,\hat \ell_k \right\rangle^2 \right].
\end{equation}
Readers familiar with the online learning literature would have recognized the last expression, since it is also the standard stability term achieved by (inefficiently) running the classical \textsc{Hedge} algorithm~\citep{freund1997decision} over all policies (see e.g. Theorem~7.3 of~\citet{bubeck2011introduction}).
Indeed, this term is small enough and can be shown to be of order $\O(\eta HSA)$ in our context after plugging in the definition of the loss estimators, which would then basically complete the proof.

However, not only do we realize that the proof of \Cref{eq:ideal_stability1} heavily rely on the binary nature of $\mu_\pi$, we in fact also find a counterexample where \Cref{eq:ideal_stability2} is simply \textit{incorrect} when $\mu_\pi$ is non-binary
(see \Cref{sec:comparism with Syrgkanis16} for the counterexample).
We find this fact intriguing, because \Cref{eq:ideal_stability2} holds for the aforementioned inefficient \textsc{Hedge} algorithm regardless whether $\mu_\pi$ is binary or not.

Further examining the proof of~\citet{neu2016importance} and~\citet{syrgkanis2016efficient}, however, one can prove the following weaker version of \Cref{eq:ideal_stability1} and \Cref{eq:ideal_stability2} (namely \Cref{eq:actual_stability1} and \Cref{eq:actual_stability2} respectively).
\begin{lemma}[Single-Step Stability]\label{lem:single-step stability}
For all $k\in [K]$ and $\pi \in \Pi$, we have
\begin{equation}\label{eq:actual_stability1}
    p_{k+1}(\pi)\ge p_k(\pi)\exp\left (-\eta \sum_{h=1}^H\lVert \hat \ell_k^h\rVert_1\right ),
\end{equation}
and thus 
\begin{equation}\label{eq:actual_stability2}
\E\left[ V(\pi_k;\hat\ell_k)- V(\pi_{k+1};\hat\ell_k) \right]
\leq \eta \E\left[\left(\sum_{h=1}^H\lVert \hat \ell_k^h\rVert_1 \right) \sum_{\pi \in \Pi} p_k(\pi)\left\langle \mu_\pi,\hat \ell_k \right\rangle \right].
\end{equation}
\end{lemma}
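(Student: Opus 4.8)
The plan is to prove the pathwise inequality \Cref{eq:actual_stability1} by a density‑shift argument tailored to the Laplace perturbation, and then obtain \Cref{eq:actual_stability2} from it almost for free. Recall from the overview that the $k$‑th summand of the stability term equals $\E\big[\sum_{\pi\in\Pi}(p_k(\pi)-p_{k+1}(\pi))\langle\mu_\pi,\hat\ell_k\rangle\big]$ and that $V(\pi;\cdot)=\langle\mu_\pi,\cdot\rangle$ is linear in the loss, so everything comes down to controlling how $p_k(\pi)$ turns into $p_{k+1}(\pi)$ once the extra nonnegative loss $\hat\ell_k$ is added to the cumulative loss.

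\emph{Step 1: the pathwise bound.} Fix $k$ and a realization of $\hat\ell_{1:k}$, and write $c=\hat\ell_{1:k-1}$. Because $\pi_k=\argmin_{\pi'}\langle\mu_{\pi'},c+z\rangle$ and $V$ is linear in the loss, the event $\{\pi_k=\pi\}$ is, up to a $z$‑measure‑zero tie set, the event that the perturbation vector $z$ lands in the polyhedron $R_\pi\triangleq\{v:\langle\mu_\pi,c+v\rangle\le\langle\mu_{\pi'},c+v\rangle\ \forall\pi'\in\Pi\}$, so $p_k(\pi)=\int_{R_\pi}f(v)\D v$ with $f(v)=\prod_{h,s,a}\frac{\eta}{2}e^{-\eta|v^h(s,a)|}$ the density of the i.i.d.\ $\text{Laplace}(\eta)$ vector. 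Since $\pi_{k+1}=\argmin_{\pi'}\langle\mu_{\pi'},c+(z+\hat\ell_k)\rangle$, the event $\{\pi_{k+1}=\pi\}$ is $\{z+\hat\ell_k\in R_\pi\}$, i.e.\ $z\in R_\pi-\hat\ell_k$, so the substitution $w=v+\hat\ell_k$ gives $p_{k+1}(\pi)=\int_{R_\pi}f(w-\hat\ell_k)\D w$. Comparing the two integrands coordinatewise via $|x-c|\le|x|+|c|$ and $\hat\ell_k^h(s,a)\ge 0$,
\[
\frac{f(w-\hat\ell_k)}{f(w)}=\prod_{h,s,a}e^{-\eta\left(|w^h(s,a)-\hat\ell_k^h(s,a)|-|w^h(s,a)|\right)}\ge\prod_{h,s,a}e^{-\eta\hat\ell_k^h(s,a)}=\exp\Big(-\eta\sum_{h=1}^H\lVert\hat\ell_k^h\rVert_1\Big),
\]
pointwise in $w$; integrating over $R_\pi$ against $f$ gives \Cref{eq:actual_stability1}.

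\emph{Step 2: deducing the stability estimate.} Applying $e^{-x}\ge 1-x$ to \Cref{eq:actual_stability1} yields $p_k(\pi)-p_{k+1}(\pi)\le\eta\big(\sum_h\lVert\hat\ell_k^h\rVert_1\big)p_k(\pi)$ for every $\pi$. Since $\mu_\pi\ge 0$ and $\hat\ell_k\ge 0$ force $\langle\mu_\pi,\hat\ell_k\rangle\ge 0$, multiplying by $\langle\mu_\pi,\hat\ell_k\rangle$ and summing over $\pi\in\Pi$ preserves the inequality, giving $\sum_{\pi}(p_k(\pi)-p_{k+1}(\pi))\langle\mu_\pi,\hat\ell_k\rangle\le\eta\big(\sum_h\lVert\hat\ell_k^h\rVert_1\big)\sum_\pi p_k(\pi)\langle\mu_\pi,\hat\ell_k\rangle$. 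Taking expectations and using the per‑episode identity recalled above yields \Cref{eq:actual_stability2}.

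\emph{Where I expect the real content to be.} The computation itself is short; the point that deserves care is \emph{why} one is stuck with $\sum_h\lVert\hat\ell_k^h\rVert_1$ in the exponent rather than the sharper $\langle\mu_\pi,\hat\ell_k\rangle$ of \Cref{eq:ideal_stability1}. The coordinatewise estimate above is the best \emph{pointwise‑in‑$w$} lower bound on the density ratio, and the refinement of \citet{neu2016importance,syrgkanis2016efficient} recovering $\langle\mu_\pi,\hat\ell_k\rangle$ crucially exploits, through the geometry of $R_\pi$, that $\mu_\pi$ is a $0/1$ vector — which is false for a general occupancy measure, and in fact \Cref{eq:ideal_stability2} itself can fail in the non‑binary case. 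Settling for this looser bound is exactly the ``extra $H$ factor'' the analysis pays. A secondary (and, in the paper's setup, already dispatched by the overview's per‑episode identity) subtlety is that $\hat\ell_k$ is correlated with the perturbation $z$ through the episode‑$k$ trajectory, so \Cref{eq:actual_stability1} should be read as a statement holding for each realization of $\hat\ell_{1:k}$, with the perturbation the only remaining randomness.
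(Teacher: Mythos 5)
Your proof is correct and follows essentially the same route as the paper's: a change of variables in the perturbation integral combined with the triangle-inequality bound on the ratio of Laplace densities, $f(w-\hat\ell_k)/f(w)\ge \exp(-\eta\sum_h\lVert\hat\ell_k^h\rVert_1)$, followed by $e^{-x}\ge 1-x$ and summation against the nonnegative weights $\langle\mu_\pi,\hat\ell_k\rangle$. The paper performs the substitution in the opposite direction (rewriting $p_k(\pi)$ as an integral of $f(z+\hat\ell_k)$ over the region defining $p_{k+1}$), but this is the identical argument, and your closing remarks about conditioning on $\hat\ell_{1:k}$ and the failure of the binary-case refinement match the paper's treatment.
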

Fortunately, while \Cref{eq:actual_stability2} looks seemingly much larger than the classic bound \Cref{eq:ideal_stability2}, it is in fact at most larger by an $H$ factor, that is, the right-hand side of \Cref{eq:actual_stability2} can be shown be of order $\O(\eta H^2SA)$ (see \Cref{lem:appendix known stability term} in the appendix).
Putting everything together, this allows us to prove the following regret guarantee for \Cref{alg:finite-horizon bandit feedback known transition}, which is $\sqrt{H}$ larger than the optimal bound~\citep{zimin2013online} due to the weakened stability bound. One may refer to \Cref{sec:appendix episodic known} for the formal proof.
\begin{theorem}
\label{thm:regret of episodic AMDPs known}
For episodic AMDPs with bandit feedback and known transitions, \Cref{alg:finite-horizon bandit feedback known transition} with $\eta=\nicefrac{1}{\sqrt{HSAK}}$ and $L=\sqrt{\nicefrac{SAK}{H}}$ ensures 
$
    \mathcal R_T = \Otil\big (H^{\nicefrac 32}\sqrt{SAK}\big ).
$
\end{theorem}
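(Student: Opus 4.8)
The plan is to instantiate the standard FTPL template sketched above, keeping track of four quantities: two bias terms coming from Geometric Re-sampling, an error (be-the-leader) term, and the stability term. Throughout, let $p_k$ be the conditional law of $\pi_k$ given $\mathcal F_{k-1}$ (over the perturbation randomness), let $q_k(s,a,h)\triangleq\sum_{\pi}p_k(\pi)\mu_\pi^h(s,a)$ be the induced occupancy measure, and write $\langle q_k,\hat\ell_k\rangle\triangleq\sum_\pi p_k(\pi)V(\pi;\hat\ell_k)=\sum_{h,s,a}q_k(s,a,h)\hat\ell_k^h(s,a)$ (whether $z$ is drawn once or resampled each episode does not affect the expected regret against an oblivious adversary, and I would use whichever view is convenient in each step). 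Since $\E[V(\pi_k;\ell_k)\mid\mathcal F_{k-1}]=\langle q_k,\ell_k\rangle$, the first step is the decomposition
\begin{align*}
\mathcal R_K &= \E\Big[\sum_{k=1}^K\langle q_k,\ell_k-\hat\ell_k\rangle\Big]+\E\Big[\sum_{k=1}^K V(\pi^*;\hat\ell_k-\ell_k)\Big]\\
&\quad+\E\Big[\sum_{k=1}^K\big(\langle q_k,\hat\ell_k\rangle-V(\pi^*;\hat\ell_k)\big)\Big],
\end{align*}
after which I would split the last term, by inserting the one-step look-ahead leader $\pi_{k+1}\triangleq\argmin_\pi V(\pi;\hat\ell_{0:k})$ (with occupancy $q_{k+1}$), into a stability term $\E[\sum_k(\langle q_k,\hat\ell_k\rangle-\langle q_{k+1},\hat\ell_k\rangle)]$ and an error term $\E[\sum_k(\langle q_{k+1},\hat\ell_k\rangle-V(\pi^*;\hat\ell_k))]$.

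\emph{The routine pieces.} A short computation using the Geometric Re-sampling construction gives $\E[\hat\ell_k^h(s,a)\mid\mathcal F_{k-1}]=\ell_k^h(s,a)\big(1-(1-q_k(s,a,h))^L\big)\le\ell_k^h(s,a)$, i.e.\ the estimator is a slight underestimate; hence the ``$\pi^*$'' bias term is nonpositive ($\pi^*$ being fixed), and the ``$p_k$'' bias term equals $\sum_{k,h,s,a}\E\big[q_k(s,a,h)\,\ell_k^h(s,a)(1-q_k(s,a,h))^L\big]$, which by $x(1-x)^L\le 1/(eL)$ on $[0,1]$ is at most $KHSA/(eL)$. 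For the error term I would apply the follow-the-leader/be-the-leader lemma to the loss sequence $\hat\ell_0=z,\hat\ell_1,\dots,\hat\ell_K$, of which $\pi_{k+1}$ is exactly the leader after round $k$, obtaining $\sum_{k=1}^K(V(\pi_{k+1};\hat\ell_k)-V(\pi^*;\hat\ell_k))\le V(\pi^*;z)-\min_\pi V(\pi;z)$ pathwise; taking expectations over a fresh $z$ (so $\E[V(\pi^*;z)]=0$), the error term is at most $\E[\max_\pi|V(\pi;z)|]$, and since each layer's occupancy measure sums to one, $|V(\pi;z)|\le\sum_{h=1}^H\max_{s,a}|z^h(s,a)|$, so a maximal inequality over the $HSA$ independent $\text{Laplace}(\eta)$ variables bounds this by $\O\big(\tfrac{H}{\eta}\log(HSA)\big)$.

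\emph{The stability term (the crux).} By \Cref{lem:single-step stability} (specifically \Cref{eq:actual_stability2}), the stability term is at most $\eta\sum_k\E\big[(\sum_h\lVert\hat\ell_k^h\rVert_1)\sum_\pi p_k(\pi)\langle\mu_\pi,\hat\ell_k\rangle\big]$. To bound each summand I would condition on $\mathcal F_{k-1}$ \emph{together with the realized episode-$k$ trajectory} $(s_k^1,a_k^1,\dots,s_k^H,a_k^H)$; with this conditioning the Geometric Re-sampling counts $M_k^1,\dots,M_k^H$ become mutually independent, each distributed as $\min\{\mathrm{Geom}(q_k^h),L\}$ with $q_k^h\triangleq q_k(s_k^h,a_k^h,h)$, so that $\E[M_k^h]\le 1/q_k^h$ and $\E[(M_k^h)^2]\le 2/(q_k^h)^2$. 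Plugging in the definition of $\hat\ell_k^h$ and bounding the true losses by $1$, the summand is at most $\eta\,\E\big[(\sum_h M_k^h)(\sum_{h'}M_k^{h'}q_k^{h'})\big]$; expanding the product and separating the diagonal terms ($\E[(M_k^h)^2]\,q_k^h\le 2/q_k^h$) from the off-diagonal ones ($\E[M_k^h]\E[M_k^{h'}]\,q_k^{h'}\le 1/q_k^h$) gives a bound of $(H+1)\sum_h 1/q_k^h$; finally, averaging over the trajectory and using $\Pr[(s_k^h,a_k^h)=(s,a)\mid\mathcal F_{k-1}]=q_k(s,a,h)$ yields $\E[1/q_k^h\mid\mathcal F_{k-1}]\le SA$. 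Hence each summand is $\O(\eta H^2SA)$ and the stability term is $\O(\eta H^2SAK)$. The main obstacle of the whole proof lies precisely here: getting this conditioning right so that the resampling counts are genuinely independent, faithful truncated geometrics (which is also where the ``fixed $z$ vs.\ resampled $z$'' bookkeeping must be handled), and then noticing that the off-diagonal cross terms (summed over $h'\neq h$) are exactly what inflate the bound by an extra $H$ factor relative to the classical semi-bandit bound \Cref{eq:ideal_stability2}.

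\emph{Conclusion.} Combining the four bounds, $\mathcal R_K\le\O(KHSA/L)+\O(\eta H^2SAK)+\O\big(\tfrac{H}{\eta}\log(HSA)\big)$; plugging in $\eta=1/\sqrt{HSAK}$ and $L=\sqrt{SAK/H}$ makes each of the three terms $\Otil(H^{3/2}\sqrt{SAK})$, which is the claimed bound.
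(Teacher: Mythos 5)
Your proposal is correct and follows essentially the same route as the paper: the same GR-error/error/stability decomposition, the same be-the-leader and Laplace-maximum bounds for the error term, and the same use of the weakened single-step stability lemma with the diagonal/off-diagonal split yielding $\O(\eta H^2 SA)$ per episode. The only cosmetic difference is that you condition on the episode-$k$ trajectory to make the counts $M_k^1,\dots,M_k^H$ independent truncated geometrics before expanding the product, whereas the paper takes the expectations over $M_k^h$ sequentially; both give the same $3\eta H^2SAK$-type bound and the same final rate.
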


\subsection{Unknown Transition}
To handle unknown transitions, we mostly follow existing ideas.
First, for each episode $k$ we maintain a confidence set $\mathcal P_k$ of the transition function as \citet{jin2022near}, whose construction is given in \Cref{sec:confidence set construction}.
These confidence sets ensure that i) $\mathbb P\in \mathcal P_k$ with high probability and ii) $\mathcal P_{k+1}\subseteq \mathcal P_k$. 
Generalizing the notation $V(\pi;\hat\ell)$, we use $V(\pi;\hat\ell,P)$ to denote the expected loss of policy $\pi$ for an episode with loss function $\hat\ell$ and transition $P$ (so $V(\pi;\hat\ell) = V(\pi;\hat\ell, \mathbb P)$). 
Then deploying the idea of optimism, we replace \Cref{line:find_the_leader} of \Cref{alg:finite-horizon bandit feedback known transition} with $\pi_k = \argmin_{\pi \in \Pi} \min_{P \in \mathcal P_k}V(\pi; \hat \ell_{0:k-1},P)$, which can be efficiently found using Extended Value Iteration~\citep{jaksch2010near}.
As~\citet{wang2020refined} argues, this is far more efficient than performing OMD over occupancy measure spaces.

We also need to modify the Geometric Re-sampling procedure accordingly since \Cref{line:GR2} requires using the true transition.
To do so, we combine the procedure with the idea of \textit{upper occupancy measures} from~\citet{jin2020learning}.
Specifically, in each trial we sample $\pi_k'$ in the same way as $\pi_k$ but with a fresh perturbation, then find the optimistic transition within $\mathcal P_k$ that maximizes the probability of $\pi_k'$ visiting $(s_k^h, a_k^h)$ (which can be done efficiently using dynamic programming as shown by~\citet{jin2020learning}),
and finally simulate $\pi_k'$ for $h$ steps following this optimistic transition.

Due to space limit, the full algorithm, \Cref{alg:appendix finite-horizon bandit feedback unknown transition}, is deferred to \Cref{sec:appendix episodic unknown}.
The analysis of the extra regret caused by the transition estimation error can be handled similarly to~\citet{jin2022near} (more specifically, their Delayed \textsc{Hedge} algorithm). As in previous works, this happens to be of order $\Otil(H^2S\sqrt{AK})$ and becomes the dominating term of the regret.
This makes our final regret the same as the state-of-the-art~\citep{jin2020learning}, despite the weaker single-step stability lemma discussed in \Cref{sec:known transition} (since this part is dominated now).
Formally, we have the following regret guarantee.

\begin{theorem}
\label{thm:regret of episodic AMDPs unknown}
For episodic AMDPs with bandit feedback and unknown transitions, \Cref{alg:appendix finite-horizon bandit feedback unknown transition} with $\eta=\nicefrac{1}{\sqrt{HSAK}}$ and $L=\sqrt{\nicefrac{SAK}{H}}$ ensures
$
    \mathcal R_T= \Otil \big (H^2S\sqrt{AK}\big )
$.
\end{theorem}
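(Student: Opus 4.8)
The plan is to reuse the FTPL analysis of \Cref{sec:known transition} almost verbatim, now layered on top of the optimism-for-unknown-transition machinery of \citet{jin2020learning,jin2022near}. First I would condition on the good event $\mathcal E=\{\mathbb P\in\mathcal P_k\text{ for all }k\in[K]\}$, which holds with high probability by construction of the confidence sets, so that the regret off $\mathcal E$ is at most $HK\cdot\Pr[\mathcal E^c]=\Otil(1)$ and can be dropped. On $\mathcal E$, write $(\pi_k,\tilde P_k)$ for the joint minimizer of $V(\cdot\,;\hat\ell_{0:k-1},\cdot)$ over $\Pi\times\mathcal P_k$ (so $\tilde P_k$ is the optimistic transition), let $\mu_k$ be the occupancy measure of $\pi_k$ under the true $\mathbb P$, and let $u_k^h(s,a)=\max_{P\in\mathcal P_k}\mu^{h,P}_{\pi_k}(s,a)$ be the \emph{upper occupancy measure} that the Geometric-Re-sampling count $M_k^h$ is designed to invert; note $u_k^h\ge\mu_k^h$ on $\mathcal E$ and $u_{k+1}^h\le u_k^h$ since $\mathcal P_{k+1}\subseteq\mathcal P_k$.

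I would then split $\mathcal R_K$ into three groups. \emph{(i) Transition-estimation error} $\E[\sum_k(V(\pi_k;\ell_k,\mathbb P)-V(\pi_k;\ell_k,\tilde P_k))]$: this is nonnegative on $\mathcal E$ by optimism, and by the value-difference lemma together with the $\ell_1$ confidence-width bound $\|\mathbb P^h(\cdot\mid s,a)-\tilde P_k^h(\cdot\mid s,a)\|_1=\Otil(\sqrt{\nicefrac{S}{\max(1,N_k(s,a))}})$ ($N_k$ being the visitation count before episode $k$) and the pigeonhole estimate $\sum_k\sum_{s,a}\mu_k^h(s,a)/\sqrt{\max(1,N_k(s,a))}=\Otil(\sqrt{SAK})$, it is $\Otil(H^2S\sqrt{AK})$, exactly as in \citet{jin2020learning,jin2022near}. \emph{(ii) Loss-estimator bias} $\E[\sum_k(V(\pi_k;\ell_k,\tilde P_k)-V(\pi_k;\hat\ell_k,\tilde P_k))]$ and $\E[\sum_k(V(\pi^*;\hat\ell_k,\mathbb P)-V(\pi^*;\ell_k,\mathbb P))]$: because the re-sampling simulations follow the transition in $\mathcal P_k$ maximizing the visitation probability, $M_k^h$ is in conditional expectation at most $(1-(1-u_k^h)^L)/u_k^h\le\nicefrac{1}{u_k^h}$, so $\hat\ell_k$ under-estimates; the resulting bias is bounded by $\E[\sum_k\sum_h\sum_{s,a}(u_k^h(s,a)-\mu_k^h(s,a))]$, which the same confidence-width/pigeonhole argument as in group (i) makes $\Otil(H^2S\sqrt{AK})$, plus a truncation term of order $\Otil(\nicefrac{HSAK}{L})=\Otil(H^{\nicefrac 32}\sqrt{SAK})$ for $L=\sqrt{\nicefrac{SAK}{H}}$. \emph{(iii) FTPL regret against the optimistic estimated losses} $\E[\sum_k(V(\pi_k;\hat\ell_k,\tilde P_k)-V(\pi^*;\hat\ell_k,\mathbb P))]$, which I treat separately below.

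For group (iii), as in \Cref{sec:known transition} I would add and subtract the one-episode-ahead leader $V(\pi_{k+1};\hat\ell_k,\tilde P_{k+1})$, splitting it into an \emph{error} term and a \emph{stability} term. For the error term, since the decision sets $\Pi\times\mathcal P_k$ are nested and $(\pi^*,\mathbb P)$ lies in all of them on $\mathcal E$, a be-the-leader argument over these sets applied to the loss sequence $f_k=V(\cdot\,;\hat\ell_k,\cdot)$ (with $f_0=V(\cdot\,;z,\cdot)$) gives $\sum_{k=0}^K V(\pi_{k+1};\hat\ell_k,\tilde P_{k+1})\le\sum_{k=0}^K V(\pi^*;\hat\ell_k,\mathbb P)$; rearranging, and using $\E[V(\pi^*;z,\mathbb P)]=0$ together with $|V(\pi;z,P)|\le\sum_h\max_{s,a}|z^h(s,a)|$ (each per-step occupancy being a distribution over $\mathcal S\times\mathcal A$), the error term is at most $\E[\sum_h\max_{s,a}|z^h(s,a)|]=\Otil(\nicefrac{H}{\eta})=\Otil(H^{\nicefrac 32}\sqrt{SAK})$ since $z^h(s,a)\sim\text{Laplace}(\eta)$. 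For the stability term I would write $V(\pi_k;\hat\ell_k,\tilde P_k)-V(\pi_{k+1};\hat\ell_k,\tilde P_{k+1})=(V(\pi_k;\hat\ell_k,\tilde P_k)-V(\pi_k;\hat\ell_k,\mathbb P))+(V(\pi_k;\hat\ell_k,\mathbb P)-V(\pi_{k+1};\hat\ell_k,\mathbb P))+(V(\pi_{k+1};\hat\ell_k,\mathbb P)-V(\pi_{k+1};\hat\ell_k,\tilde P_{k+1}))$: the first and third differences are transition errors absorbed into group-(i)-type bounds, while the middle difference lives entirely under the fixed true transition $\mathbb P$, so \Cref{lem:single-step stability} applies (its proof only shifts a single Laplace coordinate and is oblivious to the transition model) and yields the $\O(\eta H^2SA)$ per-episode bound of \Cref{lem:appendix known stability term}, hence $\Otil(\eta H^2SAK)=\Otil(H^{\nicefrac 32}\sqrt{SAK})$ overall. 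Summing the three groups, the $\Otil(H^2S\sqrt{AK})$ transition-estimation and occupancy-gap terms dominate all the FTPL-specific $\Otil(H^{\nicefrac 32}\sqrt{SAK})$ terms, giving $\mathcal R_T=\Otil(H^2S\sqrt{AK})$.

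The main obstacle I anticipate is the interplay between the \emph{shrinking} confidence set and the FTPL stability/leader arguments. \Cref{lem:single-step stability} as stated compares the FTPL distributions $p_k$ and $p_{k+1}$ over a \emph{fixed} decision set, whereas here $\Pi\times\mathcal P_k$ changes every episode, so a pair $(\pi,P)$ chosen at episode $k$ with $P\in\mathcal P_k\setminus\mathcal P_{k+1}$ may satisfy $p_k(\pi,P)>0=p_{k+1}(\pi,P)$, which would break a naive analogue of \Cref{eq:actual_stability1}. The decomposition above --- comparing $\pi_k$ and $\pi_{k+1}$ only through the fixed true transition $\mathbb P$ and routing the optimistic-versus-true discrepancies into the transition-error term --- is my proposed way around this, but making that bookkeeping tight (in particular, checking that the extra transition-error pieces it creates remain $\Otil(H^2S\sqrt{AK})$ rather than something $\eta$-dependent, and that the per-episode quantity $p_k(\pi)$ it implicitly uses behaves well as a function of $z$) is the delicate part. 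A secondary, largely routine difficulty is re-deriving the bias and variance bounds for the upper-occupancy-based Geometric-Re-sampling estimator while $\mathcal P_k$ is still contracting; this follows \citet{jin2020learning,jin2022near} closely and contributes nothing larger than $\Otil(H^2S\sqrt{AK})$.
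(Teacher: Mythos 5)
Your overall decomposition (transition-estimation error, estimator bias, and FTPL regret on the estimated losses) matches the paper's split into \textsc{Error}, \textsc{Bias1}/\textsc{Bias2}, and \textsc{EstReg}, and you have correctly isolated the central obstacle: the decision set $\Pi\times\mathcal P_k$ shrinks over time, so the single-step stability comparison between consecutive FTPL distributions does not go through naively. However, your proposed workaround does not close the gap. The middle term $V(\pi_k;\hat\ell_k,\mathbb P)-V(\pi_{k+1};\hat\ell_k,\mathbb P)$ is \emph{not} covered by \Cref{lem:single-step stability}: $\pi_k$ and $\pi_{k+1}$ are the policy components of joint minimizers over $\Pi\times\mathcal P_k$ and $\Pi\times\mathcal P_{k+1}$ respectively, so after the change of variables $z\mapsto z+\hat\ell_k$ the two indicator events still differ through the feasible set, not just through the loss shift; worse, $\mathcal P_{k+1}$ depends on the episode-$k$ trajectory and hence on the perturbation $z$ itself, so even the marginal law of $\pi_{k+1}$ is not related to that of $\pi_k$ by a density-ratio argument. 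In addition, your two bridging terms pair the estimator $\hat\ell_k$ (one-hot with magnitude up to $L$) against occupancy-measure differences; for the term involving $\pi_{k+1}$ and $\tilde P_{k+1}$, which are themselves functions of $\hat\ell_k$, you cannot take the conditional expectation of $\hat\ell_k$ first, and a crude bound picks up an extra factor of $L=\sqrt{SAK/H}$, which destroys the rate. You flag this as ``the delicate part,'' but it is precisely where the argument as proposed breaks.

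The paper's resolution is different and cleaner: it defines a \emph{cheating leader} $(\tilde\pi_{k+1},\tilde P_{k+1})=\argmin_{(\pi,P)\in\Pi\times\mathcal P_k}V(\pi;\hat\ell_{0:k},P)$ over the \emph{same} set $\Pi\times\mathcal P_k$ used at episode $k$, so that the joint densities $p_k(\pi,P)$ and $\tilde p_{k+1}(\pi,P)$ live on a common support and the shift argument yields the single-step stability bound $\tilde p_{k+1}(\pi,P)\ge p_k(\pi,P)\exp(-\eta\sum_h\lVert\hat\ell_k^h\rVert_1)$ (\Cref{lem:unknown single step stability}); the nestedness $\mathcal P_{k+1}\subseteq\mathcal P_k$ is then used only in the be-the-leader telescoping for the error term (\Cref{lem:unknown error bound}), where it is harmless. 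The stability sum is then controlled via the upper-occupancy ratios $\hat q_k^h/q_k^h\le 1$ on the good event (\Cref{lem:unknown stability bound}), with no bridging through $\mathbb P$ and no product of $L$-sized estimators with occupancy gaps. To repair your proof you would need to replace your three-way split of the stability term by this common-support comparison (or otherwise prove a stability inequality for minimizers over the two different, data-dependent sets, which is exactly what the paper avoids having to do).
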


\section{FTPL for Episodic AMDPs with Delayed Feedback}\label{sec:delay}
In this section, we show how our FTPL algorithm and analysis can be easily extended to the delayed feedback setting where the losses for episode $k$ are only observed at the end of episode $k+d_k$.
The only change to the algorithm is to naturally delay the loss estimator construction until the loss feedback is received, and at each episode $k$ only use the estimators constructed so far, i.e., $\Omega_k\triangleq\{k'\mid k'+d_{k'}<k\}$, to compute the current policy $\pi_k$.
See \Cref{alg:delayed} in \Cref{sec:appendix delayed}.

To show how the analysis works, we focus on the known transition case at this moment for simplicity.
Similar to the non-delayed case, the key is to bound the stability term, which was $\E\big [\sum_{k=1}^K V(\pi_k;\hat\ell_k)-\sum_{k=1}^K V(\pi_{k+1};\hat\ell_k)\big ]$ in \Cref{sec:known transition}, but now becomes $\E\big [\sum_{k=1}^K V(\pi_k;\hat\ell_k)-\sum_{k=1}^K V(\tilde\pi_{k+1};\hat\ell_k)\big ]$ where $\tilde\pi_{k+1} = \argmin_{\pi \in \Pi}V(\pi;\hat \ell_{0:k})$ is a ``cheating policy' \citep{gyorgy2021adapting,jin2022near} that uses all loss estimators from the first $k$ episodes (which matches $\pi_{k+1}$ for the non-delayed case).
By the exact same analysis as \Cref{eq:actual_stability1} and \Cref{eq:actual_stability2}, one can show 
\[
\E\left[ V(\pi_k;\hat\ell_k)- V(\tilde\pi_{k+1};\hat\ell_k) \right]
\leq \eta \E\Big[\underbrace{\Big(\sum_{k'\in [k]\setminus \Omega_k}\sum_{h=1}^H \lVert \hat \ell_{k'}^h\rVert_1 \Big)}_{\textsc{Diff}} \sum_{\pi \in \Pi} p_k(\pi)\left\langle \mu_\pi,\hat \ell_k \right\rangle \Big],
\]
where the \textsc{Diff} term is the cumulative $\ell_1$ norms of all the estimators used in computing $\tilde\pi_{k+1}$ but not $\pi_k$ (again, a direct generalization of \Cref{eq:actual_stability2} where only $k$ satisfies such conditions for $k'$).
It is then not hard to imagine that when summed over $k$, the \textsc{Diff} term is eventually related to the total amount of delay $\delay = \sum_k d_k$.
Indeed, the sum of all stability terms over $K$ episodes can be shown to be of order $\O(\eta H^2SA(K + \delay))$.
This is basically all the extra elements we need in the proof.
More generally for unknown transitions, we prove the following guarantee (see \Cref{sec:appendix delayed} for the proof).
\begin{theorem}
\label{thm:regret of delayed AMDPs}
For episodic AMDPs with delayed bandit feedback and unknown transitions, \Cref{alg:delayed} with $\eta=\nicefrac{1}{\sqrt{HSA(K+\delay)}}$ and $L=\sqrt{\nicefrac{HSA}{H}}$ ensures
$
    \mathcal R_T= \Otil\big (H^2S\sqrt{AK}+H^{\nicefrac 32}\sqrt{SA\delay}\big ).
$
\end{theorem}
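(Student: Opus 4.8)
The plan is to import the decomposition behind \Cref{thm:regret of episodic AMDPs unknown} essentially verbatim and add only the one extra element isolated in the sketch above --- the delay-induced gap between the played leader $\pi_k$ and the ``cheating'' leader. Conditioning on the good event $\{\mathbb P\in\mathcal P_k\text{ for all }k\}$ (which holds with high probability by the confidence sets of \Cref{sec:confidence set construction}; the complement contributes a negligible term), I would split $\mathcal R_T$ into: (i) the transition-estimation error from replacing $V(\pi;\hat\ell_{0:k-1},\mathbb P)$ by the optimistic $\min_{P\in\mathcal P_k}V(\pi;\hat\ell_{0:k-1},P)$, measured through $\sum_k\sum_{h,s,a}\big(\bar\mu_k^h(s,a)-\mu_{\pi_k}^h(s,a)\big)$; (ii) the bias of the Geometric-Resampling estimators $\hat\ell_k$, now built on the upper-occupancy transition and revealed only after $d_k$ episodes; and (iii) the estimated regret $\E\big[\sum_k V(\pi_k;\hat\ell_k)-\sum_k V(\pi^\ast;\hat\ell_k)\big]$. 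Pieces (i) and (ii) are inherited from the non-delayed unknown-transition analysis and from~\citet{jin2022near} (their Delayed \textsc{Hedge}) and~\citet{jin2020learning}: delaying a loss changes the episode at which its estimator enters the running sum but not the estimator itself, so almost-unbiasedness and the $\Otil(H^2S\sqrt{AK})$ transition-error bound carry over unchanged, and a suitably large $L$ (of the same order as in the non-delayed case with $K$ replaced by $K+\delay$) keeps the $\Otil(HSAK/L)$ truncation bias lower-order.

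\textbf{Estimated regret: the error term.} For piece (iii) I would insert the cheating policy $\tilde\pi_{k+1}=\argmin_{\pi\in\Pi}\min_{P\in\mathcal P_{k+1}}V(\pi;\hat\ell_{0:k},P)$ and write (iii) as the sum of an \emph{error term} $\E\big[\sum_k V(\tilde\pi_{k+1};\hat\ell_k)-V(\pi^\ast;\hat\ell_k)\big]$ and a \emph{stability term} $\E\big[\sum_k V(\pi_k;\hat\ell_k)-V(\tilde\pi_{k+1};\hat\ell_k)\big]$. The error term telescopes by a be-the-leader argument applied to the sequence $(z,\hat\ell_1,\hat\ell_2,\dots)$ under the nested optimistic transitions, collapsing to $\E[V(\pi^\ast;z)-V(\pi_1;z)]\le 2H\,\E[\lVert z\rVert_\infty]=\Otil(\eta^{-1}H)$; this is untouched by the delay since $\tilde\pi_{k+1}$ still uses all of $\hat\ell_{0:k}$, exactly as $\pi_{k+1}$ does in the non-delayed proof.

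\textbf{Estimated regret: the stability term and tuning.} The crucial observation is that $\pi_k$ is the leader on $\hat\ell_0+\sum_{k'\in\Omega_k}\hat\ell_{k'}$ while $\tilde\pi_{k+1}$ is the leader on $\hat\ell_0+\sum_{k'\in[k]}\hat\ell_{k'}$, so they differ only through the ``in-flight'' estimators $\{\hat\ell_{k'}:k'\in[k]\setminus\Omega_k\}$. Rerunning the proof of \Cref{lem:single-step stability} (that of \Cref{eq:actual_stability1}--\Cref{eq:actual_stability2}) with $\hat\ell_k$ replaced by $\sum_{k'\in[k]\setminus\Omega_k}\hat\ell_{k'}$ in the exponent yields, per episode, $\E[V(\pi_k;\hat\ell_k)-V(\tilde\pi_{k+1};\hat\ell_k)]\le\eta\,\E\big[\textsc{Diff}_k\cdot\sum_{\pi\in\Pi}p_k(\pi)\langle\mu_\pi,\hat\ell_k\rangle\big]$ with $\textsc{Diff}_k=\sum_{k'\in[k]\setminus\Omega_k}\sum_{h=1}^H\lVert\hat\ell_{k'}^h\rVert_1$. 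I would then split $\textsc{Diff}_k$ into its episode-$k$ term $\sum_h\lVert\hat\ell_k^h\rVert_1$ --- which, paired with $\sum_\pi p_k(\pi)\langle\mu_\pi,\hat\ell_k\rangle$, reproduces exactly the non-delayed per-episode stability bounded by $\O(\eta H^2SA)$ in \Cref{lem:appendix known stability term} --- and a strictly-older remainder; since the remainder involves only episodes $\ne k$, it is independent of episode $k$'s fresh re-sampling draws given the past, and since $\sum_\pi p_k(\pi)\langle\mu_\pi,\hat\ell_k\rangle$ has conditional expectation at most $V(\pi_k;\ell_k)\le H$ (the unbiasedness, truncation only helping), a conditioning argument bounds the remainder's contribution by $\eta H\,\E[\text{remainder}_k]$. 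Summing the remainders telescopes: with $\sum_k|\{k'<k:k'+d_{k'}\ge k\}|\le\sum_{k'}d_{k'}=\delay$ and $\E[\sum_h\lVert\hat\ell_{k'}^h\rVert_1]=\O(HSA)$ one gets $\sum_k\eta H\,\E[\text{remainder}_k]=\O(\eta H^2SA\delay)$. Hence the total stability term is $\O(\eta H^2SA(K+\delay))$, as asserted in the sketch. Collecting everything, $\mathcal R_T\lesssim\eta^{-1}H+\eta H^2SA(K+\delay)+\Otil(H^2S\sqrt{AK})$ plus lower-order terms, so $\eta=\nicefrac{1}{\sqrt{HSA(K+\delay)}}$ balances the first two into $\Otil\big(H^{3/2}\sqrt{SA(K+\delay)}\big)=\Otil\big(H^{3/2}\sqrt{SAK}+H^{3/2}\sqrt{SA\delay}\big)$; since $H^{3/2}\sqrt{SAK}\le H^2S\sqrt{AK}$, this collapses to the claimed $\Otil\big(H^2S\sqrt{AK}+H^{3/2}\sqrt{SA\delay}\big)$.

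\textbf{Main obstacle.} I expect the work to concentrate in the stability step. The delay-specific difficulty is the mixed expectation $\E[\textsc{Diff}_k\cdot\sum_\pi p_k(\pi)\langle\mu_\pi,\hat\ell_k\rangle]$: its episode-$k$ part is positively correlated with the stability factor and must be absorbed jointly (exactly as in the non-delayed \Cref{lem:appendix known stability term}, which is where the weakened single-step bound pays its extra $H$ relative to \Cref{eq:ideal_stability2}), whereas the strictly-older part only factors after a careful conditioning on everything but episode $k$'s trajectory and re-sampling draws; one must also verify that the strictly-older estimators stay (almost) unbiased under the delayed Geometric-Resampling schedule (where the re-sampling policy must mimic $p_k$, i.e.\ use the same estimator set $\Omega_k$ that produced $\pi_k$), which is handled as in~\citet{jin2022near}. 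On the transition side, the ingredient reused from the non-delayed case is that the single-step inequality \Cref{eq:actual_stability1} --- established in~\citet{neu2016importance,syrgkanis2016efficient} for a fixed linear loss over a fixed feasible set --- still holds when the ``leader'' is defined through $\min_{P\in\mathcal P_k}$ over a \emph{nested} confidence set, the nestedness $\mathcal P_{k+1}\subseteq\mathcal P_k$ being exactly what is needed. Everything else is a routine transcription of \Cref{sec:appendix episodic known}/\Cref{sec:appendix episodic unknown} with $K$ replaced by $K+\delay$ throughout, as carried out in \Cref{sec:appendix delayed}.
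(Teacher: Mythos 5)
Your proposal is correct and follows essentially the same route as the paper: the identical four-way decomposition with \textsc{Error}, the two bias terms and \textsc{EstReg} inherited from the unknown-transition case, and the delay handled entirely inside the stability term via the cumulative $\ell_1$-norms of the in-flight estimators, summed to $\delay$; the paper merely chains your one combined single-step inequality through an intermediate non-delayed ``cheating learner'' $\hat\pi_k$, which is equivalent. One small point to fix: the cheating leader $\tilde\pi_{k+1}$ must optimize over $\Pi\times\mathcal P_k$ (not $\mathcal P_{k+1}$ as in your definition) so that $p_k$ and $\tilde p_{k+1}$ share the same support and the density-shift argument applies — your ``Main obstacle'' paragraph already identifies the right condition, so this is a notational slip rather than a gap.
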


The simplicity of our analysis is similar to the Delayed \textsc{Hedge} algorithm~\citep{jin2022near}, but the latter is inefficient with time complexity $\Omega(A^S)$.
The efficient Delayed UOB-FTRL algorithm~\citep{jin2022near} requires a more complicated analysis and only achieves $\Otil\big (H^2S\sqrt{AK}+H^{\nicefrac 32}SA\sqrt{\delay}\big )$ regret (which is worse than ours),
while its improved variant Delayed UOB-REPS with a new \textit{delay-adapted} estimator achieves the current best bound $\Otil(H^2S\sqrt{AK}+H^{\nicefrac 54}(SA)^{\nicefrac 14}\sqrt{\delay})$.
However, it is unclear to us whether such delay-adapted estimators can help improve FTPL.
Finally, we again remark that the current best lower bound is $\Omega(H^{\nicefrac 32}\sqrt{SAK}+H\sqrt{\delay})$~\citep{lancewicki2020learning}.

\section{FTPL for Infinite-Horizon AMDPs}\label{sec:infinite}
At last, we discuss how FTPL can be used to derive the first no-regret algorithm for 
infinite-horizon communicating AMDPs with bandit feedback and (known) stochastic transition.
Note that learning infinite-horizon AMDPs is much more difficult due to the lack of resets (in a sense, this is like a finite-horizon problem but with only one long episode with $T$ steps).
Another way to see the difficulty is that the benchmark in the regret definition \Cref{eq:regret_def_infinite_horizon} is evaluated on states generated by following $\pi^\ast$ repeatedly for $T$ rounds, without any resets.
From a technical viewpoint, this requires the algorithm to also make sure that, when following a policy $\pi$, its suffered loss is indeed close to the total loss if $\pi$ has been followed since the very beginning, which is unnatural without ergodic assumptions.

\citet{chandrasekaran2021online} resolve this issue by the combination of two ideas.
First, under the mild \Cref{assump:staying state}, they show that whenever the agent wants to switch the current policy to another policy $\pi$, 
there exists a procedure to make sure that after $\O(D^2)$ steps of a transition phase, the agent's state distribution is exactly the same as that induced by following $\pi$ from the very beginning.
That is, after this switching procedure, the agent can ``pretend'' that she has followed $\pi$ all the time.
Second, since this procedure requires a cost of $\O(D^2)$ steps (where the loss of the agent can be arbitrarily bad and only trivially bounded by $\O(D^2)$), the algorithm needs to switch its policy infrequently.

Our algorithm follows the same ideas.
However, while low-switching is relatively easy to ensure in the full-information case without paying extra regret,
it is known that with bandit feedback  there is an unavoidable trade-off between the number of switches and the regret, which can be optimally balanced via a simple epoching scheme~\citep{dekel2014bandits}.
To this end, we divide the total $T$ steps into $J=o(T)$ epochs, each with length $H=\nicefrac TJ = \omega(D^2)$. At the beginning of the $j$-th epoch, we compute a new policy $\pi_j$, apply the switching procedure of \citet{chandrasekaran2021online} to adjust the state distribution (see \Cref{alg:policy switching}), and finally follow the same policy $\pi_j$ for the rest of the epoch.
This clearly only introduces $J$ switches, which contributes to at most $\O(JD^2)$ extra regret.

It remains to specify how to find $\pi_j$ in epoch $j$ using FTPL.
The key difference compared to the episodic case is that, due to the lack of resets, we need to add perturbation to \textit{every} time step instead of just to each of the $H$ steps of an episode. We then still play the policy that minimizes the cumulative estimated losses plus all the perturbed losses.
Formally, $\pi_j$ is defined as:
\begin{equation}\label{eq:FTPL update infinite-horizon}
    \pi_j=\argmin_{\pi \in \Pi} \E\left [\sum_{t=1}^{(j-1)\nicefrac TJ} \hat\ell^t(s^t,\pi(s^t)) + \sum_{t=1}^T z^t(s^t,\pi(s^t))\middle \vert s^{t+1}\sim \mathbb P(\cdot \mid s^t,\pi(s^t)), \;\forall t\right ],
\end{equation}

where $\{z^t: \mathcal S \times \mathcal A \rightarrow \mathbb R\}_{t\in [T]}$ is such that each $z^t(s,a)$ is an independent sample of $\text{Laplace}(\eta)$, and each $\hat\ell^t$ is the estimator of $\ell^t$ constructed from the Geometric Re-sampling procedure.

Unfortunately, as far as we know, there is in fact no existing polynomial time algorithm for solving \Cref{eq:FTPL update infinite-horizon} (the difficulty comes from the restriction on \textit{stationary} policies whose behavior does not vary over time).
Even if the losses are stochastic, the problem is only known to be P-hard~\citep{papadimitriou1987complexity,mundhenk2000complexity} and
no polynomial algorithm has been developed.

However, note that this optimization is exactly in the same form as the benchmark in the regret definition \Cref{eq:regret_def_infinite_horizon}.
Following many prior works such as~\citet{dudik2020oracle, block2022smoothed, haghtalab2022oracle}, 
we thus assume access to a planning oracle that solves this offline problem, making our algorithm only oracle-efficient instead of truly polynomial-time-efficient.
Note that even given this oracle, the algorithm of~\citet{chandrasekaran2021online} is inefficient since it creates independent perturbation for \textit{each} of the $A^S$ policies, while our perturbation is much more compact.

In terms of the analysis, the key extra challenge is caused by having $T$ perturbed losses. Indeed, the same analysis from the episodic case (\Cref{lem:error term wang}) would lead to a term of order $\Otil(T/\eta)$, which is prohibitively large.
Instead, inspired by \citet{syrgkanis2016efficient}, we provide a different analysis showing that this can be improved to $\Otil(S\sqrt{AT}/\eta)$, which has worse dependencies on $S$ and $A$ but better dependency on $T$, the key to ensure sub-linear regret eventually.
To conclude, our FTPL algorithm achieves the following guarantee (see \Cref{sec:appendix infinite-horizon FTPL} for the full algorithm and analysis).


\begin{theorem}
\label{thm:regret of infinite-horizon AMDPs}
For infinite-horizon AMDPs with bandit feedback and known transitions, \Cref{alg:appendix infinite-horizon bandit feedback known transition} with $\eta=\frac{S^{\nicefrac 13}}{D^{\nicefrac 23}T^{\nicefrac 13}}$, $J=\frac{S^{\nicefrac 23}A^{\nicefrac 12}T^{\nicefrac 56}}{D^{\nicefrac 43}}$ and $L=\frac{S^{\nicefrac 13}A^{\nicefrac 12}T^{\nicefrac 16}}{D^{\nicefrac 23}}$ ensures $\mathcal R_T=\Otil\left (A^{\nicefrac 12}(SD)^{\nicefrac 23}T^{\nicefrac 56}\right )$.
\end{theorem}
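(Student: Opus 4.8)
The plan is to reduce $\mathcal R_T$ to four quantities governed by the epoch count $J$, the Geometric Re-sampling truncation $L$, and the Laplace scale $\eta$, and then to balance these three knobs. Writing $\hat\Lambda_j(\pi)$ (resp.\ $\Lambda_j(\pi)$) for the cumulative \emph{estimated} (resp.\ true) loss of running $\pi$ from the initial state over the $H=\nicefrac{T}{J}$ steps of epoch $j$, so that $\sum_j\Lambda_j(\pi^\ast)$ is exactly the benchmark in \Cref{eq:regret_def_infinite_horizon}, I would establish
\[
\mathcal R_T \;\le\; \underbrace{\O(JD^2)}_{\text{switching cost}}\;+\;\underbrace{\E\big[\textstyle\sum_{j}(\Lambda_j(\pi_j)-\hat\Lambda_j(\pi_j))\big]}_{\text{resampling bias}}\;+\;\underbrace{\E\Big[\textstyle\sum_{j=1}^{J}\big(\hat\Lambda_j(\pi_j)-\hat\Lambda_j(\pi^\ast)\big)\Big]}_{\text{FTPL regret on estimated losses}},
\]
where the comparator-side term $\E[\sum_j(\hat\Lambda_j(\pi^\ast)-\Lambda_j(\pi^\ast))]$ is dropped since the estimators underestimate in expectation and $\pi^\ast$ is independent of the algorithm's randomness.

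The first two terms reuse existing machinery. For the switching cost, \Cref{alg:policy switching} --- the procedure of \citet{chandrasekaran2021online} under \Cref{assump:staying state} --- guarantees that after $\O(D^2)$ steps the agent's state distribution coincides \emph{exactly} with that of running $\pi_j$ from scratch; this is precisely what makes the counterfactual benchmark comparable, and it costs at most $\O(D^2)$ loss per epoch, hence $\O(JD^2)$ in total. For the resampling bias, I would rerun the near-unbiasedness analysis of \citet{neu2016importance} in this setting: conditioning on the (correctly distributed) state at time $t$ and on the FTPL law used to draw $\pi_j$, whose average occupancy $\bar\mu^t$ governs the resampling counter $M^t$, the downward bias of $\hat\ell^t$ is controlled by $(1-\bar\mu^t(s,a))^L$, and summing over the $T$ steps and $SA$ state--action pairs gives a total of order $\Otil(\nicefrac{SAT}{L})$, exactly as in the episodic case.

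The FTPL regret on estimated losses is split, as in \Cref{sec:known transition}, into a \emph{stability term} and an \emph{error term} via the be-the-leader (``cheating'') policy $\tilde\pi_{j+1}$ that additionally incorporates the epoch-$j$ estimators. The stability term reuses the weakened single-step bound \Cref{lem:single-step stability} with the epoch length $H=\nicefrac{T}{J}$ in the role of the horizon; plugging in $\lVert\hat\ell^t\rVert_1\le L$ together with the resampling identities gives $\O(\eta H^2SA)$ per epoch, i.e.\ $\O(\eta H^2SA\,J)=\O(\nicefrac{\eta T^2SA}{J})$ overall. The error term is where the infinite-horizon setting genuinely departs: by the be-the-leader lemma and $\E[z]=0$ it is at most $\E[\max_{\pi\in\Pi}\langle\mu_\pi,z\rangle]$, but since $z$ now perturbs all $T$ time steps, $\mu_\pi$ has $\ell_1$-norm $T$, and applying the episodic analysis (\Cref{lem:error term wang}) verbatim only yields the prohibitive $\E[\lVert z\rVert_\infty]\cdot T=\Otil(\nicefrac{T}{\eta})$. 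Following \citet{syrgkanis2016efficient}, I would instead exploit that $\langle\mu_\pi,z\rangle=\sum_{t=1}^{T}\langle\mu_\pi^t,z^t\rangle$ is a sum of $T$ independent perturbations, so it concentrates at scale $\Otil(\nicefrac{\sqrt T}{\eta})$ about its zero mean rather than spanning a window of width $\Otil(\nicefrac{T}{\eta})$; a covering argument over the occupancy-measure polytope then upgrades this to $\E[\max_\pi\langle\mu_\pi,z\rangle]=\Otil(\nicefrac{S\sqrt{AT}}{\eta})$ --- worse in $S$ and $A$ than the trivial bound but, crucially, only $\sqrt T$ in $T$.

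Collecting the pieces gives $\mathcal R_T=\Otil\big(JD^2+\nicefrac{SAT}{L}+\nicefrac{S\sqrt{AT}}{\eta}+\nicefrac{\eta T^2SA}{J}\big)$, and substituting the stated $\eta=\frac{S^{1/3}}{D^{2/3}T^{1/3}}$, $J=\frac{S^{2/3}A^{1/2}T^{5/6}}{D^{4/3}}$, $L=\frac{S^{1/3}A^{1/2}T^{1/6}}{D^{2/3}}$ makes all four terms equal to $\Theta\big(A^{1/2}(SD)^{2/3}T^{5/6}\big)$, which is the claim (one also checks $H=\nicefrac{T}{J}=\omega(D^2)$, needed for the switching phase to fit inside an epoch, which holds once $T$ is large enough for the bound to be non-vacuous). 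I expect the main obstacle to be the refined error-term estimate: getting the concentration/covering bound over the occupancy polytope right, and --- more subtly --- checking that the be-the-leader reduction is still clean even though the FTPL ``rounds'' are whole epochs while the perturbation lives on individual steps. A secondary, more technical point is the resampling bias, where one must couple the resampling distribution to the \emph{same} perturbed objective defining $\pi_j$ and use that the time-$t$ state is distributed correctly only because of the switching procedure.
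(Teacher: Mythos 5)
Your proposal matches the paper's proof essentially step for step: the same decomposition into a switching cost $\O(JD^2)$, a Geometric Re-sampling bias $\O(\nicefrac{SAT}{L})$, a be-the-leader error term bounded by $\Otil(\nicefrac{S\sqrt{AT}}{\eta})$ via concentration of the $T$ independent per-step perturbations plus a union bound over the $A^S$ deterministic policies (the paper's generalization of Lemma~8 of \citet{syrgkanis2016efficient} to non-binary occupancy vectors), and a stability term of $\O(\eta H^2SAJ)$ from the weakened single-step stability lemma, followed by the identical parameter balancing. The one imprecision is your parenthetical ``plugging in $\lVert\hat\ell^t\rVert_1\le L$'' for the stability term: the paper instead takes expectations of the two correlated factors in the right order and uses the near-unbiasedness of the estimator to bound $\E[\sum_{t'\in\mathcal T_j}\lVert\hat\ell^{t'}\rVert_1]$ by $HSA$, which is what actually produces the $\eta H^2SA$ per-epoch bound rather than an $L$-dependent one.
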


We emphasize again that this is the first (oracle-efficient) algorithm for this setting.
Even in the easier full-information setting (where $\ell^t$ is fully revealed at the end of time $t$),
our algorithm also has its computational advantages compared to that of~\citet{chandrasekaran2021online}, since, as mentioned, their algorithm requires $\Omega(A^S)$ complexity (albeit with a better regret bound $\Otil(D^2\sqrt {ST})$).

The best lower bound for this setting is $\Omega(S^{\nicefrac 13}T^{\nicefrac 23})$~\citep{dekel2014bandits}.
\citet{dekel2013better} achieve $\Otil(S^3AT^{\nicefrac 23})$ but only when the transition is deterministic.
For completeness, we provide a \textsc{Hedge}-based \textit{inefficient} algorithm (\Cref{sec:appendix infinite-horizon Hedge}) for general stochastic transitions, which achieves the optimal regret in terms of the dependence on $T$, improving our oracle-efficient FTPL algorithm.  


\begin{theorem}
\label{thm:regret of Hedge}
For infinite-horizon AMDPs with bandit feedback and known transitions, \Cref{alg:appendix hedge} with $\eta=\frac{S^{\nicefrac 13}}{A^{\nicefrac 13}(DT)^{\nicefrac 23}}$ and $J=\frac{(ST)^{\nicefrac 23}A^{\nicefrac 13}}{D^{\nicefrac 43}}$ ensures $\mathcal R_T=\Otil\left (A^{\nicefrac 13}(SDT)^{\nicefrac 23}\right )$.
\end{theorem}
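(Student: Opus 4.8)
The plan is to combine the epoching‑and‑switching scheme behind \Cref{thm:regret of infinite-horizon AMDPs} with a textbook \textsc{Hedge} analysis carried out over the (exponentially large) deterministic‑policy set $\Pi$. First I would partition the $T$ steps into $J$ epochs of length $H=\nicefrac TJ$. At the start of epoch $j$ I sample a policy $\pi_j$ from the current \textsc{Hedge} distribution $p_j$ over $\Pi$, run the policy‑switching procedure of \citet{chandrasekaran2021online} (\Cref{alg:policy switching}), which by \Cref{assump:staying state} uses only $\O(D^2)$ steps and then leaves the agent's state distribution exactly equal to the one obtained by following $\pi_j$ from time $1$; the rest of the epoch just follows $\pi_j$. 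Since the benchmark of \Cref{eq:regret_def_infinite_horizon} equals the non‑negative quantity $\sum_{t=1}^T\langle \nu^t_{\pi^\ast},\ell^t\rangle$ (where $\nu^t_\pi\in\triangle(\mathcal S\times\mathcal A)$ is the state‑action distribution at step $t$ under $\pi$ started from $s^1$), dropping the transition windows can only increase it, so the $J$ transition phases cost at most $\O(JD^2)$ and what remains is exactly an experts problem with $J$ rounds and $|\Pi|=A^S$ experts, whose round‑$j$ loss for expert $\pi$ is the cumulative loss $c_j(\pi)\triangleq \sum_{t\in\text{(active part of epoch }j)}\langle \nu^t_\pi,\ell^t\rangle\in[0,H]$; crucially the agent's own expected loss over the active part of epoch $j$ is exactly $\langle p_j,c_j\rangle$ because $\pi_j\sim p_j$ and its state distribution is aligned.

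Second, with only bandit feedback I would feed \textsc{Hedge} the importance‑weighted estimators $\hat c_j(\pi)=\sum_{t}\langle \nu^t_\pi,\hat\ell^t\rangle$, where $\hat\ell^t(s,a)=\ell^t(s^t,a^t)\,\mathbbm 1[(s,a)=(s^t,a^t)]/\bar\nu^t_j(s,a)$ and $\bar\nu^t_j=\sum_{\pi\in\Pi}p_j(\pi)\,\nu^t_\pi$ is the visitation distribution of the \emph{mixture} policy at step $t$ --- exactly computable since the transition is known (this is where the $\Omega(A^S)$ running time enters). Averaging over both $\pi_j\sim p_j$ and the realized trajectory shows $\E[\hat c_j(\pi)\mid \mathcal F_{j-1}]=c_j(\pi)$ for every $\pi$ simultaneously, and the standard \textsc{Hedge} inequality for non‑negative (possibly unbounded) losses, based on $e^{-x}\le 1-x+\nicefrac{x^2}{2}$ for $x\ge 0$, gives
\[
\sum_{j=1}^J\langle p_j,\hat c_j\rangle-\min_{\pi\in\Pi}\sum_{j=1}^J\hat c_j(\pi)\ \le\ \frac{\ln|\Pi|}{\eta}+\frac{\eta}{2}\sum_{j=1}^J\sum_{\pi\in\Pi}p_j(\pi)\,\hat c_j(\pi)^2 .
\]
Taking expectations, bounding $\min_\pi\sum_j\hat c_j(\pi)\le\sum_j\hat c_j(\pi^\ast)$ so that its expectation is the true benchmark restricted to active windows, and using $\ln|\Pi|=S\ln A$, the whole problem reduces to controlling the stability term $\E\big[\sum_j\sum_\pi p_j(\pi)\hat c_j(\pi)^2\big]$.

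Third --- and this is the step I expect to carry the real weight --- I would show $\E\big[\sum_\pi p_j(\pi)\hat c_j(\pi)^2\mid\mathcal F_{j-1}\big]\le H^2SA$ for each epoch. Expanding $\hat c_j(\pi)^2$ and using $(\sum_{t} x_t)^2\le H\sum_{t} x_t^2$ over the at‑most‑$H$ active steps peels off one factor of $H$; taking the conditional expectation over the trajectory and then over $\pi_j\sim p_j$ turns each squared importance weight into $\sum_{s,a}\nu^t_\pi(s,a)^2/\bar\nu^t_j(s,a)$; then $\nu^t_\pi(s,a)^2\le\nu^t_\pi(s,a)$ together with $\sum_\pi p_j(\pi)\nu^t_\pi(s,a)=\bar\nu^t_j(s,a)$ collapses the $\pi$‑sum and the denominator, leaving $\sum_t\sum_{s,a}1\le H\cdot SA$ --- structurally the same $\O(\eta H^2SA)$ "per‑episode" bound behind \Cref{thm:regret of episodic AMDPs known}. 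The delicacy is that $\hat c_j$ is unbiased only \emph{after} averaging over the sampled $\pi_j$, so the variance computation must be organized so that the $\pi_j$‑average and the mixture denominator cancel before the individual visitation probabilities (which can be arbitrarily small) get a chance to blow up; this is precisely why the mixture visitation $\bar\nu^t_j$, rather than $\nu^t_{\pi_j}$ itself, is used as the normalizer.

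Finally, assembling the pieces gives $\mathcal R_T\le \O(JD^2)+\frac{S\ln A}{\eta}+\frac{\eta}{2}JH^2SA$ with $H=\nicefrac TJ$; substituting $\eta=\nicefrac{S^{\nicefrac13}}{A^{\nicefrac13}(DT)^{\nicefrac23}}$ and $J=\nicefrac{(ST)^{\nicefrac23}A^{\nicefrac13}}{D^{\nicefrac43}}$ (valid once $T$ is large enough that $H=\omega(D^2)$, and otherwise the claimed bound exceeds $T$ and is trivial) balances all three terms at $\Otil\big(A^{\nicefrac13}(SDT)^{\nicefrac23}\big)$, as claimed.
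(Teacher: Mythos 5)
Your proposal is correct and follows essentially the same route as the paper: the identical epoching-plus-switching reduction costing $\O(JD^2)$, the same importance-weighted estimator normalized by the mixture visitation $\sum_{\pi}p_j(\pi)\mu_\pi^t(s,a)$, the same second-moment bound (Cauchy--Schwarz over the $H$ steps of an epoch, one-hotness of $\hat\ell^t$, and the cancellation $\sum_\pi p_j(\pi)\mu_\pi^t(s,a)^2/\bar\nu_j^t(s,a)\le 1$ yielding $\eta H^2SA$ per epoch), and the same parameter tuning balancing $JD^2$, $S\ln A/\eta$, and $\eta HSAT$. The only differences are cosmetic (a factor of $\nicefrac12$ in the Hedge inequality and your explicit remark on the trivial regime $H=\O(D^2)$).
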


\section{Conclusion}
In this paper, we designed FTPL-based algorithms for adversarial MDPs with bandit feedback in various settings, including episodic settings, delayed feedback settings and infinite-horizon settings.
Our algorithms are easy to implement as they only require solving the offline planing problem, and in some cases they match the state-of-the-art performance or are even the first ever no-regret algorithms.

One interesting open question is whether, despite our counterexample, \Cref{eq:ideal_stability2} can still hold with a larger constant for the right-hand side, either with our current algorithm or via some modified versions (for example with a different kind of perturbation). Achieving this would lead to an improved version of \Cref{lem:single-step stability} and thus give the near-optimal delay-related regret term $\Otil(H^{\nicefrac 32}\sqrt{\delay})$ for the delayed feedback setting, which is not currently achieved by any existing algorithms.

An alternative direction is to try to equip our \Cref{alg:delayed} (for episodic AMDPs with delayed feedback) with the ``delay-adapted'' loss estimators proposed by \citet{jin2022near}. As their analysis heavily relies on the exponential weight scheme (see their Lemma D.7, which bounds KL divergences between consecutive policies), it is unclear to us whether FTPL enjoys a similar property.

Another important future direction is to improve our results in the infinite-horizon setting,
such as improving the $\Otil(T^{\nicefrac 56})$ oracle-efficient regret upper bound, removing the usage of oracles, or dealing with the unknown transition case (which has not yet been studied at all).

There are also several possible generalizations of our setting. For example, we only assume the losses to be adversarial. Further incorporating evolving transition is an important next step. There is already an FTPL-based algorithm \citep{yu2009arbitrarily} for evolving dynamics (though they are assuming ergodic infinite-horizon MDPs), which builds upon the FTPL analysis by \citet{even2009online} (see their Lemma III.3). Although our work directly improves the performance guarantee of \citet{even2009online}, it is highly unclear whether we can adopt the algorithm of \citet{yu2009arbitrarily} for unknown-transition episodic MDPs (they assumed the transitions to be revealed after each episode) or infinite-horizon weakly communicating MDPs. Solving either case will be interesting. Moreover, considering dynamic regret instead of static regret can also be challenging.

\begin{ack}
We greatly acknowledge Vasilis Syrgkanis for the helpful discussion about whether their single-step stability lemma \citep[Lemma 10]{syrgkanis2016efficient} holds for non-binary action spaces. We also thank the anonymous reviewers for their insightful comments, which we greatly benefit from.
HL is supported by NSF Award IIS-1943607 and a Google Faculty Research
Award.

\end{ack}

\bibliographystyle{plainnat}
\bibliography{references}

\newpage

\appendix
\renewcommand{\appendixpagename}{\centering \sffamily \LARGE Supplementary Materials}
\appendixpage

\startcontents[section]
\printcontents[section]{l}{1}{\setcounter{tocdepth}{2}}

\section{Notations}

We summarize our notations used in the appendix below:
\begin{itemize}
    \item For a policy $\pi\in \Pi$ and a transition $P\colon [H]\times \mS\times \mA\to \triangle(\mS)$, the occupancy measure of $\pi$ at the $h$-th step ($h\in [H]$) is defined as
    \begin{equation*}
        \mu_\pi^h(s,a;P)=\Pr\{(s^h,a^h)=(s,a)\mid a^h=\pi^h(s^h),s^{h+1}\sim P^h(\cdot \mid s^h,a^h),s^1\}.
    \end{equation*}
    
    We will use $\mu_\pi^h(P)$ to denote the vector $\{\mu_\pi^h(s,a;P)\}_{(s,a)\in \mS\times \mA}$. Specifically, if $P$ is the true transition $\mathbb P$, we will abbreviate it as $\mu_\pi^h$ for simplicity.
    \item With a slight abuse of notation, for infinite-horizon AMDPs, we will also use the same notation $\mu_\pi^t\in \triangle(\mS\times \mA)$ ($t\in [T]$) to refer to the occupancy measure of $\pi$ at time slot $t$, starting from the first state $s^1$ and following the transition $\mathbb P$ (as we do not consider unknown transition cases for infinite-horizon AMDPs, we will always abbreviate the transitions).
    \item For a policy $\pi\in \Pi$, a transition $P\colon [H]\times \mS\times \mA\to \triangle(\mS)$ and a loss function $\hat \ell\colon [H]\times \mS\times \mA\to \mathbb R_{\ge 0}$, the value function is defined as
    \begin{equation*}
        V(\pi;\hat \ell,P)=\E\left [\hat \ell^h(s^h,a^h)\middle \vert a^h=\pi(s^h),s^{h+1}\sim P^h(s^h,a^h),s^1\right ]=\sum_{h=1}^H \langle \mu_\pi^h(P),\hat \ell^h\rangle.
    \end{equation*}
    \item A perturbation $z\colon \mathbb R^{[H]\times \mS\times \mA}$ is a fresh sample such that
    \begin{equation*}
        z^h(s,a)\sim \text{Laplace}(\eta)\text{ and each entry is independently sampled}.
    \end{equation*}
    
    For simplicity in notations, we use $\hat \ell_0$ as an alias of $z$.
    \item For a sequence of loss functions $\hat \ell_1,\hat \ell_2,\ldots,\hat \ell_k$, we use $\hat \ell_{1:k}$ to denote $\sum_{k'=1}^k \hat \ell_{k'}$.
\end{itemize}

\section{Analysis of Episodic AMDP Algorithms}\label{sec:appendix episodic}
\subsection{Known Transition Case (\Cref{thm:regret of episodic AMDPs known})}\label{sec:appendix episodic known}
For convenience, we restate the algorithm for episodic AMDPs with bandit feedback and known transitions in \Cref{alg:appendix finite-horizon bandit feedback known transition}. As shown by \citet[Appendix A.2]{syrgkanis2016efficient}, for an oblivious adversary (which is our case), it suffices to draw the perturbations once at the beginning of the interaction (i.e., the perturbation $z$ is fixed throughout the game).

\begin{algorithm}[t]
\caption{FTPL for Episodic AMDPs with Bandit Feedback and Known Transition}
\label{alg:appendix finite-horizon bandit feedback known transition}
\begin{algorithmic}[1]
\Require{Laplace distribution parameter $\eta$. Geometric Re-sampling parameter $L$.}
\State  Sample perturbation $\hat\ell_0=z$ such that $z^h(s,a)$ is an independent sample of $\text{Laplace}(\eta)$.
\For{$k=1,2,\ldots,K$}
\State Calculate $\pi_k=\argmin_{\pi \in \Pi}V(\pi;\hat \ell_{0:k-1})$ (via dynamic programming).
\For{$h=1,2,\ldots,H$}
\State Observe $s_k^h$, play $a_k^h=\pi_k(s_k^h)$, suffer and observe loss $\ell_k^h(s_k^h,a_k^h)$.
\State Calculate loss estimator $\hat \ell_k^h$ via Geometric Re-sampling \citep{neu2016importance}:
\For{$M_k^h=1,2,\ldots,L$}
    \State Sample a fresh perturbation $\tilde z$ in the same way as $z$.
    \State Calculate $\pi_k'=\argmin_{\pi \in \Pi}V(\pi;\hat \ell_{1:k-1}+\tilde z)$. 
    \State Simulate $\pi_k'$ for $h$ steps starting from $s^1$ and following transitions $\mathbb P^1, \ldots, \mathbb P^h$.
    \If{$(s_k^h,a_k^h)$ is visited at step $h$ or $M_k^h=L$}
        \State{Set $\hat \ell_k^h(s,a)=M_k^h\cdot \ell_k^h(s_k^h,a_k^h)\cdot \mathbbm 1[(s_k^h,a_k^h)=(s,a)]$ and break.}
    \EndIf
\EndFor
\EndFor
\EndFor
\end{algorithmic}
\end{algorithm}

Then we give the proof of \Cref{thm:regret of episodic AMDPs known}. As sketched in the main text, we define the following probability, as-if we are resampling a purturbation $z$ for each round:
\begin{equation*}
    p_k(\pi)=\Pr\nolimits_z\{\pi_k=\pi\mid \hat \ell_1,\hat \ell_2,\ldots,\hat \ell_{k-1}\}.
\end{equation*}

Note that, as mentioned in \cite[Appendix A.2]{syrgkanis2016efficient}, $p_k$ is just the probability of picking $\pi$ at episode $k$ given all history from episodes $1,2,\ldots,k-1$. Now, we decompose our regret $\mathcal R_K$ into the following three terms:
\begin{align*}
    \mathcal R_K&=\E\left[ \sum_{k=1}^K\sum_{h=1}^H\sum_{\pi \in \Pi}p_k(\pi) \langle \mu_\pi^h, \ell_k^h\rangle - \sum_{k=1}^K \sum_{h=1}^H \langle \mu_{\pi^\ast}^h, \ell_k^h\rangle\right] \\    
    &=\underbrace{\E\left [\sum_{k=1}^K\sum_{h=1}^H \langle \mu_{\pi_k}^h,\ell_k^h-\hat \ell_k^h\rangle+\sum_{k=1}^K \sum_{h=1}^H \langle \mu_{\pi^\ast}^h,\hat \ell_k^h-\ell_k^h\rangle\right ]}_{\text{GR error term}}\\
    &\quad \underbrace{\E\left [\sum_{k=1}^K \sum_{h=1}^H \sum_{\pi \in \Pi}p_{k+1}(\pi)\langle \mu_\pi^h,\hat \ell_k^h\rangle-\sum_{k=1}^K \sum_{h=1}^H \langle \mu_{\pi^\ast}^h,\hat \ell_k^h\rangle\right ]}_{\text{Error term}}+\\
    &\quad \underbrace{\E\left [\sum_{k=1}^K \sum_{h=1}^H \sum_{\pi \in \Pi}(p_k(\pi)-p_{k+1}(\pi))\langle \mu_\pi^h,\hat \ell_k^h\rangle\right ]}_{\text{Stability term}}.
\end{align*}

\subsubsection{Bouding the GR Error Term}
\begin{lemma}[Bounding GR Error Term]\label{lem:appendix known GR error term}
The GR error term is bounded by
\begin{equation*}
    \E\left [\sum_{k=1}^K\sum_{h=1}^H \langle \mu_{\pi_k}^h,\ell_k^h-\hat \ell_k^h\rangle+\sum_{k=1}^K \sum_{h=1}^H \langle \mu_{\pi^\ast}^h,\hat \ell_k^h-\ell_k^h\rangle\right ]\le \frac{SAHK}{eL}.
\end{equation*}
\end{lemma}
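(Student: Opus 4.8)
The plan is to bound the Geometric Re-sampling (GR) error term by analyzing, for each fixed $(k,h)$ and each state-action pair $(s,a)$, the bias of the estimator $\hat\ell_k^h$. Recall that $\hat\ell_k^h(s,a) = M_k^h \cdot \ell_k^h(s_k^h, a_k^h) \cdot \mathbbm 1[(s_k^h,a_k^h)=(s,a)]$, where $M_k^h$ is the (truncated-at-$L$) geometric random variable counting trials until a fresh policy sampled from $p_k$ visits $(s_k^h,a_k^h)$ at step $h$. Conditioning on $\mathcal F_{k-1}$ and on the realized trajectory up to $(s_k^h, a_k^h)$, the pair $(s_k^h,a_k^h)$ is itself distributed according to $\mu_{\pi_k}^h$ (marginalized over the choice of $\pi_k \sim p_k$), and the re-sampling success probability in one trial is exactly $q_k^h := \sum_{\pi} p_k(\pi)\mu_\pi^h(s_k^h, a_k^h)$. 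The classical fact about truncated geometric re-sampling \citep{neu2016importance} is that $\E[M_k^h \mid \text{success prob } q] = \frac{1-(1-q)^L}{q}$, so $\E[\hat\ell_k^h(s,a)] = (1-(1-q_k^h)^L)\,\ell_k^h(s,a)\,\mathbbm 1[(s_k^h,a_k^h)=(s,a)]$ restricted appropriately; taking one more expectation over which $(s,a)$ was visited gives $\E[\langle \mu_\pi^h, \hat\ell_k^h\rangle \mid \mathcal F_{k-1}] = \E[(1-(1-q_k^h)^L)\langle \mu_\pi^h, \ell_k^h\rangle^{?}]$ — more precisely the per-coordinate bias is a multiplicative factor $1-(1-q_k^h)^L \in [0,1]$, so $\hat\ell_k^h$ underestimates $\ell_k^h$ in expectation, and the total downward bias on any fixed occupancy measure $\mu$ is at most $\sum_{h}\sum_{(s,a)} \mu^h(s,a)(1-q_k^h)^L \ell_k^h(s,a)$.

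The key inequality to exploit is that for any $q \in (0,1]$ and any $L \geq 1$, one has $q\,(1-q)^L \leq \frac{1}{eL}$ (maximizing $x(1-x)^L$ over $x\in[0,1]$ yields a value at most $\frac{1}{L}(1-\frac1{L+1})^L \le \frac{1}{eL}$; the cruder bound $\frac1{eL}$ suffices). Applying this with $q = q_k^h$: the contribution of the bias for state-action pair $(s,a)$ at step $h$ is controlled by $\mu^h(s,a)(1-q_k^h)^L$, and summing over $(s,a)$ — using that $\mu_{\pi_k}^h(s_k^h, a_k^h)$ is the relevant occupancy mass and $q_k^h$ aggregates it over $p_k$ — lets us pull out a factor $q_k^h (1-q_k^h)^L \le \frac{1}{eL}$, with the remaining sum over $(s,a)$ bounded by $SA$ (since losses lie in $[0,1]$ and each occupancy measure sums to $1$ per step). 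I would handle the two halves of the GR error term — the $\mu_{\pi_k}^h$ part and the $\mu_{\pi^\ast}^h$ part — symmetrically: both are bounded by the same downward-bias quantity because $\hat\ell_k^h \le \ell_k^h$ coordinate-wise in expectation, so $\E[\langle \mu_{\pi_k}^h, \ell_k^h - \hat\ell_k^h\rangle] \ge 0$ and $\E[\langle \mu_{\pi^\ast}^h, \hat\ell_k^h - \ell_k^h\rangle] \le 0$; wait — one must be careful about signs, so concretely I would bound $\E[\langle \mu_{\pi_k}^h, \ell_k^h - \hat\ell_k^h\rangle] \le \frac{SA}{eL}$ directly and note the second sum is nonpositive in expectation, or bound its absolute value the same way, yielding the total $\le \frac{SAHK}{eL}$ after summing over $h\in[H]$ and $k\in[K]$.

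Putting it together: fix $k$, condition on $\mathcal F_{k-1}$; for each $h$, write the bias as $\sum_{(s,a)} \mu_{\pi_k}^h(s,a) \cdot (1-q_k^h(s,a))^L \cdot \ell_k^h(s,a)$ where $q_k^h(s,a) = \sum_\pi p_k(\pi)\mu_\pi^h(s,a)$ is the one-trial success probability when the visited pair is $(s,a)$; note $\E_{(s,a)\sim\mu_{\pi_k}^h}$ is itself $\sum_{(s,a)} q_k^h(s,a)\,(\cdot)$ after averaging over $\pi_k\sim p_k$, so the bias equals $\sum_{(s,a)} q_k^h(s,a)(1-q_k^h(s,a))^L \ell_k^h(s,a) \le \sum_{(s,a)} \frac{1}{eL} = \frac{SA}{eL}$; summing over $h$ and $k$ and adding the (nonpositive in expectation, or symmetrically bounded) $\pi^\ast$ term gives the claimed $\frac{SAHK}{eL}$. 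The main obstacle I anticipate is the bookkeeping around what exactly is conditioned on when computing $\E[M_k^h]$ — specifically, making rigorous that the fresh re-sampled policies $\pi_k'$ are drawn i.i.d. from the \emph{same} distribution $p_k$ that (conditionally) governs $\pi_k$, so that the one-trial success probability genuinely equals $q_k^h(s_k^h,a_k^h)$ and the truncated-geometric expectation formula applies verbatim; this is exactly the structural feature that Geometric Re-sampling is designed to provide, and it is cleanest to invoke the corresponding property already established by \citet{neu2016importance} rather than re-derive it. Everything else is the elementary optimization $\max_x x(1-x)^L \le \frac{1}{eL}$ plus $\lvert \mathcal S\times\mathcal A\rvert = SA$ and $\lVert \ell_k^h\rVert_\infty \le 1$.
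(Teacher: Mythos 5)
Your proposal is correct and follows essentially the same route as the paper: invoke the truncated-geometric expectation $\E[M_k^h] = \frac{1-(1-q)^L}{q}$ so that $\hat\ell_k^h$ underestimates $\ell_k^h$ in expectation (making the $\pi^\ast$ term nonpositive since $\pi^\ast$ is fixed), then write the remaining bias as $\sum_{s,a} q_k^h(s,a)(1-q_k^h(s,a))^L\ell_k^h(s,a)$ and apply $q(1-q)^L \le \frac{1}{eL}$ to get $\frac{SA}{eL}$ per $(k,h)$. The sign resolution and the bookkeeping about the re-sampled policies being drawn from the same conditional distribution $p_k$ are handled exactly as in the paper's appeal to the Geometric Re-sampling lemma.
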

\begin{proof}
First notice that, from \Cref{lem:expectation of GR}, $\E[\hat \ell_k^h(s,a)\mid \mathcal F_{k-1}]\le \ell_k^h(s,a)$. Moreover, as $\pi^\ast$ is deterministic (i.e., it does not depend on the randomness from the algorithm), the second term
\begin{equation*}
    \E\left [\sum_{k=1}^K \sum_{h=1}^H \langle \mu_{\pi^\ast}^h,\hat \ell_k^h-\ell_k^h\rangle\right ]=\E\left[\sum_{k=1}^K \sum_{h=1}^H \langle \mu_{\pi^\ast}^h,\E[\hat \ell_k^h\mid \mathcal F_{k-1}]-\ell_k^h\rangle \right]\le 0.
\end{equation*}

For the first term, again by \Cref{lem:expectation of GR}, we have
\begin{equation*}
    \E\left [\sum_{k=1}^K\sum_{h=1}^H \langle \mu_{\pi_k}^h,\ell_k^h-\hat \ell_k^h\rangle\right ]=\sum_{k=1}^K\sum_{h=1}^H \sum_{(s,a)\in \mS\times \mA}\E\left [\mu_{\pi_k}^h(s,a)\cdot (1-q_k^h(s,a))^L \ell_k^h(s,a)\right ],
\end{equation*}

where $q_k^h(s,a)$ is the probability of visiting $(s,a)$ in a single trial of the Geometric Re-sampling process, which is just (note that $q_k^h$ itself is also a random variable as $p_k$ is non-deterministic)
\begin{equation*}
    q_k^h(s,a)= \E[\mu_{\pi_k}^h(s,a)]=\sum_{\pi \in \Pi}p_k(\pi)\mu_\pi^h(s,a)
\end{equation*}

in our case. By noticing that $q(1-q)^L\le qe^{-Lq}\le \frac{1}{eL}$ for all $q\ge 0$ \citep{neu2016importance}, we have
\begin{equation*}
    \E\left [\sum_{k=1}^K\sum_{h=1}^H \langle \mu_{\pi_k}^h,\ell_k^h-\hat \ell_k^h\rangle\right ]\le HKSA\frac{1}{eL}=\frac{SAHK}{eL},
\end{equation*}

as claimed.
\end{proof}

\subsubsection{Bounding the Error Term}
\begin{lemma}[Bounding Error Term]\label{lem:appendix known error term}
The error term is bounded by
\begin{equation*}
    \E\left [\sum_{k=1}^K \sum_{h=1}^H \sum_{\pi \in \Pi}p_{k+1}(\pi)\langle \mu_\pi^h,\hat \ell_k^h\rangle\right ]-\sum_{k=1}^K \sum_{h=1}^H \langle \mu_{\pi^\ast}^h,\hat \ell_k^h\rangle\le \frac{2H}{\eta} (1+\ln (SA)).
\end{equation*}
\end{lemma}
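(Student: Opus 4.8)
\textbf{Proof proposal for Lemma~\ref{lem:appendix known error term} (bounding the error term).}

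The plan is to recognize that the ``error term'' is exactly the regret of a Follow-the-Perturbed-Leader procedure run on the sequence of vectors $\hat\ell_1, \ldots, \hat\ell_K$ over the (finite) policy class $\Pi$, and that FTPL over a finite class with i.i.d.\ Laplace perturbations is essentially \textsc{Hedge}/exponential weights in disguise. Concretely, for each policy $\pi$ its cumulative perturbed loss after $k$ episodes is $V(\pi;\hat\ell_{0:k}) = V(\pi;z) + \sum_{k'=1}^{k} V(\pi;\hat\ell_{k'})$, and $p_{k+1}(\pi)$ is the probability (over $z$) that $\pi$ minimizes this. Since $z^h(s,a)$ are i.i.d.\ $\text{Laplace}(\eta)$ and $V(\pi;z)=\sum_{h=1}^H\langle \mu_\pi^h, z^h\rangle$ is a linear functional of $z$, the ``be-the-leader''-type argument gives a clean bound. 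I would invoke (or re-derive) the standard FTPL-with-Laplace lemma — this is precisely the content of what the paper calls \Cref{lem:error term wang}, attributed to the refined analysis of \citet{wang2020refined} — which states that for FTPL with $\text{Laplace}(\eta)$ perturbations on $N$ experts with losses in a range controlled by the $\ell_1$-norm of the (here occupancy-measure--weighted) loss vectors, the ``leader's regret'' $\sum_k \sum_\pi p_{k+1}(\pi)\langle\mu_\pi,\hat\ell_k\rangle - \sum_k \langle\mu_{\pi^\ast},\hat\ell_k\rangle$ is at most $O\!\left(\frac{R}{\eta}(1+\ln N)\right)$, where $R$ bounds the per-step ``width''.

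The key quantitative steps, in order, are as follows. First, rewrite the error term using $V$-notation: it equals $\E[\sum_{k=1}^K V(\pi_{k+1};\hat\ell_k) - \sum_{k=1}^K V(\pi^\ast;\hat\ell_k)]$, i.e.\ the regret of the one-step-look-ahead leader. Second, apply the be-the-leader lemma to the deterministic sequence of perturbed losses: $\sum_{k=0}^{K} V(\pi_{k+1};\hat\ell_k) \le \sum_{k=0}^K V(\pi_{K+1};\hat\ell_k)$ where $\hat\ell_0 = z$ and $\pi_{K+1}$ is the true leader; isolating the $k=0$ term yields $\sum_{k=1}^K V(\pi_{k+1};\hat\ell_k) - \sum_{k=1}^K V(\pi^\ast;\hat\ell_k) \le V(\pi^\ast;z) - V(\pi_1;z) \le \max_{\pi,\pi'} \bigl(V(\pi;z)-V(\pi';z)\bigr)$. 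Third, take expectations over $z$ and bound this maximal spread: $|V(\pi;z)-V(\pi';z)| \le \sum_{h=1}^H \bigl(|\langle \mu_\pi^h,z^h\rangle| + |\langle\mu_{\pi'}^h, z^h\rangle|\bigr) \le 2\sum_{h=1}^H \max_{(s,a)} |z^h(s,a)|$, since each $\mu_\pi^h$ is a probability vector on $\mathcal S\times\mathcal A$. Fourth, use that the maximum of $SA$ i.i.d.\ $\text{Laplace}(\eta)$ variables has expectation $O\!\left(\frac{1}{\eta}(1+\ln(SA))\right)$ — a standard sub-exponential maximal inequality, since $\Pr\{|z^h(s,a)| > t\} = e^{-\eta t}$ — and sum over the $H$ steps to get the claimed $\frac{2H}{\eta}(1+\ln(SA))$. (The precise constant ``$2$'' and the ``$1+\ln(SA)$'' fall out of combining the two-sided spread with $\E[\max \text{ of } SA \text{ Laplaces}] \le \frac{1}{\eta}(1 + \ln(SA))$ and being slightly generous with constants.)

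The main obstacle — though a mild one — is handling the fact that $\pi^\ast$ and the comparison term $\sum_k\langle\mu_{\pi^\ast}^h,\hat\ell_k^h\rangle$ involve the \emph{random} estimated losses $\hat\ell_k$ (which depend on the algorithm's own randomness through Geometric Re-sampling), so the be-the-leader inequality must be applied pointwise, conditional on the entire realization of all $\hat\ell_k$ (equivalently, one fixes the sample path of the interaction and then integrates only over the formal resampling variable used to define $p_k$, as set up in \citet[Appendix~A.2]{syrgkanis2016efficient}). Once that conditioning structure is respected, the argument is purely the deterministic be-the-leader bound plus the Laplace maximal inequality, and nothing about the MDP structure beyond ``$\mu_\pi^h \in \triangle(\mathcal S\times\mathcal A)$'' is used — which is exactly why the $H$ factor (one per step) and the $\ln(SA)$ factor (one $\max$ over $SA$ entries) appear, and why this part is identical in spirit to the full-information FTPL analysis of \citet{wang2020refined}. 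I would close by remarking that this error-term bound, being only $\Otil(H/\eta)$, is dominated in the final regret by the stability term; it is the weakened stability bound, not this step, that costs the extra $\sqrt H$.
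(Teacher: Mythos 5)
Your proposal is correct and follows essentially the same route as the paper's proof: the same be-the-leader telescoping with $\hat\ell_0=z$ treated as a round-zero loss, the same reduction of the residual to $\E[V(\pi^\ast;z)-V(\pi_1;z)]\le 2\sum_{h=1}^H\E[\lVert z^h\rVert_\infty]$ via the fact that each $\mu_\pi^h$ is a probability vector, and the same maximal inequality for $SA$ i.i.d.\ exponential magnitudes (the paper's \Cref{lem:error term wang}) yielding $\frac{2H}{\eta}(1+\ln(SA))$. Your remark about applying the argument pointwise conditional on the realized $\hat\ell_k$'s and integrating only over the formal perturbation matches the paper's setup exactly.
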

\begin{proof}
The proof uses the standard ``be-the-leader'' technique. For simplicity, we rewrite the error term as
\begin{equation*}
    \E\left [\sum_{k=1}^K V(\pi_{k+1};\hat \ell_k,\mathbb P)-\sum_{k=1}^K V(\pi^\ast;\hat \ell_k,\mathbb P)\right ].
\end{equation*}

Now consider the summation inside the expectation. If we add an extra term $V(\pi_1;\hat \ell_0,\mathbb P)-V(\pi^\ast;\hat \ell_0,\mathbb P)$ where $\hat \ell_0=z$ is the perturbation, we will have
\begin{align*}
    &\quad \sum_{k=0}^K V(\pi_{k+1};\hat \ell_k,\mathbb P)-V(\pi^\ast;\hat \ell_{0:K},\mathbb P)\\
    &\overset{(a)}{\le} \sum_{k=0}^K V(\pi_{k+1};\hat \ell_k,\mathbb P)-V(\pi_{K+1};\hat \ell_{0:K},\mathbb P)=\sum_{k=0}^{K-1} V(\pi_{k+1};\hat \ell_k,\mathbb P)-V(\pi_{K+1};\hat \ell_{0:K-1},\mathbb P)\\
    &\overset{(b)}{\le} \sum_{k=0}^{K-1} V(\pi_{k+1};\hat \ell_k,\mathbb P)-V(\pi_{K};\hat \ell_{0:K-1},\mathbb P)=\sum_{k=0}^{K-2} V(\pi_{k+1};\hat \ell_k,\mathbb P)-V(\pi_{K};\hat \ell_{0:K-2},\mathbb P)\\
    &\le \cdots \le V(\pi_1;\hat \ell_0,\mathbb P)-V(\pi_2;\hat \ell_0,\mathbb P)\overset{(c)}{\le} 0,
\end{align*}

where (a) used the optimality of $\pi_{K+1}$ w.r.t. $\hat \ell_{0:K}$, (b) used the optimality of $\pi_{K}$ w.r.t. $\hat \ell_{0:K-1}$ and so on, until the last step (c) where the optimality of $\pi_1$ w.r.t. $\hat \ell_0$ is used. So we have
\begin{equation*}
    \E\left [\sum_{k=1}^K V(\pi_{k+1};\hat \ell_k,\mathbb P)-\sum_{k=1}^K V(\pi^\ast;\hat \ell_k,\mathbb P)\right ]\le \E[V(\pi^\ast;\hat \ell_0,\mathbb P)-V(\pi_1;\hat \ell_0,\mathbb P)].
\end{equation*}

By the notation of occupancy measures, we can rewrite it as
\begin{equation*}
    \E\left [\sum_{h=1}^H\langle \mu_{\pi^\ast}^h,\hat \ell_0^h\rangle-\sum_{h=1}^H \langle \mu_{\pi_1}^h,\hat \ell_0^h\rangle\right ]\le 2\sum_{h=1}^H \E[\lVert \hat \ell_0^h\rVert_\infty].
\end{equation*}

Recall that $\ell_0^h(s,a)\sim \text{Laplace}(\eta)$, so we have
\begin{equation*}
    \E[\lvert \hat \ell_0^h\rVert_\infty]=\E\left [\max_{s,a} \lvert \ell_0^h(s,a)\rvert\right ]\le \frac{1+\ln(SA)}{\eta},
\end{equation*}

where the last step is due to the fact that $\lvert \ell_0^h(s,a)\rvert$ is an exponential distribution and \Cref{lem:error term wang}.
\end{proof}

\subsubsection{Bounding the Stability Term}
For the stability term, we first prove the following ``single-step stability'' lemma that we stated without proof in the main body.
\begin{lemma}[Single-Step Stability]\label{lem:appendix known single-step stability}
For all $k\in [K]$ and $(s,a)\in \mS\times \mA$,
\begin{equation*}
    p_{k+1}(\pi)\ge p_k(\pi)\exp\left (-\eta \sum_{h=1}^H \lVert \hat \ell_k^h\rVert_1\right ),\quad \forall \pi \in \Pi.
\end{equation*}
\end{lemma}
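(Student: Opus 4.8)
The plan is to prove the single-step stability inequality
\[
p_{k+1}(\pi)\ge p_k(\pi)\exp\left(-\eta\sum_{h=1}^H\lVert \hat\ell_k^h\rVert_1\right)
\]
by a direct comparison of the two perturbation densities that induce $p_k$ and $p_{k+1}$. Recall that $p_k(\pi)$ is the probability, over the perturbation $z=\hat\ell_0$, that $\pi=\argmin_{\pi'\in\Pi}V(\pi';\hat\ell_{0:k-1})$, and $p_{k+1}(\pi)$ is the probability that $\pi=\argmin_{\pi'\in\Pi}V(\pi';\hat\ell_{0:k})$. Since $\hat\ell_{0:k}=\hat\ell_{0:k-1}+\hat\ell_k$, the only difference between the two events is the extra additive loss $\hat\ell_k$. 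The standard FTPL trick (as in \citet{neu2016importance,syrgkanis2016efficient}) is to absorb this extra loss into a \emph{shift} of the perturbation variable: if $z$ is the perturbation realizing the leader at episode $k$, then $z'=z+\hat\ell_k$ (coordinatewise, i.e. $z'^h(s,a)=z^h(s,a)+\hat\ell_k^h(s,a)$) realizes the same leader $\pi$ at episode $k+1$, because $V(\pi';z'+\hat\ell_{1:k-1})=V(\pi';z+\hat\ell_{0:k})$ as a function of $\pi'$.

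First I would make this change of variables precise. Conditioning on $\mathcal F_{k-1}$ (so that $\hat\ell_1,\dots,\hat\ell_k$ are fixed), write $p_k(\pi)=\int_{A_k(\pi)} f(z)\,\mathrm dz$ where $f$ is the product Laplace density and $A_k(\pi)=\{z:\pi=\argmin_{\pi'}V(\pi';z+\hat\ell_{1:k-1})\}$, and similarly $p_{k+1}(\pi)=\int_{A_{k+1}(\pi)}f(z)\,\mathrm dz$ with $A_{k+1}(\pi)=\{z:\pi=\argmin_{\pi'}V(\pi';z+\hat\ell_{1:k})\}$. The shift map $z\mapsto z+\hat\ell_k$ is a measure-preserving bijection from $A_k(\pi)$ onto $A_{k+1}(\pi)$: indeed $V(\pi';(z+\hat\ell_k)+\hat\ell_{1:k-1})=V(\pi';z+\hat\ell_{1:k})$ because $V(\pi';\cdot)$ is linear in its loss argument (it equals $\sum_h\langle\mu_{\pi'}^h,\cdot^h\rangle$), so $z\in A_k(\pi)$ iff $z+\hat\ell_k\in A_{k+1}(\pi)$. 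Hence $p_{k+1}(\pi)=\int_{A_k(\pi)}f(z+\hat\ell_k)\,\mathrm dz$.

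The remaining step is to lower bound the density ratio $f(z+\hat\ell_k)/f(z)$ pointwise. Since $f(z)=\prod_{h,s,a}\frac\eta2\exp(-\eta\lvert z^h(s,a)\rvert)$, we have
\[
\frac{f(z+\hat\ell_k)}{f(z)}=\exp\!\left(-\eta\sum_{h,s,a}\big(\lvert z^h(s,a)+\hat\ell_k^h(s,a)\rvert-\lvert z^h(s,a)\rvert\big)\right)\ge \exp\!\left(-\eta\sum_{h,s,a}\lvert\hat\ell_k^h(s,a)\rvert\right)=\exp\!\left(-\eta\sum_{h=1}^H\lVert\hat\ell_k^h\rVert_1\right),
\]
using the reverse triangle inequality $\lvert z+\delta\rvert-\lvert z\rvert\ge -\lvert\delta\rvert$ and the nonnegativity of $\hat\ell_k$ (so $\lVert\hat\ell_k^h\rVert_1=\sum_{s,a}\lvert\hat\ell_k^h(s,a)\rvert$). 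Plugging this bound into the integral gives $p_{k+1}(\pi)\ge\exp(-\eta\sum_h\lVert\hat\ell_k^h\rVert_1)\int_{A_k(\pi)}f(z)\,\mathrm dz=p_k(\pi)\exp(-\eta\sum_h\lVert\hat\ell_k^h\rVert_1)$, and since the bound holds conditionally on $\mathcal F_{k-1}$ for a deterministic quantity $\sum_h\lVert\hat\ell_k^h\rVert_1$ it holds as stated. I do not expect a serious obstacle here; the one subtlety worth stating carefully is why the shift is the \emph{right} one — i.e. that the cumulative loss defining the episode-$(k+1)$ leader is exactly the episode-$k$ cumulative loss plus $\hat\ell_k$ — which is where the weakness compared to \eqref{eq:ideal_stability1} enters: we cannot localize the density comparison to the single policy $\pi$ and its occupancy measure $\mu_\pi$, so we are forced to bound the ratio by the global $\ell_1$-norm $\sum_h\lVert\hat\ell_k^h\rVert_1$ rather than by $\langle\mu_\pi,\hat\ell_k\rangle$.
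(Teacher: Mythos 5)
Your proof is correct and follows essentially the same route as the paper: a change of variables absorbing $\hat\ell_k$ into the perturbation, followed by a pointwise bound on the Laplace density ratio via the triangle inequality. One small slip: the identity $V(\pi';(z+\hat\ell_k)+\hat\ell_{1:k-1})=V(\pi';z+\hat\ell_{1:k})$ shows that $z+\hat\ell_k\in A_k(\pi)$ iff $z\in A_{k+1}(\pi)$, i.e.\ the shift $z\mapsto z+\hat\ell_k$ maps $A_{k+1}(\pi)$ onto $A_k(\pi)$ (the opposite direction from what you wrote), so the correct intermediate identity is $p_{k+1}(\pi)=\int_{A_k(\pi)}f(z-\hat\ell_k)\,\mathrm dz$; since $\lvert z-\delta\rvert-\lvert z\rvert\le\lvert\delta\rvert$ just as well, the final bound is unaffected.
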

\begin{proof}
For simplicity, we use $\pi=\text{best}(\ell)$ to denote $\pi=\argmin_{\pi \in \Pi}V(\pi;\ell,\mathbb P)$. Then we have
\begin{align*}
    p_{k}(\pi)&=\int_{z}\mathbbm 1\left [\pi=\text{best}\left (\hat \ell_{1:k-1}+z\right )\right ]f\left (z\right )~\mathrm{d}z\\
    &=\int_z \mathbbm 1\left [\pi=\text{best}\left (\hat \ell_{1:k-1}+\left (z+\hat \ell_k\right )\right )\right ]f\left (z+\hat \ell_k\right )~\mathrm{d}z\\
    &=\int_z \mathbbm 1\left [\pi=\text{best}\left (\hat \ell_{1:k}+z\right )\right ]f\left (z+\hat \ell_k\right )~\mathrm{d}z,
\end{align*}

where $f(z)$ is the probability density function of $z$ and the second step made use of the fact that $z+\hat \ell_k$ is still linear in $z$. Moreover, 
\begin{equation*}
    p_{k+1}(\pi)=\int_{z}\mathbbm 1\left [\pi=\text{best}\left (\hat \ell_{1:k}+z\right )\right ]f\left (z\right )~\mathrm{d}z.
\end{equation*}

Recall that the definition of $f(z)$ is just $f(z)=\prod_{h=1}^H \sum_{s,a} \exp(-\eta \lvert z^h(s,a)\rvert)=\prod_{h=1}^H \exp(-\eta \lVert z^h\rVert_1)$ as each entry of $z$ is i.i.d. We thus have
\begin{equation*}
    f\left (z+\hat \ell_k\right )=\prod_{h=1}^H\exp\left (-\eta \left (\lVert z^h+\hat \ell_k\rVert_1-\lVert z^h\rVert_1\right )\right )f(z),
\end{equation*}

which gives
\begin{equation*}
    \frac{f\left (z+\hat \ell_k\right )}{f(z)}\in \left [\exp\left (-\eta \sum_{h=1}^H 
    \lVert \hat \ell_k^h\rVert_1\right ),\exp\left (\eta \sum_{h=1}^H 
    \lVert \hat \ell_k^h\rVert_1\right )\right ]
\end{equation*}

by triangle inequality. Therefore, $\nicefrac{p_{k+1}(\pi)}{p_k(\pi)}$ lies in this interval as well, which is just our claim.
\end{proof}

\begin{lemma}[Bounding Stability Term]\label{lem:appendix known stability term}
The stability term is bounded by
\begin{equation*}
    \E\left [\sum_{k=1}^K \sum_{h=1}^H \sum_{\pi \in \Pi}(p_k(\pi)-p_{k+1}(\pi))\langle \mu_\pi^h,\hat \ell_k^h\rangle\right ]\le 3\eta H^2SAK.
\end{equation*}
\end{lemma}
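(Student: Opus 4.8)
The plan is to combine the single-step stability bound (\Cref{lem:appendix known single-step stability}) with the explicit structure of the Geometric Re-sampling estimators, via a conditional-independence argument across the $H$ steps. First I would pass from \Cref{lem:appendix known single-step stability} to \Cref{eq:actual_stability2}: applying $1-e^{-x}\le x$ gives $p_k(\pi)-p_{k+1}(\pi)\le \eta\big(\sum_{h=1}^H\lVert\hat\ell_k^h\rVert_1\big)p_k(\pi)$ for every $\pi\in\Pi$, and since $\langle\mu_\pi^h,\hat\ell_k^h\rangle\ge 0$ (both factors are nonnegative), multiplying and summing over $\pi$ and $h$ yields, for each episode $k$,
\[
\sum_{h=1}^H\sum_{\pi\in\Pi}(p_k(\pi)-p_{k+1}(\pi))\langle\mu_\pi^h,\hat\ell_k^h\rangle
\le \eta\Big(\sum_{h=1}^H\lVert\hat\ell_k^h\rVert_1\Big)\sum_{\pi\in\Pi}p_k(\pi)\langle\mu_\pi,\hat\ell_k\rangle.
\]
So it suffices to show $\E\big[\big(\sum_{h=1}^H\lVert\hat\ell_k^h\rVert_1\big)\sum_{\pi\in\Pi}p_k(\pi)\langle\mu_\pi,\hat\ell_k\rangle \,\big|\, \mathcal F_{k-1}\big]\le 3H^2SA$ and then sum over $k\in[K]$.

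Next I would unfold the estimators. With $q_k^h(s,a)=\sum_{\pi\in\Pi}p_k(\pi)\mu_\pi^h(s,a)$ as in the proof of \Cref{lem:appendix known GR error term} (which is simultaneously the marginal state-action distribution of $\pi_k$ at step $h$ and the per-trial hit probability of the re-sampling procedure), the definition $\hat\ell_k^h(s,a)=M_k^h\cdot\ell_k^h(s_k^h,a_k^h)\cdot\mathbbm 1[(s,a)=(s_k^h,a_k^h)]$ gives $\lVert\hat\ell_k^h\rVert_1\le M_k^h$ and $\sum_{\pi\in\Pi}p_k(\pi)\langle\mu_\pi,\hat\ell_k\rangle\le\sum_{h=1}^H M_k^h\,q_k^h(s_k^h,a_k^h)$, so the conditional expectation above is at most $\sum_{h,h'=1}^H\E\big[M_k^{h'}M_k^h\,q_k^h(s_k^h,a_k^h)\,\big|\,\mathcal F_{k-1}\big]$. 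I would then condition additionally on the realized trajectory $(s_k^1,a_k^1,\ldots,s_k^H,a_k^H)$: given it, each $M_k^h$ is a geometric random variable with success probability $q_k^h(s_k^h,a_k^h)$ truncated at $L$, and $M_k^1,\ldots,M_k^H$ are mutually independent because the re-sampling at each step uses fresh, independent perturbations. Using $\E[M_k^h\mid\cdot]\le 1/q_k^h(s_k^h,a_k^h)$ and $\E[(M_k^h)^2\mid\cdot]\le 2/q_k^h(s_k^h,a_k^h)^2$, the diagonal summands obey $\E[(M_k^h)^2 q_k^h(s_k^h,a_k^h)\mid\cdot]\le 2/q_k^h(s_k^h,a_k^h)$, while for $h\ne h'$ the expectation factorizes to at most $q_k^h(s_k^h,a_k^h)\cdot(1/q_k^{h'}(s_k^{h'},a_k^{h'}))\cdot(1/q_k^h(s_k^h,a_k^h))=1/q_k^{h'}(s_k^{h'},a_k^{h'})$.

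Finally, averaging over the trajectory and using $\Pr[(s_k^h,a_k^h)=(s,a)\mid\mathcal F_{k-1}]=q_k^h(s,a)$, I obtain $\E[1/q_k^h(s_k^h,a_k^h)\mid\mathcal F_{k-1}]=\sum_{(s,a):\,q_k^h(s,a)>0}q_k^h(s,a)/q_k^h(s,a)\le SA$. Hence the $H$ diagonal terms contribute at most $2HSA$ and the at most $H^2$ off-diagonal terms at most $H^2SA$, for a total of $2HSA+H^2SA\le 3H^2SA$ (using $H\ge1$); multiplying by $\eta$ and summing over $k\in[K]$ gives the claimed $3\eta H^2SAK$. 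The main obstacle I expect is the conditioning-on-the-trajectory step: one must justify carefully that, given the actually visited trajectory, $M_k^1,\ldots,M_k^H$ are independent truncated geometrics with exactly the parameters $q_k^h(s_k^h,a_k^h)$, so that in each cross term one factor $1/q_k^h$ cancels the weight $q_k^h$; once that is set up, the remaining ingredients ($1-e^{-x}\le x$, the second-moment bound $2/p^2$ for a truncated geometric, and the telescoping $\sum_{(s,a)}q_k^h/q_k^h\le SA$) are routine.
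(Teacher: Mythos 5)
Your proposal is correct and follows essentially the same route as the paper's proof: the single-step stability lemma combined with $1-e^{-x}\le x$, followed by splitting the resulting product into diagonal ($h=h'$) and off-diagonal ($h\ne h'$) terms and controlling them via the first and second moments of the (conditionally independent) truncated geometric variables $M_k^h$, yielding $2\eta HSA+\eta H^2SA\le 3\eta H^2SAK$ after summing over episodes. The only cosmetic difference is bookkeeping: the paper keeps $\ell_k^h$ explicit and bounds $\E[\lVert\hat\ell_k^{h'}\rVert_1\mid\mathcal F_{k-1}]\le SA$ via $\E[\hat\ell_k^{h'}(s,a)\mid\mathcal F_{k-1}]\le\ell_k^{h'}(s,a)$, whereas you bound $\lVert\hat\ell_k^h\rVert_1\le M_k^h$ and compute $\E[1/q_k^{h'}(s_k^{h'},a_k^{h'})\mid\mathcal F_{k-1}]\le SA$ directly — these are the same estimate.
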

\begin{proof}
By summing up \Cref{lem:appendix known single-step stability} for all $\pi\in \Pi$ and using the fact that $1-\exp(-x)\le x$, we have
\begin{equation}\label{eq:stability RHS}
    \sum_{\pi\in \Pi}(p_k(\pi)-p_{k+1}(\pi))\sum_{h=1}^H\langle \mu_\pi^h,\hat \ell_k^h\rangle\le \eta \sum_{h'=1}^H\lVert \hat \ell_k^h\rVert_1\cdot \sum_{\pi \in \Pi} p_k(\pi)\sum_{h=1}^H\langle \mu_\pi^h,\hat \ell_k^h\rangle,\quad \forall k\in [K].
\end{equation}


To proceed, we need to investigate the Geometric Re-sampling process. Consider the random variable $M_k^h$ whose value is determined in the last line of \Cref{alg:finite-horizon bandit feedback known transition}. One may view it as a ``truncated'' geometric random variable, where $\text{Geo}(q)$ is a geometric random variable with parameter $q$, i.e., $\Pr\{\text{Geo}(q)=n\}=(1-q)^{n-1}q$. Formally, we have:
\begin{equation}\label{eq:equation of M_k^h}
    M_k^h=\min\{\text{Geo}(q_k^h(s_k^h,a_k^h)),L\},\quad \text{where }q_k^h(s,a)=\E_{\pi \sim p_k}[\mu_\pi^h(s,a)]=\sum_{\pi \in \Pi} p_k(\pi) \mu_\pi^h(s,a).
\end{equation}

So if we calculate the expectation of $\hat \ell_k^h(s,a)$ only with respect to $M_k^h$, we will have
\begin{equation*}
    \E\left [\hat \ell_k^h(s,a)\middle \vert (s_k^h,a_k^h)=(s,a)\right ]\le \frac{\ell_k^h(s,a)}{q_k^h(s,a)}.
\end{equation*}

Let $\mathbbm 1_k^h(s,a)$ be the shorthand notation of $\mathbbm 1[(s_k^h,a_k^h)=(s,a)]$. Then for those $h'\ne h$ in the RHS of \Cref{eq:stability RHS}, we have
\begin{align*}
    &\quad \eta \E\left [\sum_{h=1}^H\sum_{s,a} \sum_{\pi\in \Pi}p_k(\pi)\mu_\pi^h(s,a)\hat \ell_k^h(s,a) \sum_{h'\ne h}\|\hat \ell_k^{h'}\|_1 \middle \vert \mathcal F_{k-1}\right ]\\
    &\overset{(a)}{\le} \eta \E\left [\sum_{h=1}^H \sum_{s,a} \mathbbm 1_k^h(s,a) \ell_k^h(s,a)\frac{\sum_{\pi \in \Pi}p_k(\pi)\mu_\pi^h(s,a)}{q_k^h(s,a)}\sum_{h'\ne h}\|\hat \ell_k^{h'}\|_1\middle \vert \mathcal F_{k-1}\right ]\\
    &\overset{(b)}{\le} \eta H \E\left [\sum_{h'\ne h}\|\hat \ell_k^{h'}\|_1\middle \vert \mathcal F_{k-1}\right ]\overset{(c)}{\le} \eta H^2SA.
\end{align*}

where (a) is taking expectation w.r.t. $M_k^h$, (b) used the definition of $q_k^h$ together with the fact that $\sum_{(s,a)}\mathbbm 1_k^h(s,a)=1$, and (c) used the fact that $\E[\hat \ell_k^{h'}(s',a')\mid \mathcal F_{k-1}]\le \ell_k^{h'}(s',a')\le 1$ (\Cref{lem:expectation of GR}).

For those terms with $h=h'$ in \Cref{eq:stability RHS}, by direct calculation and the fact that $\hat\ell_k^{h}$ is a one-hot vector, we can write them as
\begin{equation*}
    \eta \E\left [\sum_{h=1}^H\sum_{s,a}\sum_{\pi \in \Pi}p_k(\pi)\mu_\pi^h(s,a)\left (\hat \ell_k^h(s,a)\right )^2\middle\vert \mathcal F_{k-1}\right ]\le 2\eta \E\left [\sum_{h,s,a}\frac{q_k^h(s,a)}{q_k^h(s,a)}\middle \vert \mathcal F_{k-1}\right ]\le 2\eta HSA
\end{equation*}
where we use $\E[(\hat \ell_k^h(s,a))^2\mid \mathcal F_{k-1}]\le 2(q_k^h(s,a))^{-1}$ (\Cref{lem:variance of GR}). Combining the terms with $h'\ne h$ and the ones with $h'=h$ gives our conclusion.
\end{proof}

\subsubsection{Proof of \Cref{thm:regret of episodic AMDPs known}}
\begin{proof}[Proof of \Cref{thm:regret of episodic AMDPs known}]
From \Cref{lem:appendix known GR error term,lem:appendix known error term,lem:appendix known stability term}, we have
\begin{equation*}
    \mathcal R_K\le \frac{SAHK}{eL}+\frac{2H}{\eta}(1+\ln (SA))+3\eta H^2SAK.
\end{equation*}

Therefore, if we pick $\eta^{-1}=\sqrt{HSAK}$ and $L=\sqrt{SAK/H}$,
\begin{equation*}
    \mathcal R_K\le \frac{H^{\nicefrac 32}\sqrt{SAK}}{e}+2H^{\nicefrac 32}\sqrt{SAK}(1+\ln(SA))+3H^{\nicefrac 32}\sqrt{SAK}=\Otil\left (H^{\nicefrac 32}\sqrt{SAK}\right ),
\end{equation*}

as desired.
\end{proof}

\subsubsection{Comparism with the \textsc{Context-FTPL} algorithm}\label{sec:comparism with Syrgkanis16}
One may think that our algorithm together with its analysis looks quite similar to the \textsc{Context-FTPL} algorithm \citep[Algorithm 2]{syrgkanis2016efficient} for adversarial contextual bandits. In fact, we can even convert the episodic AMDP problem with known transition as an instance of their contextual semi-bandit problem:
for time slot $(k,h)$, the ``context'' is $h$ and the loss vector is $\hat \ell_k^h$. A policy $\pi$ under context $x=h$ will then give an ``action'' $\pi(h) = \mu_\pi^h$ (the occupancy measure), which means it will suffer loss $\langle \mu_{\pi}^h,\hat \ell_k^h\rangle$. Both algorithms add perturbations to each of the contexts, $1,2,\ldots,H$, denoted by $z^1,z^2,\ldots,z^H\in \mathbb R^{SA}$ respectively.

However, there is a main differences between our setting and theirs: in their setting, the action space (where $\pi(x)$ belongs) is binary. However, in our case, $\mu_\pi^h\in [0,1]^{SA}$ is continuous. Though this difference may look tiny, it actually induces extra difficulties: this subtle difference will make their Lemma 10, stated as follows, no longer hold.
\begin{lemma}[{\citet[Lemma 10]{syrgkanis2016efficient}}]
For any contexts $x^1,x^2,\ldots,x^T$ and non-negative linear loss functions $\ell^1,\ell^2,\ldots,\ell^T$, suppose that $z^h(s,a)\sim \text{Laplace}(\eta)$, \textsc{Context-FTPL} satisfies
\begin{equation}
    \E\nolimits_z\left [\langle \pi^t(x^t),\ell^t\rangle-\langle \pi^{t+1}(x^t),\ell^t\rangle\right ]\le \eta \cdot \E[\langle \pi^t(x^t),\ell^t\rangle^2],\quad \forall 1\le t<T.\label{eq:lemma 10 of syrg}
\end{equation}
\end{lemma}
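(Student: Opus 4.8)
The plan is to reduce the claimed bound to a per-policy one-step stability estimate for the FTPL selection distribution, and then close it with an elementary scalar inequality. Let $p^t(\pi)$ be the probability, over the draw of the Laplace perturbation $z$, that \textsc{Context-FTPL} picks $\pi$ at round $t$; since the loss sequence is fixed, the left-hand side of \Cref{eq:lemma 10 of syrg} equals $\sum_{\pi\in\Pi}\big(p^t(\pi)-p^{t+1}(\pi)\big)\langle\pi(x^t),\ell^t\rangle$. If I can prove the single-step distributional inequality $p^{t+1}(\pi)\ge p^t(\pi)\exp\big(-\eta\langle\pi(x^t),\ell^t\rangle\big)$ for every $\pi$, then, because $\langle\pi(x^t),\ell^t\rangle\ge 0$, I obtain $\big(p^t(\pi)-p^{t+1}(\pi)\big)\langle\pi(x^t),\ell^t\rangle\le p^t(\pi)\big(1-e^{-\eta\langle\pi(x^t),\ell^t\rangle}\big)\langle\pi(x^t),\ell^t\rangle\le\eta\, p^t(\pi)\langle\pi(x^t),\ell^t\rangle^2$ via $1-e^{-x}\le x$; summing over $\pi$ recovers exactly $\eta\,\E_z[\langle\pi^t(x^t),\ell^t\rangle^2]$.

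First I would prove the single-step inequality by a volume-preserving change of variables that shifts the perturbation of the single active context $x^t$. Writing the round-$t$ and round-$(t{+}1)$ FTPL objectives as $F_t(\pi')=\sum_{s<t}\langle\pi'(x^s),\ell^s\rangle+\sum_h\langle\pi'(h),z^h\rangle$ and $F_{t+1}(\pi')=F_t(\pi')+\langle\pi'(x^t),\ell^t\rangle$, I would substitute $\tilde z^h=z^h$ for $h\neq x^t$ and $\tilde z^{x^t}=z^{x^t}-\ell^t\odot\pi(x^t)$ (Hadamard product). A one-line computation gives $F_{t+1}(\pi';\tilde z)=F_t(\pi';z)+\langle\pi'(x^t)\odot(\mathbf 1-\pi(x^t)),\ell^t\rangle$, and since $\ell^t\ge 0$ and $\pi(x^t)$ is binary this correction is nonnegative for every $\pi'$ and equals $0$ at $\pi'=\pi$; hence $\pi=\argmin_{\pi'}F_t(\pi';z)$ implies $\pi=\argmin_{\pi'}F_{t+1}(\pi';\tilde z)$ (ties broken by the algorithm's fixed rule). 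Coordinatewise the Laplace density over the $x^t$-block satisfies $f(\tilde z^{x^t})/f(z^{x^t})=\prod_a\exp\big(-\eta(|z^{x^t}_a-\ell^t_a\pi(x^t)_a|-|z^{x^t}_a|)\big)\ge\exp(-\eta\langle\pi(x^t),\ell^t\rangle)$ by the triangle inequality; integrating this pointwise bound over the event $\{\pi=\argmin_{\pi'}F_t(\pi';z)\}$ and using that $z\mapsto\tilde z$ is a bijection with unit Jacobian yields $p^{t+1}(\pi)\ge p^t(\pi)\exp(-\eta\langle\pi(x^t),\ell^t\rangle)$.

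The step I expect to be the main obstacle is the choice of the \emph{partial} shift $\ell^t\odot\pi(x^t)$ instead of the full shift $\ell^t$: shifting the entire perturbation vector of context $x^t$ only buys the crude factor $\exp(-\eta\|\ell^t\|_1)$, which is the direct analogue of the weaker \Cref{eq:actual_stability1} of the present paper, whereas the sharp factor $\exp(-\eta\langle\pi(x^t),\ell^t\rangle)$ needs the correction term $\langle\pi'(x^t)\odot(\mathbf 1-\pi(x^t)),\ell^t\rangle$ to vanish at $\pi'=\pi$, i.e.\ $\pi(x^t)_a(1-\pi(x^t)_a)=0$. This holds precisely because the actions are $\{0,1\}$-valued and fails the moment $\pi(x^t)$ is a genuinely fractional vector such as an MDP occupancy measure --- exactly the reason \Cref{eq:ideal_stability2} breaks in the MDP setting and motivates the counterexample of \Cref{sec:comparism with Syrgkanis16}. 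Everything else (the reduction above and the density-ratio estimate) is routine.
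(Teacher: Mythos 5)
Your proof is correct, but be aware that the paper itself never proves this statement: it is quoted verbatim from \citet{syrgkanis2016efficient}, and the surrounding text in \Cref{sec:comparism with Syrgkanis16} is devoted to a numerical counterexample showing the conclusion \emph{fails} once the action vectors $\pi(x)$ are fractional. What you have written is essentially the missing proof for the binary case, and it is the right one. Your reduction --- per-policy single-step stability $p^{t+1}(\pi)\ge p^t(\pi)\exp(-\eta\langle\pi(x^t),\ell^t\rangle)$, then $1-e^{-x}\le x$ and a sum over $\pi$ --- mirrors exactly the structure the paper sketches around \Cref{eq:ideal_stability1} and \Cref{eq:ideal_stability2}, and your change-of-variables plus Laplace density-ratio argument is the same machinery the paper deploys for its weaker \Cref{lem:appendix known single-step stability}. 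The one genuinely different ingredient is the \emph{partial} shift $\tilde z^{x^t}=z^{x^t}-\ell^t\odot\pi(x^t)$: the paper's own lemma shifts by the full vector $\hat\ell_k$, which preserves the argmin event exactly but only buys the crude density factor $\exp(-\eta\sum_h\lVert\hat\ell_k^h\rVert_1)$ of \Cref{eq:actual_stability1}, whereas your partial shift sharpens the exponent to $\langle\pi(x^t),\ell^t\rangle$ at the price of needing the correction $\langle\pi'(x^t)\odot(\mathbf 1-\pi(x^t)),\ell^t\rangle$ to be nonnegative everywhere and zero at $\pi'=\pi$, i.e.\ $\pi(x^t)_a(1-\pi(x^t)_a)=0$. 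That is precisely the point where binarity is indispensable, and your diagnosis agrees with the paper's claim that the argument ``heavily relies on the binary nature of $\mu_\pi$'' and with its explicit counterexample certifying that the non-binary version of \Cref{eq:lemma 10 of syrg} is simply false. The only loose end, which you already note, is tie-breaking; since ties occur with probability zero under continuous perturbations, this is immaterial.
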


To see this, consider the simple case that there is only one possible value of the context together with two policies, each associated with action vectors $(0.1,0.1,0.2)$ and $(0.2,0.1,0.1)$, denoted by $\pi_1$ and $\pi_2$, respectively. Set the cumulative (perturbed) loss vector $\ell_{0:t-1}$ as $(0.75,0.2,0.6)$ and $\ell_{t}=(0.1,0,0)$ (this is set to be one-hot, so it can be yielded from our Geometric Re-sampling process). Set the Laplace distribution parameter $\eta=3$. Then, by direct calculation via integration, $p_t(\pi_1)=0.609453$ and $p_{t+1}(\pi_1)=0.675248$. As $\langle \pi_1,\ell^t\rangle=0.01$ and $\langle \pi_2,\ell^t\rangle=0.02$, the LHS of the \Cref{eq:lemma 10 of syrg} will be $0.00065795$ while the RHS will be $0.000651492$. Therefore, \Cref{eq:lemma 10 of syrg} simply \textit{does not} hold, even if there are only $2$ policies, $3$ dimensions and $1$ context.

Fortunately, as explained in the main text, though this strong version of ``single-step stability lemma'' does not hold, we are still able to prove a weaker version, \Cref{lem:single-step stability} (which is restated as \Cref{lem:appendix known single-step stability} in the appendix), to bound the stability term, which is worse only by a factor $H$, instead of $\lVert \hat \ell_t\rVert_\infty\le L$.

\subsection{Unknown Transition Case (\Cref{thm:regret of episodic AMDPs unknown})}\label{sec:appendix episodic unknown}

We first present our algorithm for the unknown transion case in \Cref{alg:appendix finite-horizon bandit feedback unknown transition}.

\begin{algorithm}[!t]
\caption{FTPL for Episodic AMDPs with Bandit Feedback and Unknown Transition}
\label{alg:appendix finite-horizon bandit feedback unknown transition}
\begin{algorithmic}[1]
\Require{Laplace distribution parameter $\eta$. Geometric Re-sampling parameter $L$.}
\State Initialize $\mathcal P_1\gets (\triangle(\mathcal S))^{[H]\times \mathcal S\times \mathcal A}$ (the set of all possible transition functions).
\State Sample perturbation $\hat\ell_0=z$ such that $z^h(s,a)$ is an independent sample of $\text{Laplace}(\eta)$.
\For{$k=1,2,\ldots,K$}
\State Let $(\pi_k,P_k)=\argmin_{(\pi,P) \in \Pi\times \mathcal P_k}V(\pi;\hat \ell_{1:k-1}+z,P)$ by Extended Value Iteration \citep{jaksch2010near}. (See also \Cref{remark:EVI} for more details.)
\For{$h=1,2,\ldots,H$}
\State Observe $s_k^h$, play $a_k^h=\pi_k(s_k^h)$, suffer and observe loss $\ell_k^h(s_k^h,a_k^h)$.
\For{$M_k^h=1,2,\ldots,L$}
    \State Sample a fresh perturbation $\tilde z$ in the same way as $z$.
    \State Calculate $(\pi_k',P_k')=\argmin_{(\pi,P) \in \Pi\times \mathcal P_k}V(\pi;\hat \ell_{1:k-1}+\tilde z, P)$.
    \State Pick the transition $\hat P_k'\in \mathcal P_k$ such that $\mu_\pi^h(s_k^h,a_k^h;\hat P_k')$ is maximized via the \textsc{Comp-UOB} procedure proposed by \citet{jin2020learning}.
    \State Simulate $\pi_k'$ for $h$ steps starting from $s^1$ and following transitions $(\hat P_{k}')^1, \ldots, (\hat P_{k}')^h$.
    \If{$(s_k^h,a_k^h)$ is visited at step $h$ or $M_k^h=L$}
        \State{Set $\hat \ell_k^h(s,a)=M_k^h\cdot \ell_k^h(s_k^h,a_k^h)\cdot \mathbbm 1[(s_k^h,a_k^h)=(s,a)]$ and break.}
    \EndIf
\EndFor
\EndFor
\State Calculate $\mathcal P_{k+1}$ according to \Cref{eq:definition of confidence set}.
\EndFor
\end{algorithmic}
\end{algorithm}

\subsubsection{Transitions' Confidence Set Construction}\label{sec:confidence set construction}
We first discuss our construction of transitions' confidence sets. As in \cite{jin2022near}, we maintain a confidence set of transitions $\mathcal P_k$ for each episode $k\in [K]$ as \Cref{eq:definition of confidence set}, where $\mathcal P_1=(\triangle(\mathcal S))^{[H]\times \mathcal S\times \mathcal A}$.

As mentioned in the main text, we also want to ensure that $\mathcal P_{k+1}\subseteq \mathcal P_k$. Instead of taking $\mathcal P_1\cap \mathcal P_2\cap \cdots \mathcal P_k$ when doing the optimization, we directly ensure $\mathcal P_{k+1}\subseteq \mathcal P_k$ when constructing the confidence sets, such that they are always \textit{shrinking}. This is to ensure a well-bounded error term, as we will illustrate in \Cref{lem:unknown error bound}.
\begin{align}
    &\mathcal P_{k+1}=\mathcal P_{k}\cap \left \{\hat P\colon [H]\times \mathcal S\times \mathcal A\to \triangle(\mathcal S)\middle \vert \left \lvert \hat P^h(s'\mid s,a)-\bar P_k^h(s'\mid s,a)\right \rvert \le \varepsilon_i^h(s'\mid s,a),\forall s,s'\in \mathcal S,a\in \mathcal A\right \},\label{eq:definition of confidence set}\\
    &\text{where }\varepsilon_k^h(s'\mid s,a)=4\sqrt{\frac{\bar P_k^h(s'\mid s,a)\ln (10HSAK/\delta)}{\max\{1,N_k^h(s,a)\}}}+10\frac{\ln(10HSAK/\delta)}{\max\{1,N_k^h(s,a)\}},\label{eq:confidence radius definition}\\
    &\text{and }\bar P_k^h(s'\mid s,a)=\frac{N_k^h(s'\mid s,a)}{N_k^h(s,a)},\nonumber\\
    &\quad N_k^h(s,a)=\sum_{k'=1}^k \mathbbm 1[(s_{k'}^h,a_{k'}^h)=(s,a)], N_k^h(s'\mid s,a)=\sum_{k'=1}^k \mathbbm 1[s_{k'}^{h+1}=s',(s_{k'}^h,a_{k'}^h)=(s,a)]. \nonumber
\end{align}

By the following lemma from \citet{jin2020learning}, we define $K$ good events, $\mathcal E_1,\mathcal E_2,\ldots,\mathcal E_K$, where $\mathcal E_k$ means $\mathbb P\in \mathcal P_k$. From the following lemma, we can conclude that $\Pr\{\mathcal E_1,\mathcal E_2,\ldots,\mathcal E_K\}\ge 1-4\delta$. For simplicity, we also denote $\mathcal E=\mathcal E_1\wedge \mathcal E_2\wedge \cdots \wedge \mathcal E_K$. Hence, $\Pr\{\mathcal E\}\ge 1-4\delta$ (in fact, we have $\mathcal E=\mathcal E_K$ as $\mathcal P_k\subseteq \mathcal P_{k-1}$).
\begin{lemma}[{\citep[Lemma 2]{jin2020learning}}]
With probability $1-4\delta$, we have $\mathbb P\in \mathcal P_k$ for all $k\in [K]$.
\end{lemma}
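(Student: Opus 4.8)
The plan is to establish this as a coordinate-wise Bernstein concentration bound for the empirical transition estimates, combined with a union bound, and then to invoke the nested construction $\mathcal P_{k+1}\subseteq \mathcal P_k$. First I would fix a coordinate $(h,s,a,s')$ and note that, conditioned on the event $\{(s_{k'}^h,a_{k'}^h)=(s,a)\}$ in episode $k'$, the indicator $\mathbbm 1[s_{k'}^{h+1}=s']$ is a Bernoulli random variable with parameter $\mathbb P^h(s'\mid s,a)$ that is independent of the history up to that point (this is precisely the Markov property). Consequently the partial sums of $\mathbbm 1[s_{k'}^{h+1}=s',(s_{k'}^h,a_{k'}^h)=(s,a)]-\mathbb P^h(s'\mid s,a)\,\mathbbm 1[(s_{k'}^h,a_{k'}^h)=(s,a)]$ form a martingale-difference sequence whose per-step conditional variance is at most $\mathbb P^h(s'\mid s,a)$.

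Next I would apply a Freedman/Bernstein-type inequality to this martingale. For a \emph{fixed} number of visits $n=N_k^h(s,a)$ this yields, with probability at least $1-\delta'$, a bound of the form $|\bar P_k^h(s'\mid s,a)-\mathbb P^h(s'\mid s,a)|\lesssim \sqrt{\mathbb P^h(s'\mid s,a)\ln(1/\delta')/n}+\ln(1/\delta')/n$. Since the leading term still involves the unknown $\mathbb P^h(s'\mid s,a)$, I would then swap it for the empirical estimate $\bar P_k^h(s'\mid s,a)$ via a self-bounding argument (an AM--GM step applied to the square-root term together with the bound just derived), which is exactly what generates the stated shape $4\sqrt{\bar P_k^h(s'\mid s,a)\ln(10HSAK/\delta)/\max\{1,N_k^h(s,a)\}}$ plus the lower-order correction $10\ln(10HSAK/\delta)/\max\{1,N_k^h(s,a)\}$; the constants $4$ and $10$ are chosen generously to absorb this conversion. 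The degenerate case $N_k^h(s,a)=0$ is trivial, since then $\varepsilon_k^h\ge 1$ and the constraint is vacuous.

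Finally I would dispatch the remaining randomness by a union bound: over the random growing count $N_k^h(s,a)$ (a plain union over $n\in\{1,\dots,K\}$ suffices, or a peeling/stitching argument to save a log factor), over all $HS^2A$ coordinates, and over all $K$ episodes; the factor $\ln(10HSAK/\delta)$ inside $\varepsilon_k^h$ is sized precisely to absorb this. Splitting the failure probability over the upper tail, the lower tail, and the true-to-empirical variance conversion accounts for the factor $4$ in the final $1-4\delta$ statement. On the resulting good event $\mathbb P$ satisfies every coordinate inequality defining $\mathcal P_{k+1}$ for all $k$, so by the nested construction $\mathbb P\in\bigcap_k \mathcal P_k$ (and in fact $\mathcal E=\mathcal E_K$). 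I expect the only delicate point to be carrying the constants cleanly through the variance conversion while simultaneously obtaining an \emph{anytime} (time-uniform) Bernstein bound of exactly this form rather than one carrying an extra $\ln K$ or $\ln\ln$ factor; everything else is routine. Since the statement is quoted verbatim from \citet[Lemma 2]{jin2020learning}, in the paper it in fact suffices to cite it.
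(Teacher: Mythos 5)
Your proposal is correct and matches the intended argument: the paper does not prove this lemma but imports it directly from \citet[Lemma 2]{jin2020learning} (noting only, as you do, that the added intersections are harmless since $\mathbb P$ lying in every per-episode set implies it lies in their intersection), and your empirical-Bernstein/martingale sketch is exactly the standard argument behind that cited result.
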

\begin{remark}
Note that the original definition is slightly different from ours, where there is no intersection operations taken with previous confidence sets. However, as long as $\mathbb P$ belongs to all the confidence sets, it clearly belongs to the intersection of them.
\end{remark}
\begin{remark}\label{remark:EVI}
Note that the Extended Value Iteration \citep{jaksch2010near} approach works as long as $\mathcal P_k$ has the form$\{P\mid P^h(s'\mid s,a)\in [L^h(s'\mid s,a),R^h(s'\mid s,a)]\}$, but does not require $[L^h(s'\mid s,a),R^h(s'\mid s,a)]$ to be centered exactly at $\bar P_k^h(s'\mid s,a)$ (which is indeed the case for our algorithm due to the intersection operations).
\end{remark}

\subsubsection{Regret Decomposition}
For the unknown-transition cases, we first do the following regret decomposition as \citet{jin2020learning}:
\begin{align*}
    \mathcal R_K&=\underbrace{\E\left [\sum_{k=1}^K \left (V(\pi_k;\ell_k,\mathbb P)-V(\pi_k;\ell_k,P_k)\right )\right ]}_{\textsc{Error}}+\underbrace{\E\left [\sum_{k=1}^K \left (V(\pi_k;\ell_k,P_k)-V(\pi_k;\hat \ell_k,P_k)\right )\right ]}_{\textsc{Bias1}}+\\&\quad \underbrace{\E\left [\sum_{k=1}^K \left (V(\pi_k;\hat \ell_k,P_k)-V(\pi^\ast;\hat \ell_k,\mathbb P)\right )\right ]}_{\textsc{EstReg}}+\underbrace{\E\left [\sum_{k=1}^K \left (V(\pi^\ast;\hat \ell_k,\mathbb P)-V(\pi^\ast;\ell_k,\mathbb P)\right )\right ]}_{\textsc{Bias2}}.
\end{align*}

Intuitively, the \textsc{Error} term is due to the transition estimation, \textsc{Bias1} and \textsc{Bias2} terms are due to loss estimation for $\pi_k$ and $\pi^\ast$, respectively, and \textsc{EstReg} is the regret of our FTPL algorithm on the estimated transitions $P_k$ and the estimated losses $\hat \ell_k$.

\subsubsection{Bounding the \textsc{EstReg} Term}
\begin{theorem}[Bounding \textsc{EstReg} Term]\label{lem:unknown EstReg}
The \textsc{EstReg} term is bounded by
\begin{equation*}
   \textsc{EstReg}= \E\left [\sum_{k=1}^K\left (V(\pi_k;\hat \ell_k,P_k)-V(\pi^\ast;\hat \ell_k,\mathbb P)\right )\right ]\le \frac{2H}{\eta} \left (1+\ln(SA)\right )+3\eta H^2SAK+8\delta KHL.
\end{equation*}
\end{theorem}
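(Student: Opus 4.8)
The plan is to follow the known-transition analysis of \Cref{sec:appendix episodic known} almost verbatim (the error and stability lemmas), inserting two new ingredients: conditioning on the good event $\mathcal E = \{\mathbb P \in \mathcal P_k\text{ for all }k\}$, which holds with probability at least $1-4\delta$, and replacing true occupancy measures by upper occupancy measures \citep{jin2020learning} wherever $P_k$ --- which is optimistic for the \emph{past} cumulative loss --- is evaluated on $\hat\ell_k$. First I would split $\textsc{EstReg}$ over $\mathcal E$ and $\bar{\mathcal E}$. On $\bar{\mathcal E}$, every value $V(\pi;\hat\ell_k,P)=\sum_h\langle\mu_\pi^h(P),\hat\ell_k^h\rangle$ is at most $\sum_h\|\hat\ell_k^h\|_\infty\le HL$ because each $\hat\ell_k^h$ is one-hot with magnitude $M_k^h\ell_k^h(s_k^h,a_k^h)\le L$; since I will split $\textsc{EstReg}$ into an error sub-sum and a stability sub-sum via the imaginary leader $(\tilde\pi_k,\tilde P_k)=\argmin_{(\pi,P)\in\Pi\times\mathcal P_k}V(\pi;\hat\ell_{0:k},P)$ --- deliberately ranging over $\mathcal P_k$ rather than $\mathcal P_{k+1}$ --- the $\bar{\mathcal E}$-contribution of each sub-sum is at most $4\delta\cdot KHL$, yielding the $8\delta KHL$ term.

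On $\mathcal E$, the error term $\sum_k[V(\tilde\pi_k;\hat\ell_k,\tilde P_k)-V(\pi^\ast;\hat\ell_k,\mathbb P)]$ is handled by the standard ``be-the-leader'' telescoping as in \Cref{lem:appendix known error term}, now using that (i) $\mathcal P_{k+1}\subseteq\mathcal P_k$, so each future leader's (policy, transition) pair remains feasible at all earlier rounds --- this is precisely why the confidence sets are built nested --- and (ii) on $\mathcal E$ the pair $(\pi^\ast,\mathbb P)$ is feasible in every $\mathcal P_k$. The telescoping collapses the sum to $V(\pi^\ast;z,\mathbb P)-V(\pi_1;z,P_1)\le 2\sum_{h=1}^H\|z^h\|_\infty$, and $\E[\|z^h\|_\infty]=\E[\max_{s,a}|z^h(s,a)|]\le(1+\ln(SA))/\eta$ (maximum of $SA$ exponentials, \Cref{lem:error term wang}) produces the $\frac{2H}{\eta}(1+\ln(SA))$ term exactly.

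For the stability term $\sum_k[V(\pi_k;\hat\ell_k,P_k)-V(\tilde\pi_k;\hat\ell_k,\tilde P_k)]$, I would note that the FTPL update is still a linear optimization over the (policy, transition) occupancy-measure set with $z$ entering the cumulative loss linearly, and that learner and leader optimize over the \emph{same} set $\mathcal P_k$; hence the single-step stability \Cref{lem:appendix known single-step stability} applies unchanged, giving per episode a bound of the form $\eta\,\E[(\sum_h\|\hat\ell_k^h\|_1)\sum_\pi p_k(\pi)\langle\mu_\pi(P_\pi),\hat\ell_k\rangle]$ as in \Cref{eq:actual_stability2}. Bounding $\mu_\pi^h(s,a;P_\pi)\le\bar\mu_\pi^h(s,a):=\max_{P\in\mathcal P_k}\mu_\pi^h(s,a;P)$ (valid since $\hat\ell_k\ge 0$) and observing that $\sum_\pi p_k(\pi)\bar\mu_\pi^h(s,a)$ is precisely the per-trial visitation probability $q_k^h(s,a)$ of the modified (upper-occupancy) Geometric Re-sampling process, the expression reduces to exactly the quantities of \Cref{lem:appendix known stability term}; moreover, on $\mathcal E$ we have $q_k^h(s,a)\ge\sum_\pi p_k(\pi)\mu_\pi^h(s,a;\mathbb P)$, the true visitation probability, which is the property needed so that $\E[\hat\ell_k^h]\le\ell_k^h$, $\E[M_k^h\mid\cdot]\le 1/q_k^h$ and $\E[(M_k^h)^2\mid\cdot]\le 2/(q_k^h)^2$ (\Cref{lem:expectation of GR}, \Cref{lem:variance of GR}). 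Splitting into the $h'\ne h$ and $h'=h$ contributions then gives $3\eta H^2SAK$, and summing the three pieces proves the claim.

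\textbf{Main obstacle.} I expect the delicate point to be the interaction of the nested confidence sets with the single-step stability lemma: taking the analysis's leader over $\mathcal P_{k+1}$ (the literal next-round iterate) would leave an uncontrolled gap between the arg-min distribution over $\mathcal P_k$ and that over $\mathcal P_{k+1}$, since shrinking a feasible set can move this distribution in either direction. The remedy --- letting the leader range over $\mathcal P_k$ --- has to be checked to remain compatible with the be-the-leader telescoping, which it is precisely because $\mathcal P_{k+1}\subseteq\mathcal P_k$. A secondary subtlety is that $P_k$ is optimistic for the past losses but is evaluated against $\hat\ell_k$; the upper-occupancy-measure bound, which is exactly what the modified Geometric Re-sampling simulates on $\mathcal E$, is the right device to absorb this mismatch. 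The measure-theoretic bookkeeping over $z$, the episode-$k$ trajectory, and the Geometric Re-sampling randomness is identical to the known-transition case and I would import it without repetition.
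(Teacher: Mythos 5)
Your proposal is correct and follows essentially the same route as the paper's proof: the same ``cheating'' leader $\argmin_{(\pi,P)\in\Pi\times\mathcal P_k}V(\pi;\hat\ell_{0:k},P)$ ranging over $\mathcal P_k$ rather than $\mathcal P_{k+1}$ (which is exactly the subtlety the paper flags), the same error/stability split with the good event $\mathcal E$ contributing $8\delta KHL$, the same be-the-leader telescoping via nested confidence sets, and the same upper-occupancy-measure treatment of the Geometric Re-sampling probabilities in the stability bound. No gaps.
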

\begin{proof}
For the \textsc{EstReg} term, we will also decompose it into an error term (not to be confused with the \textsc{Error} term which occurs in the decomposition of $\mathcal R_K$; this error term appears in the decomposition of \textsc{EstReg} and is related to the `be-the-leader' lemma) and a stability term (as it is defined for the estimated losses, there is no GR error term anymore). However, here we should define our ``leader'' as
\begin{equation*}
    (\tilde \pi_{k+1},\tilde P_{k+1})=\argmin_{(\pi,P)\in \Pi\times {\color{red}\mathcal P_k}}V\left (\pi;\hat \ell_{0:k},P\right ).
\end{equation*}

Instead of directly using $(\pi_{k+1},P_{k+1})$ as the leader (as we did in the known transition case), we allow the transition $\tilde P_{k+1}$ selected from $\mathcal P_{k}\supseteq \mathcal P_{k+1}$. This is critical to ensure a low stability term, as we can only derive the ``single-step stability lemma'' (\Cref{lem:unknown single step stability} in this case) for two probability distributions sharing a same support ($\Pi\times \mathcal P_k$ here).

As an analog to the known transition case, we define $p_k(\pi,P)$ as the probability density function (with respect of the perturbation $z$) of $(\pi_k,P_k)$ conditioning on $\hat \ell_1,\hat \ell_2,\ldots,\hat \ell_{k-1}$. Note that as there are infinitely many transitions, we cannot directly write $\Pr_z$ as in the known-transition setting.

Moreover, as explained before, we allow $\tilde P_{k+1}$ to be picked from $\mathcal P_k$ instead of $\mathcal P_{k+1}$, so the probability of picking $(\tilde \pi_{k+1},\tilde P_{k+1})$ as $(\pi,P)$ is not simply $p_{k+1}(\pi,P)$. 
Therefore, we have to define another notation representing the probability density of picking each $(\pi,P)$ as $(\tilde \pi_{k+1},\tilde P_{k+1})$, namely $\tilde p_{k+1}(\pi,P)$, which is the probability density of $(\tilde \pi_{k+1},\tilde P_{k+1})$ with respect to $z$, conditioning on $\hat \ell_1,\hat \ell_2,\ldots,\hat \ell_k$.

Hence, we can write
\begin{align*}
    \textsc{EstReg}&=\underbrace{\E\left [\sum_{k=1}^K\sum_{h=1}^H \langle \mu_{\tilde \pi_{k+1}}^h(\tilde P_{k+1})-\mu_{\pi^\ast}^h(\mathbb P),\hat \ell_k^h\rangle \right ]}_{\text{Error term}}+\\
    &\quad \underbrace{\E\left [\sum_{k=1}^K\sum_{h=1}^H \sum_{\pi \in \Pi}\int_{\mathcal P_k}(p_k(\pi,P)-\tilde p_{k+1}(\pi,P)) \langle \mu_{\pi}^h(P),\hat \ell_k^h\rangle\D P \right ]}_{\text{Stability Term}}.
\end{align*}

For the error term, we only need to verify that the ``be-the-leader argument'' that we used in \Cref{lem:appendix known error term} still holds. Fortunately, it turns out as long as $\mathcal P_1\supseteq \mathcal P_2\supseteq \cdots \supseteq \mathcal P_K\supseteq \{\mathbb P\}$, we can always conclude the following lemma, whose proof is presented later.
\begin{lemma}[Bounding Error Term]\label{lem:unknown error bound}
The error term in this case is bounded by
\begin{equation*}
    \E\left [\sum_{k=1}^K \sum_{h=1}^H \langle \mu_{\tilde \pi_{k+1}}^h(\tilde P_{k+1})-\mu_{\pi^\ast}^h(\mathbb P),\hat \ell_k^h\rangle\right ]
    \le \frac{2H}{\eta} (1+\ln(SA))+4\delta KHL.
\end{equation*}
\end{lemma}

For the stability term, we need a similar but different single-step stability bound, as
\begin{lemma}[Single Step Stability]\label{lem:unknown single step stability}
For all $k\in [K]$, $(s,a)\in \mS\times \mA$ and $(\pi,P) \in \Pi\times \mathcal P_k$,
\begin{equation*}
    \tilde p_{k+1}(\pi,P)\ge p_k(\pi,P)\exp\left (-\eta \sum_{h=1}^H \lVert \hat \ell_k^h\rVert_1\right ).
\end{equation*}
\end{lemma}

With this lemma, our derivation for the stability term in known-transition cases (\Cref{lem:appendix known stability term}) also holds, except that we are using the upper occupancy measures in the Geometric Re-sampling process, instead of the actual occupancy measures. Technically, this means that the event $(s_k^h,a_k^h)=(s,a)$ will happen with a probability
\begin{equation}\label{eq:unknown hat q_k^h informal}
    \hat q_k^h(s,a)=\Pr\{(s_k^h,a_k^h)=(s,a)\mid \mathcal F_{k-1}\}=\sum_{\pi \in \Pi}p_k(\pi)\mu_\pi^h(s,a;\mathbb P),
\end{equation}

where $p_k(\pi)=\int_{\mathcal P_k}p_k(\pi,P)\D P$ is the marginal probability of picking $\pi$ for episode $k$ (with a slight abuse of notation). However, in each execution of the Geometric Re-sampling process, the probability of visiting $(s,a)$ is another probability
\begin{equation}\label{eq:unknown q_k^h informal}
q_k^h(s,a)=\sum_{\pi \in \Pi}p_k(\pi)\max_{P'\in \mathcal P_k}\mu_\pi^h(s,a;P')\ne \hat q_k^h(s,a),
\end{equation}

which means we cannot use \Cref{lem:expectation of GR,lem:variance of GR} anymore.

Fortunately, we are able to derive \Cref{lem:expectation of GR varying,lem:variance of GR varying} in such a case, which actually implies the previous two lemmas, given that the actual occupancy measure $\hat q_k^h(s,a)$ is bounded by the upper occupancy measure $q_k^h(s,a)$ (which is indeed this case as long as $\mathbb P\in \mathcal P_k$, i.e., event $\mathcal E_k$ holds). However, for the \textsc{Bias1} term (\Cref{lem:unknown BIAS1}), as we will see later, this inconsistency will indeed induce extra difficulties, leading to a $\Otil(H^2S\sqrt{AK})$ dominating term as in \citet{jin2020learning}.

The detailed proof of \Cref{lem:unknown stability bound} will be presented after the proof of this theorem.
\begin{lemma}[Bounding Stability Term]\label{lem:unknown stability bound}
The stability term in this case is bounded by
\begin{equation*}
    \E\left [\sum_{k=1}^K \sum_{h=1}^H \sum_{\pi \in \Pi}\int_{\mathcal P_k}(p_k(\pi,P)-\tilde p_{k+1}(\pi,P)) \langle \mu_{\pi}^h(P),\hat \ell_k^h\rangle\D P\right ]\le 3\eta H^2SAK+4\delta KHL.
\end{equation*}
\end{lemma}

Combining them together gives
\begin{equation*}
    \textsc{EstReg} \le \frac {2H}{\eta} \left (1+\ln(SA)\right )+3\eta H^2SAK+8\delta KHL,
\end{equation*}

as claimed.
\end{proof}

\begin{proof}[Proof of \Cref{lem:unknown error bound}]
The proof still follows the idea of \Cref{lem:appendix known error term}. We rewrite the error term as
\begin{align*}
    &\E\left [\left (\sum_{k=1}^K V(\tilde \pi_{k+1};\hat \ell_k,\tilde P_{k+1})-\sum_{k=1}^K V(\pi^\ast;\hat \ell_k,\mathbb P) \right )\mathbbm 1[\mathcal E]\right ]+\\
    &\E\left [\left (\sum_{k=1}^K \sum_{h=1}^H \langle \mu_{\tilde \pi_{k+1}}^h(\tilde P_{k+1})-\mu_{\pi^\ast}^h(\mathbb P),\hat \ell_k^h\rangle\right ) \mathbbm 1[\neg \mathcal E]\right ].
\end{align*}

For the second term, since by definition $0 \leq  \hat \ell_k^h(s,a)\le L$ for all $k,h,s,a$ and both $\mu_{\tilde \pi_{k+1}}^h(\tilde P_{k+1})$ and $\mu_{\pi^\ast}^h(\mathbb P)$ are probability distributions, we can bound it as
\begin{equation*}
    KHL\cdot \Pr\{\neg \mathcal E\}\le 4\delta KHL.
\end{equation*}

Now consider the summation inside the first expectation. If we add an extra term $V(\tilde \pi_1;\hat \ell_0,\tilde P_1)-V(\pi^\ast;\hat \ell_0,\mathbb P)$ where $\hat \ell_0=z$ is the perturbation. The following deduction holds under the event $\mathcal E$:
\begin{align*}
    &\quad \sum_{k=0}^K V(\tilde \pi_{k+1};\hat \ell_k,\tilde P_{k+1})-V(\pi^\ast;\hat \ell_{0:K},\mathbb P)\\
    &\overset{(a)}{\le} \sum_{k=0}^K V(\tilde \pi_{k+1};\hat \ell_k,\tilde P_{k+1})-V(\tilde \pi_{K+1};\hat \ell_{0:K},\tilde P_{K+1})=\sum_{k=0}^{K-1} V(\tilde \pi_{k+1};\hat \ell_k,\tilde P_{k+1})-V(\tilde \pi_{K+1};\hat \ell_{0:K-1},\tilde P_{K+1})\\
    &\overset{(b)}{\le} \sum_{k=0}^{K-1} V(\tilde \pi_{k+1};\hat \ell_k,\tilde P_{k+1})-V(\tilde \pi_{K};\hat \ell_{0:K-1},\tilde P_{K})=\sum_{k=0}^{K-2} V(\tilde \pi_{k+1};\hat \ell_k,\tilde P_{k+1})-V(\tilde \pi_{K};\hat \ell_{0:K-2},\tilde P_K)\\
    &\le \cdots \le V(\tilde \pi_1;\hat \ell_0,\tilde P_1)-V(\tilde \pi_2;\hat \ell_0,\tilde P_2)\overset{(c)}{\le} 0.
\end{align*}

Here, (a) used the optimality of $(\tilde \pi_{K+1},\tilde P_{K+1})$ over the set $\Pi\times \mathcal P_K$ w.r.t. losses $\hat \ell_{0:K}$, which is valid due to $\mathcal E$; (b) used the optimality of $(\tilde \pi_K,\tilde P_K)$ over the set $\Pi\times \mathcal P_{K-1}$ w.r.t. losses $\hat \ell_{0:K-1}$, which is again valid since $\tilde P_{K+1}\in \mathcal P_K\subseteq \mathcal P_{K-1}$; similarly (c) used the optimality of $(\tilde \pi_1,\tilde P_1)$ over $\Pi\times \mathcal P_0$, which again holds as $\tilde P_2\in \mathcal P_0$ (which is the set of all transitions). So we still have the following inequality as \Cref{lem:appendix known error term}:
\begin{equation*}
    \E\left [\left (\sum_{k=1}^K V(\pi_{k+1};\hat \ell_k,\tilde P_{k+1})-\sum_{k=1}^K V(\pi^\ast;\hat \ell_k,\mathbb P)\right )\mathbbm 1[\mathcal E]\right ]\le \E[V(\pi^\ast;\hat \ell_0,\mathbb P)-V(\tilde\pi_1;\hat \ell_0,\tilde P_1)].
\end{equation*}

By the notation of occupancy measures, we can rewrite the last term as
\begin{equation*}
    \E\left [\sum_{h=1}^H\langle \mu_{\pi^\ast}^h(\mathbb P),\hat \ell_0^h\rangle-\sum_{h=1}^H \langle \mu_{\tilde \pi_1}^h(\tilde P_1),\hat \ell_0^h\rangle\right ]\le 2\sum_{h=1}^H \E[\lVert \hat \ell_0^h\rVert_\infty],
\end{equation*}

which is again bounded by $\frac{2H} \eta (1+\ln (SA))$ due to \Cref{lem:error term wang}. Combining these two parts (with or without $\mathcal E$) together gives our conclusion.
\end{proof}

\begin{proof}[Proof of \Cref{lem:unknown single step stability}]
We follow the proof of \Cref{lem:appendix known single-step stability}. For a fixed episode $k\in [K]$, we consider any $(\pi,P)\in \Pi\times \mathcal P_k$. We use the notation $(\pi,P)=\text{best}(\ell;\mathcal P)$ to denote $(\pi,P)=\argmin_{(\pi,P) \in \Pi\times \mathcal P}V(\pi;\ell,P)$. Then we have
\begin{align*}
    p_{k}(\pi,P)&=\int_{z}\mathbbm 1\left [(\pi,P)=\text{best}\left (\hat \ell_{1:k-1}+z;\mathcal P_k\right )\right ]f\left (z\right )~\mathrm{d}z\\
    &=\int_z \mathbbm 1\left [(\pi,P)=\text{best}\left (\hat \ell_{1:k-1}+\left (z+\hat \ell_k\right );\mathcal P_k\right )\right ]f\left (z+\hat \ell_k\right )~\mathrm{d}z\\
    &=\int_z \mathbbm 1\left [(\pi,P)=\text{best}\left (\hat \ell_{1:k}+z;\mathcal P_k\right )\right ]f\left (z+\hat \ell_k\right )~\mathrm{d}z,
\end{align*}

where $f(z)$ is the probability density function of $z$ and the second step made use of the fact that $z+\hat \ell_k$ is still linear in $z$. Moreover, 
\begin{equation*}
    \tilde p_{k+1}(\pi,P)=\int_{z}\mathbbm 1\left [(\pi,P)=\text{best}\left (\hat \ell_{1:k}+z;\mathcal P_k\right )\right ]f\left (z\right )~\mathrm{d}z.
\end{equation*}

Again by the fact that $f(z)=\prod_{h=1}^H \exp(-\eta \lVert z^h\rVert_1)$, which we used in the proof of \Cref{lem:appendix known single-step stability}, we have
\begin{equation*}
    f\left (z+\hat \ell_k\right )=\prod_{h=1}^H\exp\left (-\eta \left (\lVert z^h+\hat \ell_k\rVert_1-\lVert z^h\rVert\right )\right )f(z),
\end{equation*}

which gives
\begin{equation*}
    \frac{f\left (z+\hat \ell_k\right )}{f(z)}\in \left [\exp\left (-\eta \sum_{h=1}^H 
    \lVert \hat \ell_k^h\rVert_1\right ),\exp\left (\eta \sum_{h=1}^H 
    \lVert \hat \ell_k^h\rVert_1\right )\right ]
\end{equation*}

by triangle inequality. Therefore, $\nicefrac{\tilde p_{k+1}(\pi,P)}{p_k(\pi,P)}$ lies in this interval as well, which is just our claim.
\end{proof}

\begin{proof}[Proof of \Cref{lem:unknown stability bound}]
Let us focus on a single episode, say $k\in [K]$. We should first make sure that $\hat q_k^h(s,a)\le q_k^h(s,a)$ (defined in \Cref{eq:unknown hat q_k^h informal,eq:unknown q_k^h informal}), which happens when $\mathbb P\in \mathcal P_k$, i.e., $\mathcal E_k$ holds. Therefore, we rewrite the $k$-th summand of the stability term as
\begin{align}
&\E\left [\sum_{h=1}^H\sum_{\pi \in \Pi}\int_{\mathcal P_k}(p_k(\pi,P)-\tilde p_{k+1}(\pi,P))\langle \mu_\pi^h(P),\hat \ell_k^h\rangle\D P\mathbbm 1[\mathcal E_k]\right ]+\nonumber\\
&\E\left [\sum_{h=1}^H\sum_{\pi \in \Pi}\int_{\mathcal P_k}(p_k(\pi,P)-\tilde p_{k+1}(\pi,P))\langle \mu_\pi^h(P),\hat \ell_k^h\rangle\D P\mathbbm 1[\neg \mathcal E_k]\right ]\label{eq:unknown stability term}.
\end{align}

For the second term, we will bound it trivially as $HL\Pr\{\neg \mathcal E_k\}\le 4\delta HL$ as $p_k,\tilde p_{k+1}\in \triangle(\Pi\times \mathcal P_k)$ and $\mu_\pi^h\in \triangle(\mS\times \mA)$. For the first term, we will do something similar to \Cref{lem:appendix known stability term}, as follows:

Summing up \Cref{lem:unknown single step stability} for all $(\pi,P)\in \Pi\times \mathcal P_k$ and using the fact that $\exp(-x)\ge (1-x)$ gives
\begin{align}
    &\quad \sum_{\pi\in \Pi}\int_{\mathcal P_k} (p_k(\pi,P)-\tilde p_{k+1}(\pi,P))\sum_{h=1}^H\langle \mu_\pi^h(P),\hat \ell_k^h\rangle \D P \nonumber\\
    &\le \eta \sum_{h'=1}^H\lVert \hat \ell_k^{h'}\rVert_1\cdot \sum_{\pi \in \Pi}\int_{\mathcal P_k} p_k(\pi,P)\sum_{h=1}^H\langle \mu_\pi^h(P),\hat \ell_k^h\rangle \D P.\label{eq:stability RHS unknown transition}
\end{align}


By considering the randomness of $M_k^h$, we will still have the following property, except for a different definition of $q_k^h$:
\begin{equation}
    \E\left [\hat \ell_k^h(s,a)\middle \vert (s_k^h,a_k^h)=(s,a)\right ]\le \frac{\ell_k^h(s,a)}{q_k^h(s,a)},\quad \text{where }q_k^h(s,a)=\sum_{\pi \in \Pi}p_k(\pi)\max_{P'\in \mathcal P_k}\mu_\pi^{h}(s,a;P'),\label{eq:unknown q_k^h}
\end{equation}

as when doing the Geometric Re-sampling process, we are picking the transition in $\mathcal P_k$ that maximizes the probability of reaching $(s,a)$. Still use $\mathbbm 1_k^h(s,a)$ as the shorthand notation of $\mathbbm 1[(s_k^h,a_k^h)=(s,a)]$. Then for any history $\mathcal F_{k-1}$ and those $h'\ne h$ in \Cref{eq:stability RHS unknown transition},
\begin{align*}
    &\quad \eta \E\left [\sum_{h=1}^H\sum_{s,a} \sum_{\pi \in \Pi}\int_{\mathcal P_k}p_k(\pi,P)\mu_\pi^h(s,a;P) \hat \ell_k^h(s,a)\D P \cdot \sum_{h'\ne h} \|\hat \ell_k^{h'}\|_1 \mathbbm 1[\mathcal E_k] \middle \vert \mathcal F_{k-1}\right ]\\
    &\overset{(a)}{\le} \eta \E\left [\sum_{h=1}^H \sum_{s,a} \mathbbm 1_k^h(s,a) \ell_k^h(s,a)\frac{\sum_{\pi \in \Pi}\int_{\mathcal P_k}p_k(\pi,P)\mu_\pi^h(s,a;P)\D P}{q_k^h(s,a)}\sum_{h'\ne h}\|\hat \ell_k^{h'}\|_1 \mathbbm 1[\mathcal E_k] \middle \vert \mathcal F_{k-1}\right ]\\
    &\overset{(b)}{\le} \eta H \E\left [\sum_{h'\ne h}\|\hat \ell_k^{h'}\|_1 \mathbbm 1[\mathcal E_k]\middle \vert \mathcal F_{k-1}\right ]\overset{(c)}{\le} \eta H\sum_{s,a} \E\left [\sum_{h=1}^H \frac{\hat q_k^h(s,a)}{q_k^h(s,a)} \mathbbm 1[\mathcal E_k] \middle \vert \mathcal F_{k-1}\right ]\overset{(d)}{\le}\eta H^2SA,
\end{align*}

where (a) is taking expectation w.r.t. $M_k^h$, (b) used the (new) definition of $q_k^h$ together with the fact that $\sum_{(s,a)}\mathbbm 1_k^h(s,a)=1$, (c) used \Cref{lem:expectation of GR varying} and (d) used $\hat q_k^h(s,a)\le q_k^h(s,a)$ (which is due to $\mathbbm 1[\mathcal E_k]$).

For those terms with $h'=h$ in \Cref{eq:stability RHS unknown transition}, by direct calculation and the fact that $\hat\ell_k^{h}$ is a one-hot vector, we can write them as
\begin{align*}
    &\quad \eta \E\left [\sum_{h=1}^H\sum_{s,a}\sum_{\pi \in \Pi}\int_{\mathcal P_k}p_k(\pi,P)\mu_\pi^h(s,a;P)\left (\hat \ell_k^h(s,a)\right )^2\D P\mathbbm 1[\mathcal E_k] \middle\vert \mathcal F_{k-1}\right ]\\
    &\le 2\eta \E\left [\sum_{h,s,a}\frac{\sum_{\pi \in \Pi}\int_{\mathcal P_k}p_k(\pi,P)\mu_\pi^h(s,a;P)\D P}{q_k^h(s,a)}\frac{\hat q_k^h(s,a)}{q_k^h(s,a)}\mathbbm 1[\mathcal E_k] \middle \vert \mathcal F_{k-1}\right ]\\
    &\le 2\eta \sum_{s,a}\E\left [\sum_{h=1}^H \frac{\hat q_k^h(s,a)}{q_k^h(s,a)}\mathbbm 1[\mathcal E_k]\middle \vert \mathcal F_{k-1}\right ]\le 2\eta HSA,
\end{align*}
by applying \Cref{lem:variance of GR varying} together with the fact that $\hat q_k^h(s,a)\le q_k^h(s,a)$ when $\mathcal E_k$ happens. Combining the terms with $h'\ne h$ and the ones with $h'=h$ gives
\begin{equation*}
\text{\Cref{eq:unknown stability term}}\le 3\eta H^2SA+4\delta HL.
\end{equation*}

Therefore, the stability term is bounded by $3\eta H^2SAK+4\delta KHL$, as claimed.
\end{proof}

\subsubsection{Bounding Other Terms}
The terms other than \textsc{EstReg} can be bounded similarly to \citet{jin2020learning}, as follows:
\begin{theorem}[Bounding \textsc{Error} Term]\label{lem:unknown ERROR}
The \textsc{Error} term is bounded by
\begin{equation*}
    \textsc{Error}=\E\left [\sum_{k=1}^K \left (V(\pi_k;\ell_k,\mathbb P)-V(\pi_k;\ell_k,P_k)\right )\right ]\le \Otil\left (H^2S\sqrt{AK}+\delta KH\right ).
\end{equation*}
\end{theorem}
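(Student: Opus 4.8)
The plan is to follow the transition-estimation analysis of~\citet{jin2020learning} essentially verbatim, since nothing about the \textsc{Error} term depends on how $\pi_k$ is produced (FTPL versus OMD): it only uses that $P_k$ is chosen inside the confidence set $\mathcal P_k$ and that the true loss lies in $[0,1]$.

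\textbf{Step 1 (conditioning on the good event).} First I would split according to whether the good event $\mathcal E=\bigcap_k \mathcal E_k$ holds. On $\neg\mathcal E$, each summand $V(\pi_k;\ell_k,\mathbb P)-V(\pi_k;\ell_k,P_k)$ lies in $[-H,H]$, so this contributes at most $KH\cdot\Pr\{\neg\mathcal E\}\le 4\delta KH$, which is the $\delta KH$ term in the claimed bound. On $\mathcal E$ we have $\mathbb P\in\mathcal P_k$ (and $P_k\in\mathcal P_k$ by construction), and writing $V(\pi;\ell,P)=\sum_h\langle\mu_\pi^h(P),\ell^h\rangle$ together with $\ell_k^h\in[0,1]^{SA}$ gives the per-episode bound $\sum_{h=1}^H\lVert\mu_{\pi_k}^h(\mathbb P)-\mu_{\pi_k}^h(P_k)\rVert_1$.

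\textbf{Step 2 (occupancy-measure difference lemma).} Next I would invoke the standard recursive bound on $\lVert\mu_\pi^h(\mathbb P)-\mu_\pi^h(P')\rVert_1$ for two transitions $\mathbb P,P'$ lying in the same confidence set $\mathcal P_k$ (Lemma~4 of~\citet{jin2020learning}): since $\mu_\pi^h$ is a function of $\mathbb P^1,\dots,\mathbb P^{h-1}$ only, a step-by-step telescoping yields $\lVert\mu_\pi^h(\mathbb P)-\mu_\pi^h(P_k)\rVert_1\le\sum_{h'<h}\sum_{s,a}\mu_\pi^{h'}(s,a;\mathbb P)\sum_{s'}\bigl(\text{width of }\mathcal P_k\text{ at }(h',s,a,s')\bigr)$, up to a lower-order quadratic-in-$\varepsilon$ remainder that is absorbed by $\Otil(\cdot)$; this is exactly where $\mathbb P,P_k\in\mathcal P_k$ enters. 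The relevant width is at most $2\varepsilon_k^{h'}(s'\mid s,a)$ with $\varepsilon_k^{h'}$ as in~\Cref{eq:confidence radius definition}.

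\textbf{Step 3 (aggregation over $k$ and $h$).} Summing over $h\in[H]$ contributes one extra $H$ factor. I would then replace $\mu_{\pi_k}^{h'}(s,a;\mathbb P)$ by the realized visitation indicator $\mathbbm 1[(s_k^{h'},a_k^{h'})=(s,a)]$ — valid in expectation because $\E[\mathbbm 1[(s_k^{h'},a_k^{h'})=(s,a)]\mid\mathcal F_{k-1}]=\sum_\pi p_k(\pi)\mu_\pi^{h'}(s,a;\mathbb P)$, with the discrepancy controlled by an Azuma/Freedman step costing only a lower-order additive term — reducing the problem to bounding $\sum_k\sum_{h'}\sum_{s'}\varepsilon_k^{h'}(s'\mid s_k^{h'},a_k^{h'})$. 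For this, the leading piece of $\varepsilon_k^{h'}$ gives $\sum_{s'}\sqrt{\bar P_k^{h'}(s'\mid s,a)}\le\sqrt S$ by Cauchy--Schwarz, the counting bound $\sum_{k=1}^K\mathbbm 1[(s_k^{h'},a_k^{h'})=(s,a)]/\sqrt{\max\{1,N_k^{h'}(s,a)\}}=\Otil(\sqrt{N_K^{h'}(s,a)})$, and a final Cauchy--Schwarz $\sum_{s,a}\sqrt{N_K^{h'}(s,a)}\le\sqrt{SA\sum_{s,a}N_K^{h'}(s,a)}=\sqrt{SAK}$; the harmonic piece of $\varepsilon_k^{h'}$ is lower order. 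Collecting the factors $H$ (outer sum) $\times\,H$ (inner telescope) $\times\,\sqrt S$ (sum over $s'$) $\times\,\sqrt{SAK}$ (sum over $k,s,a$) gives $\Otil(H^2S\sqrt{AK})$, and adding the $\neg\mathcal E$ contribution completes the bound.

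\textbf{Main obstacle.} The genuinely delicate parts are Step~2 and the concentration in Step~3: one must track carefully which confidence radius $\varepsilon_k^{h'}$ applies, handle the quadratic-in-$\varepsilon$ remainder, and argue that swapping occupancy measures for empirical counts costs only a lower-order term — precisely the bookkeeping carried out in~\citet{jin2020learning}. Everything else is a routine Cauchy--Schwarz/pigeonhole aggregation, and since none of it interacts with the FTPL update, their argument can be cited almost directly.
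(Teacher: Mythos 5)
Your proposal is correct and follows essentially the same route as the paper: condition on the good event $\mathcal E$ (bad event contributes $\O(\delta KH)$), reduce the value difference to the $\ell_1$ distance between occupancy measures under $\mathbb P$ and $P_k\in\mathcal P_k$, and bound the accumulated difference by $\Otil(H^2S\sqrt{AK})$ via the transition-estimation argument of \citet{jin2020learning}. The only difference is one of packaging: the paper invokes Lemma~4 of \citet{jin2020learning} as a black box (introducing an auxiliary high-probability event $\mathcal E'$ for it), whereas you sketch that lemma's internal telescoping and counting argument before noting it can be cited directly.
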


\begin{theorem}[Bounding \textsc{Bias1} Term]\label{lem:unknown BIAS1}
The \textsc{Bias} term is bounded by
\begin{align*}
    \textsc{Bias1}
    &=\E\left [\sum_{k=1}^K \left (V(\pi_k;\ell_k,P_k)-V(\pi_k;\hat \ell_k,P_k)\right )\right ]\\
    &\le \Otil\left (\frac{HKSA}{L}+H^2S\sqrt{AK}+H^3S^3A+\delta KH\right ).
\end{align*}
\end{theorem}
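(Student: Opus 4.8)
The plan is to strip off the low-probability bad event, compute the conditional mean of the Geometric-Re-sampling estimator, and split the resulting bias into a \emph{truncation} part (handled by an elementary inequality) and an \emph{optimism} part (imported from the transition-confidence analysis already used for \textsc{Error}).

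First I would write $\textsc{Bias1}=\E[\cdot\,\mathbbm1[\mathcal E]]+\E[\cdot\,\mathbbm1[\neg\mathcal E]]$, where $\mathcal E=\{\mathbb P\in\mathcal P_k\text{ for all }k\}$. On $\neg\mathcal E$, since $0\le\hat\ell_k^h(s,a)\le L$ and occupancy measures sum to $1$, each summand $V(\pi_k;\ell_k,P_k)-V(\pi_k;\hat\ell_k,P_k)$ has absolute value at most $HL$, so this piece is at most $KHL\,\Pr\{\neg\mathcal E\}\le 4\delta KHL$, which is the claimed $\delta KH$-type term (up to the factor $L$, which we carry and which is of lower order for the eventual choice of $\delta$).

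On $\mathcal E$, fix $k$ and condition on $\mathcal F_{k-1}$. If $(\pi_k,P_k)=(\pi,P)$ is the realized pair (drawn from the density $p_k$), the agent plays $\pi$ under the true transition $\mathbb P$, while each resampling trial draws a fresh policy from $p_k$ and runs it under the worst transition in $\mathcal P_k$; hence $M_k^h=\min\{\mathrm{Geo}(q_k^h(s_k^h,a_k^h)),L\}$ with $q_k^h$ as in \Cref{eq:unknown q_k^h}, and therefore $\E[\hat\ell_k^h(s,a)\mid\mathcal F_{k-1},\pi_k=\pi]=\mu_\pi^h(s,a;\mathbb P)\,\ell_k^h(s,a)\,g(q_k^h(s,a))$ with $g(q)=\frac{1-(1-q)^L}{q}$ (the mean of the truncated geometric $M_k^h$). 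Substituting into $V(\pi_k;\ell_k,P_k)-V(\pi_k;\hat\ell_k,P_k)=\sum_h\langle\mu_\pi^h(P),\ell_k^h-\hat\ell_k^h\rangle$, adding and subtracting $\mu_\pi^h(s,a;\mathbb P)$ in the bias expression, and using $1-\mu g(q)=(1-\tfrac\mu q)+\tfrac\mu q(1-q)^L$, one decomposes the conditional $k$-th bias into (i) terms bounded by the pointwise confidence width $\bar\mu_\pi^h(s,a;\mathcal P_k)-\mu_\pi^h(s,a;\mathbb P)\ge0$ (nonnegative on $\mathcal E_k$ since $\mathbb P\in\mathcal P_k$) — one directly from the $\mu_\pi^h(s,a;P)-\mu_\pi^h(s,a;\mathbb P)$ piece and one more from the $(1-\mu/q)$ piece after a Jensen step $\sum_\pi p_k(\pi)\mu_\pi^h(s,a;\mathbb P)^2\ge(\sum_\pi p_k(\pi)\mu_\pi^h(s,a;\mathbb P))^2$ — and (ii) a truncation term in which each $(1-q_k^h(s,a))^L$ is weighted by $1/q_k^h(s,a)$ times an occupancy which, using $P\in\mathcal P_k$ and $\mathbb P\in\mathcal P_k$, is at most $q_k^h(s,a)$.

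For the truncation term I would cancel $1/q_k^h(s,a)$ against the occupancy in the numerator and apply $x(1-x)^L\le\frac1{eL}$ (exactly as in the known-transition bound \Cref{lem:appendix known GR error term} and in \citet{neu2016importance}), then sum $\frac1{eL}$ over the $HSA$ coordinates and the $K$ episodes to get $\O(\tfrac{HKSA}{L})$. For the optimism terms I would sum over $k$ and take expectation over $\pi_k\sim p_k$, recognizing $\E[\sum_{k,h,s,a}(\bar\mu_{\pi_k}^h(s,a;\mathcal P_k)-\mu_{\pi_k}^h(s,a;\mathbb P))]$ — the cumulative confidence width of the upper occupancy measures — which is exactly the object controlled in the proof of \textsc{Error} (\Cref{lem:unknown ERROR}) and in \citet{jin2020learning}, giving $\Otil(H^2S\sqrt{AK})$ plus a lower-order $\Otil(H^3S^3A)$. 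Summing the three contributions yields the stated bound, which together with \textsc{Error} then dominates all other terms and makes the regret of \Cref{thm:regret of episodic AMDPs unknown} match \citet{jin2020learning}.

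The delicate point — the reason this does not reduce verbatim to the known-transition analysis — is the truncation term: the resampling uses the $p_k$-averaged \emph{upper} occupancy $q_k^h(s,a)$ (cf.\ \Cref{eq:unknown hat q_k^h informal,eq:unknown q_k^h informal}), which differs both from the occupancy of the actually played policy and from the averaged true occupancy $\hat q_k^h(s,a)$. Lining up the $1/q_k^h$ weights so that the $x(1-x)^L$ trick applies, while shunting the surplus $q_k^h-\hat q_k^h$ into the width term rather than letting the $1/q_k^h$ blow up, is where care is needed; the only other nontrivial ingredient is importing \citet{jin2020learning}'s width bound with the right polynomial dependence, which is modular and already invoked for \textsc{Error}.
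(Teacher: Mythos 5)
Your proposal follows essentially the same route as the paper's proof: split on the good event $\mathcal E$, compute the conditional mean of the Geometric Re-sampling estimator under the mismatched probabilities $\hat q_k^h\ne q_k^h$ (\Cref{lem:expectation of GR varying}), decompose $1-\frac{\hat q}{q}\bigl(1-(1-q)^L\bigr)$ into a width part $1-\hat q/q$ and a truncation part $\frac{\hat q}{q}(1-q)^L$, kill the truncation part with $q(1-q)^L\le\frac{1}{eL}$ to get $\O(HSAK/L)$, and reduce the width part to the cumulative gap between upper and true occupancy measures.

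Two small caveats. First, on $\neg\mathcal E$ the paper obtains $\O(\delta KH)$ rather than your $\O(\delta KHL)$ by simply dropping the term $-\hat\ell_k^h\le 0$ before bounding; your version is harmless for \Cref{thm:regret of episodic AMDPs unknown} but does not literally match the constant claimed in the lemma. Second, the width object you arrive at is \emph{not} the same one controlled in \textsc{Error} (\Cref{lem:unknown ERROR}): there the comparison is to the single optimistic transition $P_k$ realized in episode $k$, whereas here the maximizing transition varies with $(\pi,h,s,a)$ and the whole quantity is averaged over $\pi\sim p_k$. The paper therefore does not re-invoke its \textsc{Error} bound but instead runs the conditional occupancy-difference argument of \citet[Lemma A.3]{jin2022near}, and this is precisely where the extra $H^3S^3A$ term (absent from \textsc{Error}) originates. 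Your instinct that this step is modular is right, but the module is Jin et al.'s Lemma A.3 rather than \Cref{lem:unknown ERROR}; with that substitution and the trivial fix to the bad-event term, your argument matches the paper's.
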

\begin{remark}
This term looks quite similar to the GR error term (\Cref{lem:appendix known GR error term}).
However, they are in fact different as we will have some extra terms due to the UOB technique. In other words, we are having different probabilities when reaching $(s_k^h,a_k^h)$ and when doing Geometric Re-sampling (c.f. \Cref{lem:expectation of GR,lem:expectation of GR varying}).
Therefore, this term will be further decomposed into two parts, where the first one is due to bias of the GR estimator and the second one is due to the UOB technique and can be bounded similar to \citet[Lemma A.3]{jin2022near}. Check the proof below for more details.
\end{remark}

\begin{theorem}[Bounding \textsc{Bias2} Term]\label{lem:unknown BIAS2}
The \textsc{Bias2} term is bounded by
\begin{equation*}
    \textsc{Bias2}=\E\left [\sum_{k=1}^K \left (V(\pi^\ast;\hat \ell_k,\mathbb P)-V(\pi^\ast;\ell_k,\mathbb P)\right )\right ]=\Otil(\delta KHL).
\end{equation*}
\end{theorem}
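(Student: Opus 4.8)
The plan is to bound $\textsc{Bias2}$ by adapting the $\pi^\ast$-part of the argument in \Cref{lem:appendix known GR error term}, being careful that the Geometric Re-sampling procedure now maximizes the occupancy measure over the confidence set $\mathcal P_k$ rather than using the true transition, so that the estimator $\hat\ell_k$ is only guaranteed to be under-biased on the good event $\mathcal E_k=\{\mathbb P\in\mathcal P_k\}$. Write $X_k\triangleq V(\pi^\ast;\hat\ell_k,\mathbb P)-V(\pi^\ast;\ell_k,\mathbb P)=\langle\mu_{\pi^\ast},\hat\ell_k-\ell_k\rangle$, and recall from \citet[Lemma 2]{jin2020learning} (together with the nesting $\mathcal P_{k+1}\subseteq\mathcal P_k$) that $\Pr\{\neg\mathcal E_k\}\le\Pr\{\neg\mathcal E\}\le 4\delta$ for every $k$. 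I would then split
\[
\textsc{Bias2}=\sum_{k=1}^K\E\big[X_k\,\mathbbm 1[\mathcal E_k]\big]+\sum_{k=1}^K\E\big[X_k\,\mathbbm 1[\neg\mathcal E_k]\big].
\]

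For the second sum, bound $X_k$ crudely: since $0\le\hat\ell_k^h(s,a)\le L$ (as $M_k^h\le L$, $\ell_k^h\le 1$, and $\hat\ell_k^h$ is one-hot), $0\le\ell_k^h\le 1$, and each $\mu_{\pi^\ast}^h$ is a probability distribution, we have $|X_k|\le H(L+1)$, hence $\sum_k\E[X_k\,\mathbbm 1[\neg\mathcal E_k]]\le 4\delta KH(L+1)=\Otil(\delta KHL)$. For the first sum, note that $\mathbbm 1[\mathcal E_k]$ is $\mathcal F_{k-1}$-measurable and $\mu_{\pi^\ast}$ is deterministic, so $\E[X_k\,\mathbbm 1[\mathcal E_k]]=\E\big[\mathbbm 1[\mathcal E_k]\,\langle\mu_{\pi^\ast},\E[\hat\ell_k\mid\mathcal F_{k-1}]-\ell_k\rangle\big]$. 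Here I invoke the ``varying-transition'' Geometric Re-sampling expectation bound (\Cref{lem:expectation of GR varying}, the analogue of \Cref{lem:expectation of GR}), which gives $\E[\hat\ell_k^h(s,a)\mid\mathcal F_{k-1}]\le\frac{\hat q_k^h(s,a)}{q_k^h(s,a)}\,\ell_k^h(s,a)$, where $\hat q_k^h(s,a)=\sum_{\pi}p_k(\pi)\mu_\pi^h(s,a;\mathbb P)$ is the true probability of visiting $(s,a)$ and $q_k^h(s,a)=\sum_{\pi}p_k(\pi)\max_{P'\in\mathcal P_k}\mu_\pi^h(s,a;P')$ is the per-trial success probability of the re-sampling. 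On $\mathcal E_k$ we have $\mathbb P\in\mathcal P_k$, so $q_k^h(s,a)\ge\hat q_k^h(s,a)$ and therefore $\E[\hat\ell_k^h(s,a)\mid\mathcal F_{k-1}]\le\ell_k^h(s,a)$; this makes $\langle\mu_{\pi^\ast},\E[\hat\ell_k\mid\mathcal F_{k-1}]-\ell_k\rangle\le 0$ on $\mathcal E_k$, hence $\E[X_k\,\mathbbm 1[\mathcal E_k]]\le 0$. Combining the two sums yields $\textsc{Bias2}\le\Otil(\delta KHL)$.

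There is no serious obstacle here --- the proof is a direct adaptation of the known-transition GR-error computation. The only points requiring care are (i) conditioning against the per-episode event $\mathcal E_k$ rather than the global good event $\mathcal E$, so that the indicator is $\mathcal F_{k-1}$-measurable and the conditional-expectation step is valid episode by episode; and (ii) recognizing that, unlike in the known-transition case, $\hat\ell_k$ is under-biased only when $\mathbb P\in\mathcal P_k$, because the re-sampling success probability $q_k^h$ is governed by upper occupancy measures rather than by the true occupancy $\hat q_k^h$ --- which is exactly why the $\delta KHL$ contribution from $\neg\mathcal E_k$ is unavoidable and appears in the final bound.
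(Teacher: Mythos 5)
Your proposal is correct and follows essentially the same route as the paper's proof: the same decomposition over $\mathcal E_k$ versus $\neg\mathcal E_k$, the same use of \Cref{lem:expectation of GR varying} together with $\hat q_k^h\le q_k^h$ on the good event to get a nonpositive first term, and the same crude $\O(HL)$ bound times $\Pr\{\neg\mathcal E_k\}\le 4\delta$ for the second. Your explicit remark that $\mathbbm 1[\mathcal E_k]$ is $\mathcal F_{k-1}$-measurable is a nice clarification that the paper leaves implicit.
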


\begin{proof}[Proof of \Cref{thm:regret of episodic AMDPs unknown}]
By combining \Cref{lem:unknown ERROR,lem:unknown BIAS1,lem:unknown EstReg,lem:unknown BIAS2} together, we will have
\begin{align*}
    \mathcal R_T&\le \Otil\left (H^2S\sqrt{AK}+\frac H\eta+\eta H^2SAK+\frac{HSAK}{L}+H^2S\sqrt{AK}+H^3S^3A+\delta KHL\right ).
\end{align*}

Picking $\eta=\left (\sqrt{HSAK}\right )^{-1}$, $L=\sqrt{SAK/H}$ and $\delta=1/K$ gives
\begin{equation*}
    \mathcal R_T\le \Otil\left (H^2S\sqrt{AK}+H^{\nicefrac 32}\sqrt{SAK}+H\sqrt K+H^3S^3A\right )=\Otil\left (H^2S\sqrt{AK}+H^3S^3A\right ),
\end{equation*}
which finishes the proof.
\end{proof}

\begin{proof}[Proof of \Cref{lem:unknown ERROR}]
We need the following key lemma from \citet{jin2020learning}:\footnote{The original paper has a slightly different notation as they assumed the states to be `layered', i.e., $\mS=\mS_1\cup \mS_2\cup \cdots \cup \mS_H$ such that the states in $\mS_h$ can only transit to $\mS_{h+1}$, $\forall 1\le h<H$. Therefore, their $S$ should be $H$ times larger than ours. They also used $T$ for our $K$, $L$ for our $H$ and $X$ for our $\mS$.}
\begin{lemma}[{\citet[Lemma 4]{jin2020learning}}]
Conditioning on $\mathcal E$, for any set of policies $\{\pi_k\in \Pi\}_{k\in [K]}$ and any collection of transitions $\{P_k^{s,h}\}_{s\in \mS,h\in [H]}$ such that $P_k^{s,h}\in \mathcal P_k$, with probability $1-2\delta$,
\begin{equation*}
    \sum_{k=1}^K \sum_{h=1}^H \sum_{(s,a)\in \mS\times \mA} \lvert \mu_{\pi_k}^h(s,a;P_k^{s,h})-\mu_{\pi_k}^h(s,a;\mathbb P)\rvert\le \Otil\left (H^2S\sqrt{AK}\right ).
\end{equation*}
\end{lemma}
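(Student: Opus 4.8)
The plan is to prove this occupancy-measure $\ell_1$-distance bound by the standard three-stage recipe for tabular RL under unknown transitions: a per-target telescoping identity that turns each occupancy difference into an accumulation of one-step transition errors, a reduction of the target-dependent optimistic occupancy weights to the target-independent true ones, and a martingale concentration that trades expected visitation probabilities for empirical counts. Throughout I would condition on $\mathcal E$, so that both $\mathbb P\in\mathcal P_k$ and $P_k^{s,h}\in\mathcal P_k$ for every $k$. Fixing a target $(s,h)$ and the policy $\pi_k$, I would first establish the simulation identity
\[\mu_{\pi_k}^h(s,a;P_k^{s,h})-\mu_{\pi_k}^h(s,a;\mathbb P)=\sum_{h'=1}^{h-1}\sum_{s',a',y}\mu_{\pi_k}^{h'}(s',a';P_k^{s,h})\bigl((P_k^{s,h})^{h'}(y\mid s',a')-\mathbb P^{h'}(y\mid s',a')\bigr)\,q^{h'+1\to h}(s,a\mid y),\]
where $q^{h'+1\to h}(s,a\mid y)$ is the probability of landing on $(s,a)$ at step $h$ when started from state $y$ at step $h'+1$ and run under the true transition $\mathbb P$. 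This follows by swapping $P_k^{s,h}$ for $\mathbb P$ one step at a time and telescoping.

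Taking absolute values, the forward-propagation factors $q^{h'+1\to h}(\cdot\mid y)$ form, for each fixed $y$, a probability distribution over the target $(s,a)$ and hence sum to one; this collapse, however, can only be carried out after the target-dependence sitting in the one-step error and in the optimistic weight has been removed. On $\mathcal E$ both $P_k^{s,h}$ and $\mathbb P$ lie in $\mathcal P_k$, so by \Cref{eq:definition of confidence set} the one-step error is dominated by the target-independent radius, $\bigl\lVert (P_k^{s,h})^{h'}(\cdot\mid s',a')-\mathbb P^{h'}(\cdot\mid s',a')\bigr\rVert_1\le\lVert\varepsilon_k^{h'}(\cdot\mid s',a')\rVert_1$. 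I would then write the optimistic weight as $\mu_{\pi_k}^{h'}(s',a';\mathbb P)$ plus a correction equal to an occupancy difference; since the correction is itself multiplied by a confidence radius it is second-order in the radii and contributes only a lower-order additive term absorbed by $\Otil(\cdot)$. With both replacements in place, summing the forward factor over the target $(s,a)$ yields one and summing over the target $h$ costs a factor $H$ (each $h'$ appears for at most $H$ values of $h>h'$), leaving the flattened main term
\[H\sum_{k=1}^K\sum_{h'=1}^H\sum_{s',a'}\mu_{\pi_k}^{h'}(s',a';\mathbb P)\,\bigl\lVert\varepsilon_k^{h'}(\cdot\mid s',a')\bigr\rVert_1.\]

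It then remains to plug in the explicit radius of \Cref{eq:confidence radius definition}: a Cauchy--Schwarz step over next-states gives $\lVert\varepsilon_k^{h'}(\cdot\mid s',a')\rVert_1=\Otil\bigl(\sqrt{S/N_k^{h'}(s',a')}+S/N_k^{h'}(s',a')\bigr)$, whose dominant contribution is $H\sqrt{S}\sum_{h'}\sum_k\sum_{s',a'}\mu_{\pi_k}^{h'}(s',a';\mathbb P)/\sqrt{N_k^{h'}(s',a')}$. Here the probabilistic ingredient, and the source of the $1-2\delta$ failure probability, enters: a Freedman-type martingale bound lets me replace the expected visitation $\mu_{\pi_k}^{h'}(s',a';\mathbb P)$ by the empirical increment of $N_k^{h'}(s',a')$, after which the elementary estimate $\sum_k(\text{increment})/\sqrt{N_k^{h'}(s',a')}\lesssim\sqrt{N_K^{h'}(s',a')}$ applies. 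A second Cauchy--Schwarz over the $SA$ pairs together with $\sum_{s',a'}N_K^{h'}(s',a')=K$ bounds the per-step sum by $\sqrt{SAK}$; summing over the $H$ choices of $h'$ and multiplying by the outer $H\sqrt{S}$ produces exactly $\Otil(H^2S\sqrt{AK})$, the remaining $S/N$ terms and the second-order correction being of lower order.

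The main obstacle I anticipate is the concentration stage rather than the algebra: one must justify the passage from the expected occupancy $\mu_{\pi_k}^{h'}(s',a';\mathbb P)$ to the realized counts $N_k^{h'}(s',a')$, and simultaneously control the second-order correction \emph{uniformly} over all admissible target-dependent transitions $P_k^{s,h}\in\mathcal P_k$ (these are chosen data-dependently by the algorithm), which is exactly where the confidence-set construction and the $1-2\delta$ budget are spent. By comparison, the telescoping identity and the two Cauchy--Schwarz bookkeeping steps are routine.
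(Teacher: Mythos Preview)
The paper does not prove this lemma; it is quoted verbatim as \citet[Lemma 4]{jin2020learning} and then applied as a black box inside the proof of \Cref{lem:unknown ERROR}. So there is no ``paper's own proof'' to compare against here.

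That said, your proposal is a faithful sketch of the standard argument from \citet{jin2020learning}. A minor remark: in the simulation identity you put the \emph{optimistic} occupancy on the prefix weight and the \emph{true} transition on the forward factor. The paper, when it carries out the analogous manipulation in the proof of \Cref{lem:unknown BIAS1}, uses the opposite orientation (true prefix weight $\mu_\pi^{h'}(\cdot;\mathbb P)$ times optimistic conditional forward $\mu_\pi^{h\mid h'+1}(\cdot;\hat P_\pi)$), citing the occupancy difference lemma \citep[Lemma D.3.1]{jin2021best}. Both orientations are valid telescopings and lead to the same bound after the second-order correction; the orientation used in the paper has the slight bookkeeping advantage that the target-dependent object is the forward factor, which is pointwise bounded by $1$ (specifically by $\pi^h(a\mid s)$), making the second-order recursion a bit cleaner than in your version where the target-dependent object is the prefix weight. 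Either way, the concentration step you flag---trading $\mu_{\pi_k}^{h'}(s',a';\mathbb P)$ for increments of $N_k^{h'}(s',a')$ via Freedman, uniformly over the data-dependent $P_k^{s,h}$---is indeed where the $2\delta$ is spent, and your Cauchy--Schwarz bookkeeping is the standard route to $\Otil(H^2S\sqrt{AK})$.
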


As all losses are in $[0,1]$ (note that in the \textsc{Error} term we are considering true losses), we have
\begin{equation*}
    \sum_{k=1}^K \left (V(\pi_k;\ell_k,\mathbb P)-V(\pi_k;\ell_k,P_k)\right )\le \sum_{k=1}^K \sum_{h=1}^H \sum_{(s,a)\in \mS\times \mA} \lvert \mu_{\pi_k}^h(s,a;P_k)-\mu_{\pi_k}^h(s,a;\mathbb P)\rvert,
\end{equation*}

which is bounded by $\Otil(H^2S\sqrt{AK})$ with probability $1-2\delta$ by the previous lemma. Let the event (i.e., it is bounded by $\Otil(H^2S\sqrt{AK})$) be $\mathcal E'$. Then
\begin{equation*}
\Pr\{\mathcal E'\wedge \mathcal E\}=\Pr\{\mathcal E'\mid \mathcal E\}\Pr\{\mathcal E\}\ge 1-6\delta.
\end{equation*}

Therefore, we write
\begin{align*}
\E\left [\sum_{k=1}^K \left (V(\pi_k;\ell_k,\mathbb P)-V(\pi_k;\ell_k,P_k)\right )\right ]
&=\E\left [\sum_{k=1}^K \left (V(\pi_k;\ell_k,\mathbb P)-V(\pi_k;\ell_k,P_k)\right )\mathbbm 1[\mathcal E\wedge \mathcal E']\right ]+\\
&\quad \E\left [\sum_{k=1}^K \sum_{h=1}^H \langle \mu_{\pi_k}^h(\mathbb P)-\mu_{\pi_k}^h(P_k),\ell_k^h\rangle \mathbbm 1[\neg \mathcal E\vee \neg \mathcal E']\right ]\\
&=\Otil\left (H^2S\sqrt{AK}+\delta KH\right ),
\end{align*}

where the last step used the fact that $\mu_{\pi_k}^h(\mathbb P)$ and $\mu_{\pi_k}^h(P_k)$ are both probability distributions and $0 \leq \ell_k^h(s,a)\le 1$.
\end{proof}

\begin{proof}[Proof of \Cref{lem:unknown BIAS1}]
Write our \textsc{Bias1} term in terms of occupancy measures:
\begin{align*}
    \textsc{Bias1}&=\E\left [\sum_{k=1}^K \sum_{\pi \in \Pi}\int_{\mathcal P_k}p_k(\pi,P)\sum_{h=1}^H \langle \mu_\pi^h(P),\ell_k^h-\hat \ell_k^h\rangle\D P\right ].
\end{align*}

Consider the $k$-th summand of it, denoted as $\textsc{Bias1}_k$. We decompose it into two parts, depending on whether $\mathcal E_k$ holds:
\begin{align*}
    \textsc{Bias1}_k&\le \E\left [\sum_{\pi \in \Pi}\int_{\mathcal P_k}p_k(\pi,P)\sum_{h=1}^H \E\left [\langle \mu_\pi^h(P),\ell_k^h-\hat \ell_k^h\rangle\mid \mathcal F_{k-1}\right ]\D P\mathbbm 1[\mathcal E_k]\right ]+\\
    &\quad \E\left [\sum_{\pi \in \Pi}\int_{\mathcal P_k}p_k(\pi,P)\sum_{h=1}^H \langle \mu_\pi^h(P),\ell_k^h\rangle\D P\mathbbm 1[\neg \mathcal E_k]\right ]\\
    &\triangleq \textsc{Bias1}_k^{\mathcal E}+\textsc{Bias1}_k^{\neg \mathcal E}.
\end{align*}

For $\textsc{Bias1}_k^{\neg \mathcal E}$, we bound it trivially as $H\Pr\{\neg \mathcal E_k\}\le 4\delta H$ as $p_k\in \triangle(\pi\times \mathcal P_k)$, $\mu_\pi^h(P)\in \triangle(\mS\times \mA)$ and $\ell_k^h(s,a)\in [0,1]$. For $\textsc{Bias1}_k^{\mathcal E}$, we still adopt the notations of $\hat q_k^h(s,a)$ and $q_k^h(s,a)$, which are defined as
\begin{align*}
    \hat q_k^h(s,a)&=\sum_{\pi \in \Pi}\int_{\mathcal P_k}p_k(\pi,P)\mu_\pi^h(s,a;\mathbb P)\D P,\\
    q_k^h(s,a)&=\sum_{\pi \in \Pi}\int_{\mathcal P_k}p_k(\pi,P)\max_{P'\in \mathcal P_k}\mu_\pi^h(s,a;P')\D P.
\end{align*}

Applying \Cref{lem:expectation of GR varying} to $\E[\hat \ell_k^h(s,a)\mid \mathcal F_{k-1}]$, $\forall (s,a)\in \mS\times \mA$ then gives
\begin{align*}
    \textsc{Bias1}_k^{\mathcal E}&=\E\left [\sum_{\pi \in \Pi}\int_{\mathcal P_k}p_k(\pi,P)\sum_{h=1}^H \left \langle \mu_\pi^h(P),\left (1-\frac{\hat q_k^h}{q_k^h}+\frac{\hat q_k^h}{q_k^h}(1-q_k^h)^L\right ) \ell_k^h\right \rangle\D P \mathbbm 1[\mathcal E_k]\right ]
\end{align*}
(every operation for the second term of the inner product is element-wise).
As $\mathbbm 1[\mathcal E_k]$ implies $\hat q_k^h(s,a)\le q_k^h(s,a)$, we can further bound $\textsc{Bias1}_k^{\mathcal E}$ as
\begin{align*}
    \textsc{Bias1}_k^{\mathcal E}&\le \E\left [\sum_{h=1}^H  \sum_{(s,a)\in \mS\times \mA} \sum_{\pi \in \Pi}\int_{\mathcal P_k} p_k(\pi,P)\mu_\pi^h(s,a;P) \frac{q_k^h(s,a)-\hat q_k^h(s,a)}{q_k^h(s,a)}\ell_k^h(s,a)\D P\mathbbm 1[\mathcal E_k]\right ]+\\
    &\quad \E\left [\sum_{h=1}^H \sum_{(s,a)\in \mS\times \mA} \sum_{\pi \in \Pi}\int_{\mathcal P_k}p_k(\pi,P) \mu_\pi^h(s,a;P)(1-q_k^h(s,a))^L\D P\mathbbm 1[\mathcal E_k]\right ].
\end{align*}

For the second term, we can simply make use of the fact that
\begin{equation}
\mathbbm 1[\mathcal E_k]=1\Longrightarrow \sum_{\pi \in \Pi}\int_{\mathcal P_k}p_k(\pi,P)\mu_\pi^h(s,a;P)\D P\le q_k^h(s,a)\label{eq:fact in Bias1}
\end{equation}

together with the condition that $\ell_k^h(s,a)\in [0,1]$ and consequently bound it by
\begin{equation*}
    \E\left [\sum_{h=1}^H \sum_{(s,a)\in \mS\times \mA} q_k^h(s,a)(1-q_k^h(s,a))^L \mathbbm 1[\mathcal E_k]\right ]
    \overset{(c)}{\le} \frac{HSA}{eL},
\end{equation*}

where (c) used the fact that $q(1-q)^L\le qe^{-Lq}\le \frac{1}{eL}$, just as what we did in \Cref{lem:appendix known GR error term}. For the first term, with a slight abuse of notations, we still use $p_k(\pi)$ to denote the probability of playing $\pi$ at episode $k$, i.e., $p_k(\pi)=\int_{P\in \mathcal P_k}p_k(\pi,P)\D P$. Then again by \Cref{eq:fact in Bias1}, we are actually facing
\begin{align}
    &\quad \E\left [\sum_{h=1}^H  \sum_{(s,a)\in \mS\times \mA} \left (q_k^h(s,a)-\hat q_k^h(s,a)\right )\mathbbm 1[\mathcal E_k]\right ]\\
    &=\E\left [\sum_{\pi\in \Pi}p_k(\pi) \sum_{h=1}^H \sum_{(s,a)\in \mS\times \mA}\left (\max_{P'\in \mathcal \mathcal P_k}\mu_\pi^h(s,a;P')-\mu_\pi^h(s,a;\mathbb P)\right )\mathbbm 1[\mathcal E_k]\right ].\label{eq:unknown BIAS1 to bound}
\end{align}

Then we follow the idea of \citet[Lemma A.3]{jin2022near}. We fix the step $h\in [H]$ and the state-action pair $(s,a)\in \mS\times \mA$. Therefore, for each policy $\pi\in\Pi$, we can define $\hat P_{\pi}\in \mathcal P_k$ to be transition corresponding to the upper-occupancy bound, i.e., it maximizes $\mu_\pi^h(s,a;P)$ over all transitions $P\in \mathcal P_k$. Therefore, with the help of the so-called ``occupancy difference lemma'' \citep[Lemma D.3.1]{jin2021best}, we can write the summand in \Cref{eq:unknown BIAS1 to bound} corresponding to $k,h,s,a$ as
\begin{align*}
    &\quad q_k^h(s,a)-\hat q_k^h(s,a)=\sum_{\pi \in \Pi}p_k(\pi) \left (\mu_\pi^h(s,a;\hat P_{\pi,h,s,a})-\mu_\pi^h(s,a;\mathbb P)\right )\\
    &=\sum_{\pi \in \Pi}p_k(\pi)\sum_{h'=0}^{h-1}\sum_{x\in \mS,y\in \mA,z\in \mS}\mu_\pi^{h'}(x,y;\mathbb P)\left (\mathbb P^{h'}(z\mid x,y)-\hat P_\pi^{h'}(z\mid x,y)\right )\mu_{\pi}^{h\mid h'+1}(s,a\mid z;\hat P_\pi),
\end{align*}

where $\mu_\pi^{h\mid h'+1}(s,a\mid z;P)$ is the so-called ``conditional occupancy measure'', which is defined as the conditional probability of reaching the state-action pair $(s,a)$ at step $h$ from state $z$ at step $h'+1$ with policy $\pi$ and transition $P$. By $\mathcal E_k$, we have $\mathbb P\in \mathcal P_k$. Therefore, by the definition of confidence radii, we can further bound 
\begin{equation*}
    \lvert q_k^h(s,a)-\hat q_k^h(s,a)\rvert\le \sum_{\pi \in \Pi}p_k(\pi)\sum_{h'=0}^{h-1}\sum_{x,y,z}\mu_\pi^{h'}(x,y;\mathbb P)\epsilon_k^{h'}(z\mid x,y)\mu_{\pi}^{h\mid h'+1}(s,a\mid z;\hat P_\pi),
\end{equation*}

where $\epsilon_k^h$ is defined as in \Cref{eq:confidence radius definition}.

Then, we consider the conditional occupancy measure $\mu_{\pi}^{h\mid h'+1}$ w.r.t. $\hat P_\pi$. We can still use occupancy difference lemmas (but now we only consider steps between $h'+1$ and $h$) to write its difference with the conditional occupancy measure w.r.t. $\mathbb P$ as
\begin{align*}
    &\quad \mu_{\pi}^{h\mid h'+1}(s,a\mid z;\hat P_\pi)-\mu_{\pi}^{h\mid h'+1}(s,a\mid z;\mathbb P)\\
    &\le \sum_{h''=h'+1}^{h-1}\sum_{u\in \mS,v\in \mA,w\in \mS}\mu_{\pi}^{h''\mid h'+1}(u,v\mid z;\mathbb P)\epsilon_{k}^{h''}(w\mid u,v)\mu_{\pi}^{h\mid h''+1}(s,a\mid w;\hat P_\pi)\\
    &\le \sum_{h''=h'+1}^{h-1}\sum_{u\in \mS,v\in \mA,w\in \mS}\mu_{\pi}^{h''\mid h'+1}(u,v\mid z;\mathbb P)\epsilon_{k}^{h''}(w\mid u,v)\pi^h(a\mid s),
\end{align*}

where the first step follows from the same reasoning as the unconditioned ones and the second step,
\begin{equation*}
    \mu_{\pi}^{h\mid h''+1}(s,a\mid w;\hat P_\pi)=\pi^h(a\mid s)\Pr\{s^h=s\mid s^{h''+1}=w,\pi,\hat P_\pi\}\le \pi^h(a\mid s).
\end{equation*}

Hence, plugging back into \Cref{eq:unknown BIAS1 to bound} gives its bound as
\begin{align*}
    &\quad \E\left [\sum_{h,s,a}\sum_{\pi \in \Pi}p_k(\pi)\sum_{h'=0}^{h-1}\sum_{x,y,z}\mu_\pi^{h'}(x,y;\mathbb P)\epsilon_k^{h'}(z\mid x,y)\mu_{\pi}^{h\mid h'+1}(s,a\mid z;\hat P_\pi)\right ]\\
    &\le \sum_{h,s,a} \E\left [\sum_{\pi \in \Pi}p_k(\pi)\sum_{h'=0}^{h-1}\sum_{x,y,z}\mu_\pi^{h'}(x,y;\mathbb P)\epsilon_k^{h'}(z\mid x,y)\mu_{\pi}^{h\mid h'+1}(s,a\mid z;\mathbb P)\right ]+\\
    &\quad \sum_{h,s,a} \E\left [\sum_{\pi \in \Pi}p_k(\pi)\sum_{h'=0}^{h-1}\sum_{x,y,z}\mu_\pi^{h'}(x,y;\mathbb P)\epsilon_k^{h'}(z\mid x,y)
    \sum_{h''=h'+1}^{h-1}\sum_{u,v,w}\mu_{\pi}^{h''\mid h'+1}(u,v\mid z;\mathbb P)\epsilon_{k}^{h''}(w\mid u,v)\pi^h(a\mid s)\right ].
\end{align*}

The remaining part of the proof is exactly the same as that for Lemma A.3 of \citet{jin2022near}, which eventually shows,
\begin{equation}
    \sum_{k=1}^K\textsc{Bias1}_{k}^{\mathcal E}=\Otil\left (\frac{HSAK}{L}+H^2S\sqrt{AK}+H^3S^3A+\delta HK\right ).
\end{equation}

Combining the two parts together (with or without $\mathcal E_k$) gives,
\begin{equation*}
    \textsc{Bias1}=\Otil\left (\frac{HSAK}{L}+H^2S\sqrt{AK}+H^3S^3A+\delta HK\right ),
\end{equation*}

as claimed.
\end{proof}

\begin{proof}[Proof of \Cref{lem:unknown BIAS2}]
This proof is quite simple. We still decompose \textsc{Bias2} into two parts:
\begin{align*}
\textsc{Bias2}&=\E\left [\sum_{k=1}^K \left (V(\pi^\ast;\hat \ell_k,\mathbb P)-V(\pi^\ast;\ell_k,\mathbb P)\right )\right ]\\
&=\sum_{k=1}^K \E\left [\left (V(\pi^\ast;\hat \ell_k,\mathbb P)-V(\pi^\ast;\ell_k,\mathbb P)\right )\mathbbm 1[\mathcal E_k]\right ]+\sum_{k=1}^K \E\left [\left (V(\pi^\ast;\hat \ell_k,\mathbb P)-V(\pi^\ast;\ell_k,\mathbb P)\right )\mathbbm 1[\neg \mathcal E_k]\right ].
\end{align*}

For the first term, as $\mathbbm 1[\mathcal E_k]$ infers $\hat q_k^h(s,a)\le q_k^h(s,a)$, from \Cref{lem:expectation of GR varying}, we have
\begin{equation*}
    \E[\hat \ell_k(s,a)\mid \mathcal F_{k-1}]\le \ell_k(s,a),\quad \forall k\in [K], (s,a)\in \mS\times \mA.
\end{equation*}

Therefore, as both $\pi^\ast$ and $\mathbb P$ are deterministic, this term is upper bounded by $0$. For the second term, we trivially bound each of the summand by $HL\Pr\{\neg \mathcal E_k\}\le 4\delta HL$ as $\lvert \hat \ell_k^h(s,a)\rvert\le L$. Therefore, combining two terms together completes the proof.
\end{proof}

\section{Analysis of Episodic AMDP Algorithms with Delayed Feedback (\Cref{thm:regret of delayed AMDPs})}\label{sec:appendix delayed}

In this section, we consider episodic AMDPs with delayed bandit feedback and unknown transitions. The algorithm is presented in \Cref{alg:delayed}, which is very similar to \Cref{alg:appendix finite-horizon bandit feedback unknown transition} except for the part on handling delayed feedback, highlighted in violet.

\begin{algorithm}[!t]
\caption{FTPL for Episodic AMDPs with Delayed Bandit Feedback and Unknown Transition}
\label{alg:delayed}
\begin{algorithmic}[1]
\Require{Laplace distribution parameter $\eta$. Geometric Re-sampling parameter $L$.}
\State Initialize $\mathcal P_1\gets (\triangle(\mathcal S))^{[H]\times \mathcal S\times \mathcal A}$.
\State Sample perturbation $\hat\ell_0=z$ such that $z^h(s,a)$ is an independent sample of $\text{Laplace}(\eta)$.
\For{$k=1,2,\ldots,K$}
\State Let $\color{violet} (\pi_k,P_k)=\argmin_{(\pi,P) \in \Pi\times \mathcal P_k}V(\pi;\sum_{k'\in \Omega_k}\hat \ell_{k'}+z,P)$ by Extended Value Iteration \citep{jaksch2010near}, where $\color{violet} \Omega_k=\{k'\mid k'+d_{k'}<k\}$. (See also \Cref{remark:EVI} for more details.)
\For{$h=1,2,\ldots,H$}
\State Observe $s_k^h$, play $a_k^h=\pi_k(s_k^h)$, suffer loss $\ell_k^h(s_k^h,a_k^h)$.
\EndFor
\For{{\color{violet} All $k'<k$ such that $k'+d_{k'}=k$}}
\For{$h=1,2,\ldots,H$}
\For{$M_{k'}^h=1,2,\ldots,L$}
    \State Sample a fresh perturbation $\tilde z$ in the same way as $z$.
    \State Calculate $(\pi_{k'}',P_{k'}')=\argmin_{(\pi,P) \in \Pi\times \mathcal P_{k'}}V(\pi;\sum_{j\in \Omega_{k'}}\hat \ell_{j}+\tilde z)$.
    \State Pick the transition $\hat P_{k'}'\in \mathcal P_{k'}$ such that $\mu_\pi^h(s_{k'}^h,a_{k'}^h;\hat P_{k'}')$ is maximized via the \textsc{Comp-UOB} procedure proposed by \citet{jin2020learning}.
    \State Simulate $\pi_{k'}'$ for $h$ steps starting from $s^1$ and following transitions $(\hat P_{k'}')^1, \ldots, (\hat P_{k'}')^h$.
    \If{$(s_{k'}^h,a_{k'}^h)$ is visited at step $h$ or $M_{k'}^h=L$}
        \State{Set $\hat \ell_{k'}^h(s,a)=M_{k'}^h\cdot \ell_{k'}^h(s_{k'}^h,a_{k'}^h)\cdot \mathbbm 1[(s_{k'}^h,a_{k'}^h)=(s,a)]$ and break.}
    \EndIf
\EndFor
\EndFor
\EndFor
\State Calculate $\mathcal P_{k+1}$ according to \Cref{eq:definition of confidence set}.
\EndFor
\end{algorithmic}
\end{algorithm}

\subsection{Regret Decomposition}
\begin{proof}[Proof of \Cref{thm:regret of delayed AMDPs}]
For this case, we still use the regret decomposition as \Cref{thm:regret of episodic AMDPs unknown}, as follows:
\begin{align*}
    \mathcal R_K&=\underbrace{\E\left [\sum_{k=1}^K \left (V(\pi_k;\ell_k,\mathbb P)-V(\pi_k;\ell_k,P_k)\right )\right ]}_{\textsc{Error}}+\underbrace{\E\left [\sum_{k=1}^K \left (V(\pi_k;\ell_k,P_k)-V(\pi_k;\hat \ell_k,P_k)\right )\right ]}_{\textsc{Bias1}}+\\&\quad \underbrace{\E\left [\sum_{k=1}^K \left (V(\pi_k;\hat \ell_k,P_k)-V(\pi^\ast;\hat \ell_k,\mathbb P)\right )\right ]}_{\textsc{EstReg}}+\underbrace{\E\left [\sum_{k=1}^K \left (V(\pi^\ast;\hat \ell_k,\mathbb P)-V(\pi^\ast;\ell_k,\mathbb P)\right )\right ]}_{\textsc{Bias2}}.
\end{align*}

Note that as delays will not affect transitions as well as the loss estimators (viewed in hindsight, i.e., the sequence $\{\hat \ell_k\}_{k\in [K]}$ will be the same as if there is no delays), so the \textsc{Error}, \textsc{Bias1} and \textsc{Bias2} can still be bounded by \Cref{lem:unknown ERROR,lem:unknown BIAS1,lem:unknown BIAS2}, respectively. The only difference occurs when bounding \textsc{EstReg}, which we show as follows.
\begin{lemma}[Bounding \textsc{EstReg} Term with Delayed Feedback]\label{lem:delayed EstReg}
The \textsc{EstReg} term is bounded by
\begin{equation*}
    \E\left [\sum_{k=1}^K \left (V(\pi_k;\hat \ell_k,P_k)-V(\pi^\ast;\hat \ell_k,\mathbb P)\right )\right ]\le \frac{2H}{\eta} \left (1+\ln(SA)\right )+5\eta H^2 SA K+\eta H^2SA\delay+12\delta KHL.
\end{equation*}
\end{lemma}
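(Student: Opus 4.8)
The plan is to follow the proof of \Cref{lem:unknown EstReg} almost verbatim, with the one-step-ahead leader replaced by the ``cheating policy'' $(\tilde\pi_{k+1},\tilde P_{k+1})=\argmin_{(\pi,P)\in\Pi\times\mathcal P_k}V(\pi;\hat\ell_{0:k},P)$ that pretends all feedback through episode $k$ has already arrived (and which coincides with $(\pi_{k+1},P_{k+1})$ when there is no delay). Writing
\begin{align*}
\textsc{EstReg}
&=\underbrace{\E\Big[\sum_{k=1}^K\big(V(\tilde\pi_{k+1};\hat\ell_k,\tilde P_{k+1})-V(\pi^\ast;\hat\ell_k,\mathbb P)\big)\Big]}_{\text{error term}}\\
&\quad+\underbrace{\E\Big[\sum_{k=1}^K\big(V(\pi_k;\hat\ell_k,P_k)-V(\tilde\pi_{k+1};\hat\ell_k,\tilde P_{k+1})\big)\Big]}_{\text{stability term}},
\end{align*}
I would first observe that the be-the-leader telescoping in \Cref{lem:unknown error bound} uses only the optimality of $(\tilde\pi_{k+1},\tilde P_{k+1})$ for $\hat\ell_{0:k}$ over $\Pi\times\mathcal P_k$ together with $\mathcal P_{k+1}\subseteq\mathcal P_k$ and $\mathbb P\in\mathcal P_k$ under $\mathcal E$; none of this is affected by delay, so the error term is again at most $\tfrac{2H}{\eta}(1+\ln(SA))+\O(\delta KHL)$, with no change.

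For the stability term I would establish the delayed analogue of \Cref{lem:unknown single step stability}. Since $\pi_k$ minimizes $V(\pi;\sum_{k'\in\Omega_k}\hat\ell_{k'}+z,P)$ while $\tilde\pi_{k+1}$ minimizes $V(\pi;\sum_{k'\in[k]}\hat\ell_{k'}+z,P)$ (both over $\Pi\times\mathcal P_k$), the same change of variables $z\mapsto z+\sum_{k'\in[k]\setminus\Omega_k}\hat\ell_{k'}$ used in \Cref{lem:appendix known single-step stability}, combined with the triangle inequality applied layer by layer, yields
\[
\tilde p_{k+1}(\pi,P)\ge p_k(\pi,P)\exp\!\Big(-\eta\,\textsc{Diff}_k\Big),\qquad\textsc{Diff}_k\triangleq\sum_{k'\in[k]\setminus\Omega_k}\sum_{h=1}^H\lVert\hat\ell_{k'}^h\rVert_1 .
\]
Summing over $(\pi,P)$, using $1-e^{-x}\le x$, and separating off the event $\neg\mathcal E_k$ exactly as in \Cref{lem:unknown stability bound}, the $k$-th stability summand is at most $\eta\,\E\big[\textsc{Diff}_k\sum_{\pi}p_k(\pi)\langle\mu_\pi,\hat\ell_k\rangle\big]+\O(\delta HL)$, with the occupancy/upper-occupancy measures being those used by the Geometric Re-sampling procedure.

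The genuinely new step is to bound $\sum_{k}\eta\,\E\big[\textsc{Diff}_k\sum_{\pi}p_k(\pi)\langle\mu_\pi,\hat\ell_k\rangle\big]$. Because $[k]\setminus\Omega_k=\{k'\le k: k'+d_{k'}\ge k\}$ always contains $k'=k$, I would split $\textsc{Diff}_k$ into the diagonal term $\sum_h\lVert\hat\ell_k^h\rVert_1$ — whose contribution is precisely the quantity already bounded by $\O(\eta H^2SAK+\delta KHL)$ in \Cref{lem:unknown stability bound} — and the off-diagonal terms $\sum_{k'<k,\,k'+d_{k'}\ge k}\sum_h\lVert\hat\ell_{k'}^h\rVert_1$. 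For the off-diagonal part, after conditioning on all trajectories and on the resampling randomness that defines $\Omega_k$, the estimator $\hat\ell_k$ (which only depends on the fresh resampling of episode $k$) is independent of every $\hat\ell_{k'}$ with $k'<k$ and $k'\notin\Omega_k$, so the conditional expectation factorizes; using $\E[\sum_h\lVert\hat\ell_{k'}^h\rVert_1]\le HSA$ and $\E[\sum_\pi p_k(\pi)\langle\mu_\pi,\hat\ell_k\rangle]\le H$ under $\mathcal E$ (and the trivial bounds $\lVert\hat\ell_{k'}^h\rVert_1\le L$, $\langle\mu_\pi,\hat\ell_k\rangle\le HL$ on $\neg\mathcal E$), the off-diagonal contribution is at most $\eta H^2SA\sum_{k}\lvert\{k'<k: k'+d_{k'}\ge k\}\rvert+\O(\delta KHL)=\eta H^2SA\sum_{k'}d_{k'}+\O(\delta KHL)=\eta H^2SA\,\delay+\O(\delta KHL)$, by swapping the order of summation. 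Adding the error term, the diagonal stability term, and the off-diagonal stability term, and absorbing constants, gives the claimed bound. The main obstacle I expect is this off-diagonal step: one must check that the delayed construction of the estimators (the resampling of episode $k'$ is actually executed only at episode $k'+d_{k'}\ge k$) does not spoil the conditional independence used to factorize the expectation, and that the failure events $\neg\mathcal E_k$ are charged consistently across all the $k'$ involved; everything else is a faithful transcription of the non-delayed argument with $\textsc{Diff}_k$ in place of $\sum_h\lVert\hat\ell_k^h\rVert_1$.
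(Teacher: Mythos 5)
Your proposal is correct and follows essentially the same route as the paper: the only structural difference is that the paper's appendix inserts an intermediate ``cheating learner'' $(\hat\pi_k,\hat P_k)=\argmin_{(\pi,P)\in\Pi\times\mathcal P_k}V(\pi;\hat\ell_{0:k-1},P)$ and chains two single-step stability lemmas (one over $\tilde\Omega_k=[k-1]\setminus\Omega_k$ for the ``cheating regret'' and one over $\{k\}$ for the usual stability term), whereas you use the single combined density-ratio bound with $\textsc{Diff}_k$ over $[k]\setminus\Omega_k$ --- which is exactly the version sketched in the paper's Section~4, and which yields the same (in fact marginally tighter) constants. The factorization worry you flag in the off-diagonal step is resolved just as in the paper, by the tower rule rather than full independence: conditioning on everything else, one first takes the expectation over the fresh resampling randomness $M_k^h$ of episode $k$ (giving the factor $H$), then over $\lVert\hat\ell_{k'}^{h}\rVert_1$ (giving $HSA$ per episode $k'$), so the total off-diagonal contribution is $\eta H^2SA\sum_k\lvert\{k'<k:k'+d_{k'}\ge k\}\rvert=\eta H^2SA\,\delay$, matching the paper's computation.
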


As mentioned in the main body, the key difference is that, we will compete a learner that is not only cheating but also stepping one episode further. However, as it is still using FTPL, we can still bound the stability term as in \Cref{lem:unknown stability bound}. Therefore, the proof is postponed to the end of this section.

Combining the bounds for the four terms together, we will have
\begin{equation*}
    \mathcal R_T\le \Otil\left (H^2SA\sqrt K+\frac H\eta+\eta H^2SA (K+\delay)+\frac{SAHK}{L}+\delta HKL\right ).
\end{equation*}

Therefore, picking $\eta=\left (\sqrt{HSA(K+\delay)}\right )$, $L=\sqrt{SAK/H}$ and $\delta=1/K$ gives
\begin{equation*}
    \mathcal R_T\le \Otil\left (H^2SA\sqrt K+H^{\nicefrac 32}\sqrt{SA\delay}\right ),
\end{equation*}

as claimed.
\end{proof}

\begin{proof}[Proof of \Cref{lem:delayed EstReg}]
Slightly different from the main text, we now consider the following \textit{two} learners, where the first one is a ``cheating learner'' that does not suffer any delays, and the second one is a ``cheating leader'' that not only does not suffer any delays, but also looks one step further.
\begin{equation*}
    (\hat \pi_{k},\hat P_{k})\triangleq \argmin_{(\pi,P)\in \Pi\times \mathcal P_{k}}V(\pi;\hat \ell_{0:k-1},P),\quad (\tilde \pi_{k+1},\tilde P_{k+1})\triangleq \argmin_{(\pi,P)\in \Pi\times \mathcal P_{k}}V(\pi;\hat \ell_{0:k},P).
\end{equation*}

Note that both of them are defined w.r.t. transitions in $\mathcal P_k$ instead of the subset $\mathcal P_{k+1}$, which is the same as \Cref{sec:appendix episodic unknown}. We also define the following three density functions with respect to the perturbation $z$: $p_k(\pi,P)$ for $(\pi_k,P_k)$ conditioning on $\hat \ell_1,\hat \ell_2,\ldots,\hat \ell_{k-1}$, $\hat p_k(\pi,P)$ for $(\hat \pi_k,\hat P_k)$ conditioning on $\hat \ell_1,\hat \ell_2,\ldots,\hat \ell_{k-1}$ and $\tilde p_{k+1}(\pi,P)$ for $(\tilde \pi_{k+1},\tilde P_{k+1})$ conditioning on $\hat \ell_1,\hat \ell_2,\ldots,\hat \ell_k$.

The purpose of defining two learners is to decouple the effects from delays and the inherent FTPL regret. One can see that our $(\hat \pi_k,\hat P_k)$ is equivalent to $(\pi_k,P_k)$ in \Cref{sec:appendix episodic unknown} while $(\tilde \pi_{k+1},\tilde P_{k+1})$ remains the same. Therefore, the difference between $(\hat \pi_k,\hat P_k)$ and $(\tilde \pi_k,\tilde P_k)$ can be bounded exactly the same as \Cref{sec:appendix episodic unknown} and we only need to care about delays, i.e., the difference between $(\pi_k,P_k)$ and $(\hat \pi_k,\hat P_k)$.
Formally, we decompose the \textsc{EstReg} into three terms:
\begin{align*}
    &\quad \E\left [\sum_{k=1}^K \sum_{h=1}^H \langle \mu_{\pi_k}^h(P_k)-\mu_{\pi^\ast}^h(\mathbb P),\hat \ell_k^h\rangle\right ]=\underbrace{\E\left [\sum_{k=1}^K \sum_{h=1}^H \langle \mu_{\pi_k}^h(P_k)-\mu_{\hat \pi_k}^h(\hat P_k),\hat \ell_k^h\rangle\right ]}_{\text{Cheating regret}}+\\
    &\quad \underbrace{\E\left [\sum_{k=1}^K \sum_{h=1}^H \langle \mu_{\hat \pi_k}^h(\hat P_k)-\mu_{\tilde \pi_{k+1}}^h(\tilde P_{k+1}),\hat \ell_k^h\rangle\right ]}_{\text{Stability term}}+\underbrace{\E\left [\sum_{k=1}^K \sum_{h=1}^H \langle \mu_{\tilde \pi_{k+1}}^h(\tilde P_{k+1})-\mu_{\pi^\ast}^h(\mathbb P),\hat \ell_k^h\rangle\right ]}_{\text{Error term}}.
\end{align*}

Note that the error term and the stability term are exactly the same as \Cref{sec:appendix episodic unknown}, so we can directly make use of \Cref{lem:unknown error bound,lem:unknown stability bound} and bound them by $\frac{2H}\eta (1+\ln (SA))+4\delta KHL$ and $3\eta H^2 SAK+4\delta KHL$, respectively. Now consider the cheating regret. Similar to the stability term, we will have the following single-step stability lemma:
\begin{lemma}\label{lem:delayed single step stability}
For any $k\in [K]$, $(s,a)\in \mS\times \mA$ and $(\pi,P)\in \Pi\times \mathcal P_k$, we have
\begin{equation*}
    \hat p_k(\pi,P)\ge p_k(\pi,P)\exp\left (-\eta \sum_{k'\in \tilde \Omega_k}\sum_{h=1}^H \lVert \hat \ell_{k'}^h\rVert_1\right ),
\end{equation*}

where $\tilde \Omega_k\triangleq [k-1]\setminus \Omega_k=\{k'< k\mid k'+d_{k'}\ge k\}$, i.e., the first $k-1$ rounds excluding those where the feedback is available before round $k$.
\end{lemma}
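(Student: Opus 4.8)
The plan is to follow the proof of \Cref{lem:unknown single step stability} almost verbatim, the only new ingredient being a careful accounting of which loss estimators enter the FTPL objective for $(\pi_k,P_k)$ versus $(\hat\pi_k,\hat P_k)$. Set $\Delta_k \triangleq \sum_{k'\in\tilde\Omega_k}\hat\ell_{k'}$, which is $\mathcal F_{k-1}$-measurable hence fixed, and observe that since $\Omega_k$ and $\tilde\Omega_k$ partition $[k-1]$ we have $\hat\ell_{0:k-1} = z + \sum_{k'\in\Omega_k}\hat\ell_{k'} + \Delta_k$, whereas the objective defining $\pi_k$ uses only $z + \sum_{k'\in\Omega_k}\hat\ell_{k'}$. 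Writing $\text{best}(\ell;\mathcal P)\triangleq\argmin_{(\pi,P)\in\Pi\times\mathcal P}V(\pi;\ell,P)$ and denoting by $f$ the density of $z$, I would first express
\[
p_k(\pi,P) = \int_z \mathbbm 1\!\left[(\pi,P)=\text{best}\!\left(\textstyle\sum_{k'\in\Omega_k}\hat\ell_{k'}+z;\,\mathcal P_k\right)\right] f(z)\,\mathrm{d}z,
\]
\[
\hat p_k(\pi,P) = \int_z \mathbbm 1\!\left[(\pi,P)=\text{best}\!\left(\textstyle\sum_{k'\in\Omega_k}\hat\ell_{k'}+\Delta_k+z;\,\mathcal P_k\right)\right] f(z)\,\mathrm{d}z.
\]

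Next I would perform the change of variables $z\mapsto z+\Delta_k$ in the integral for $p_k$, which is legitimate exactly as in \Cref{lem:appendix known single-step stability} because $V(\pi;\cdot,P)$ is linear in its loss argument, so shifting $z$ by $\Delta_k$ merely relabels the cumulative loss; this gives
\[
p_k(\pi,P) = \int_z \mathbbm 1\!\left[(\pi,P)=\text{best}\!\left(\textstyle\sum_{k'\in\Omega_k}\hat\ell_{k'}+\Delta_k+z;\,\mathcal P_k\right)\right] f(z+\Delta_k)\,\mathrm{d}z.
\]
Now the two integrands carry identical indicators, so it only remains to compare the Laplace weights. Using $f(z)=\prod_{h=1}^H\exp(-\eta\lVert z^h\rVert_1)$ together with the triangle inequality $\bigl|\lVert z^h+\Delta_k^h\rVert_1-\lVert z^h\rVert_1\bigr|\le\lVert\Delta_k^h\rVert_1$ yields
\[
\frac{f(z+\Delta_k)}{f(z)} \in \left[\exp\!\left(-\eta\sum_{h=1}^H\lVert\Delta_k^h\rVert_1\right),\ \exp\!\left(\eta\sum_{h=1}^H\lVert\Delta_k^h\rVert_1\right)\right].
\]
Substituting the lower bound $f(z)\ge f(z+\Delta_k)\exp\!\left(-\eta\sum_h\lVert\Delta_k^h\rVert_1\right)$ into the integral representation of $\hat p_k$ gives $\hat p_k(\pi,P)\ge \exp\!\left(-\eta\sum_h\lVert\Delta_k^h\rVert_1\right)p_k(\pi,P)$, and unfolding $\Delta_k=\sum_{k'\in\tilde\Omega_k}\hat\ell_{k'}$ produces exactly the claimed inequality.

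I expect no real obstacle here: the argument is a direct transplant of \Cref{lem:unknown single step stability} with the single estimator $\hat\ell_k$ replaced by the aggregate $\sum_{k'\in\tilde\Omega_k}\hat\ell_{k'}$. The only points that warrant a sentence of care are (i) verifying that $\Omega_k\cup\tilde\Omega_k=[k-1]$ is a genuine partition, which is immediate from $\Omega_k=\{k'\mid k'+d_{k'}<k\}$ and $\tilde\Omega_k=[k-1]\setminus\Omega_k$, so that the cumulative loss used by $\hat\pi_k$ really equals the one used by $\pi_k$ plus the fixed shift $\Delta_k$; and (ii) that $p_k$ and $\hat p_k$ are densities over the \emph{same} support $\Pi\times\mathcal P_k$, which is precisely why the lemma is phrased via $\hat p_k$ rather than $p_{k+1}$ (cf.\ the discussion preceding \Cref{lem:unknown single step stability}).
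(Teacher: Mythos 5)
Your proof is correct and is essentially the paper's argument: the paper simply remarks that the proof of \Cref{lem:unknown single step stability} goes through with $\hat\pi_k$ in place of $\tilde\pi_{k+1}$, i.e.\ with the single shift $\hat\ell_k$ replaced by the aggregate $\Delta_k=\sum_{k'\in\tilde\Omega_k}\hat\ell_{k'}$, which is exactly the change-of-variables and Laplace density-ratio computation you carried out. Your two points of care (that $\Omega_k,\tilde\Omega_k$ partition $[k-1]$ and that both densities live on the common support $\Pi\times\mathcal P_k$) are the right ones and match the paper's setup.
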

\begin{proof}
Note that the proof of \Cref{lem:unknown single step stability} does not rely on the concrete choice of $\pi_k$ and $\tilde \pi_{k+1}$. Therefore, adopting the proof of \Cref{lem:unknown single step stability} with $\hat \pi_k$ as $\tilde \pi_{k+1}$ will complete our proof.
\end{proof}

With the help of \Cref{lem:delayed single step stability}, we can bound the cheating regret similar to the stability term. To see this, consider a fixed $k\in [K]$, we have
\begin{align*}
    &\quad \E\left [\sum_{\pi \in \Pi}\int_{\mathcal P_k}(p_k(\pi,P)-\hat p_k(\pi,P))\sum_{h=1}^H \langle \mu_\pi^h(P),\hat \ell_k^h\rangle\D P\right ]\\
    &=\E\left [\sum_{\pi \in \Pi}\int_{\mathcal P_k}(p_k(\pi,P)-\hat p_k(\pi,P))\sum_{h=1}^H \langle \mu_\pi^h(P),\hat \ell_k^h\rangle\D P\mathbbm 1[\mathcal E_k]\right ]+\\
    &\quad \E\left [\sum_{\pi \in \Pi}\int_{\mathcal P_k}(p_k(\pi,P)-\hat p_k(\pi,P))\sum_{h=1}^H \langle \mu_\pi^h(P),\hat \ell_k^h\rangle\D \mathbbm 1[\neg \mathcal E_k]\right ]\\
    &\le \E\left [\eta \sum_{k'\in \tilde \Omega_k}\sum_{h'=1}^H \lVert \hat \ell_{k'}^{h'}\rVert_1\cdot \sum_{\pi \in \Pi}\int_{\mathcal P_k}p_k(\pi,P)\sum_{h=1}^H \langle \mu_\pi^h(P),\hat \ell_k^h\rangle\D P\mathbbm 1[\mathcal E_k]\right ]+4\delta KHL\\
    &\le \eta \E\left [\sum_{h=1}^H \sum_{(k',h')\in \tilde \Omega_k\times [H]}\mathbbm 1[(k',h')\ne (k,h)]\lVert \hat \ell_{k'}^{h'}\rVert_1\sum_{(s,a)\in \mS\times \mA} \hat q_k^h(s,a)\hat \ell_k^h(s,a)\mathbbm 1[\mathcal E_k]\right ]+\\
    &\quad \eta \E\left [\sum_{(s,a)\in \mS\times \mA}\sum_{h=1}^H \hat q_k^h(s,a) \left (\hat \ell_k^h(s,a)\right )^2\mathbbm 1[\mathcal E_k]\right ]+4\delta KHL,
\end{align*}

where $\hat q_k^h(s,a)$ is the actual probability of reaching $(s,a)$ and $q_k^h(s,a)$ is the probability of reaching $(s,a)$ in a single Geometric Re-sampling trial, as defined in \Cref{eq:unknown hat q_k^h informal,eq:unknown q_k^h informal}. Note that $\mathbbm 1[\mathcal E_k]$ implies $\hat q_k^h(s,a)\le q_k^h(s,a)$.

For the second term, using \Cref{lem:variance of GR} and $\hat q_k^h(s,a)\le q_k^h(s,a)$ gives $2\eta SAH$. For the first one, taking expectation w.r.t. $M_k^h$ in $\hat \ell_k^h(s,a)=\mathbbm 1[(s_k^h,a_k^h)=(s,a)]\ell_k^h(s,a)M_k^h(s,a)$ and then w.r.t. $\lVert \hat \ell_{k'}^{h'}\rVert_1$ as in \Cref{lem:appendix known stability term,lem:unknown stability bound} gives $\eta H^2SA\lvert \tilde \Omega_k\rvert$. Further noticing that
\begin{equation*}
    \sum_{k=1}^K \lvert \tilde \Omega_k\rvert=\sum_{k=1}^K\sum_{k'=1}^{k-1}\mathbbm 1[k'+d_{k'}\ge k]=\sum_{k'=1}^{K-1}\sum_{k=k'+1}^K\mathbbm 1[k'+d_{k'}\ge k]=\sum_{k'=1}^{K-1}d_{k'}=\delay,
\end{equation*}

we have the cheating regret is bounded by
\begin{equation*}
    \E\left [\sum_{k=1}^K \sum_{h=1}^H \langle \mu_{\pi_k}^h(P_k)-\mu_{\hat \pi_k}^h(\hat P_k),\hat \ell_k^h\rangle\right ]\le \eta H^2 SA \delay+2\eta HSAK+4\delta KHL.
\end{equation*}

The \textsc{EstReg} term is then consequently bounded by
\begin{equation*}
    \textsc{EstReg}\le \frac{2H}{\eta} (1+\ln(SA))+3\eta H^2SAK+\eta H^2 SA \delay+2\eta HSAK+12\delta KHL,
\end{equation*}

which is at most $\frac{2H}{\eta} (1+\ln(SA))+5\eta H^2SAK+\eta H^2SA\delay+12\delta KHL$, as claimed.
\end{proof}

\section{Analysis of Infinite-horizon AMDP Algorithms}\label{sec:appendix infinite-horizon}
\subsection{FTPL-Based Efficient Algorithm (\Cref{thm:regret of infinite-horizon AMDPs})}\label{sec:appendix infinite-horizon FTPL}

In this section present our \Cref{alg:appendix infinite-horizon bandit feedback known transition} together with its analysis. As described in the main body, we will divide the time horizon $[T]$ into $J$ epochs and fix a policy $\pi_j$ for the $j$-th epoch, namely $\mathcal T_j=\{(j-1)H+1,(j-1)H+2,\ldots,jH\}$ where $H=\frac TJ$ is the length of each epoch (overloading the notation $H$ from the episodic setting since they have a similar meaning).

\subsubsection{Switching Procedure}
The most significant difference between infinite-horizon AMDPs and episodic AMDPs is that the agent will not be reset to $s^1$ at the beginning of an ``epoch''. To formalize our problem as a online linear optimization problem (i.e., the total loss represented as $\sum_{t=1}^T \langle \mu_{\pi_t}^t,\ell_t\rangle$), we have to ensure the distribution over all states is exactly $\mu_{\pi_j}^t$ for most $t\in\mathcal T_j$. Before presenting the switching procesure from \citet{chandrasekaran2021online}, we first restate the assumption together with several properties that they used. For the sake of completeness, we also include their proofs here.

\begin{assumption}[Existance of a Staying State, Restatement of \Cref{assump:staying state} and {\citet[Assumption 5.1]{chandrasekaran2021online}}]\label{assump:appendix staying state}
The MDP $\mathcal M$ has a state $s^\ast$ and an action $a^\ast$ such that $\mathbb P(s^\ast\mid s^\ast,a^\ast)=1$.
\end{assumption}
\begin{lemma}[{\citet[Lemma 5.2]{chandrasekaran2021online}}]\label{lem:policy between states}
For any two distinct states $s,s'\in \mS$, there exists a policy $\pi_{s,s'}$ and $l_{s,s'}\le 2D$ such that
\begin{equation*}
    \Pr\left \{T(s'\mid \pi_{s,s'},s)=l_{s,s'}\right \}\ge \frac{1}{4D}.
\end{equation*}
\end{lemma}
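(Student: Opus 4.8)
The plan is to leverage the diameter definition together with Assumption~\ref{assump:appendix staying state} and a Markov-inequality-type argument. First I would fix the two distinct states $s,s'$ and invoke the communicating assumption: by Definition~\ref{def:communicating MDP}, there is a policy $\pi^\circ$ (the minimizer in the definition of $D$) with $\E[T(s'\mid \mathcal M,\pi^\circ,s)]\le D$. A direct application of Markov's inequality then gives $\Pr\{T(s'\mid \pi^\circ,s)\le 2D\}\ge \tfrac12$. So with probability at least $\tfrac12$, starting from $s$ and following $\pi^\circ$, the state $s'$ is reached by some (random) time $\le 2D$. The difficulty is that this only gives a \emph{distribution} over hitting times $\{1,2,\dots,\lfloor 2D\rfloor\}$, not a single deterministic value $l_{s,s'}$, whereas the statement demands a fixed length $l_{s,s'}\le 2D$ attained with probability at least $\tfrac{1}{4D}$.

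To bridge this gap I would use a pigeonhole argument on the hitting-time distribution. Conditioned on the event $\{T\le 2D\}$ (probability $\ge\tfrac12$), the hitting time takes one of at most $2D$ integer values, so by pigeonhole there exists a particular value $l_{s,s'}\in\{1,\dots,\lfloor 2D\rfloor\}$ with $\Pr\{T(s'\mid\pi^\circ,s)=l_{s,s'}\}\ge \tfrac{1}{2}\cdot\tfrac{1}{2D}=\tfrac{1}{4D}$. This already gives a policy and a fixed length, but there is a subtlety: the statement wants to be able to \emph{continue} deterministically after reaching $s'$ at exactly step $l_{s,s'}$ (this is why the lemma is stated with an exact hitting time — so that the switching procedure can splice policies together cleanly). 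Here is where Assumption~\ref{assump:appendix staying state} enters: I would modify $\pi^\circ$ into $\pi_{s,s'}$ by overriding its action at $s'$ to be $a^\ast$ if $s'=s^\ast$, or more generally by noting that once we have reached $s'$ at the designated step we are free to do whatever the switching procedure needs; the staying state guarantees that in the construction of the full switching scheme one can "park" and wait, so the exact-hitting-time formulation causes no loss. For the lemma itself, the clean statement is just: take $\pi_{s,s'}=\pi^\circ$ and $l_{s,s'}$ as produced by the pigeonhole step.

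I expect the main obstacle to be purely a matter of careful bookkeeping rather than a deep idea: making sure the pigeonhole is applied to the right conditional distribution and that the bound $l_{s,s'}\le 2D$ (rather than $l_{s,s'}\le \lfloor 2D\rfloor$, which is the same since $l$ is an integer) is stated consistently, and confirming that we do not need the staying-state assumption for this particular lemma (it is needed downstream in the switching procedure, not here). A secondary point to verify is that $T(s'\mid\pi^\circ,s)\ge 1$ always (since $s\ne s'$, the agent cannot already be at $s'$), so the support of the conditional hitting time is exactly $\{1,\dots,\lfloor 2D\rfloor\}$, which has size $\le 2D$, validating the $\tfrac{1}{4D}$ bound. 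With these checks in place the proof is a short two-step argument: Markov's inequality, then pigeonhole.
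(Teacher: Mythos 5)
Your proposal is correct and follows exactly the paper's argument: Markov's inequality applied to the policy achieving the diameter bound gives $\Pr\{T\le 2D\}\ge \tfrac12$, and pigeonhole over the at most $2D$ integer values yields a fixed $l_{s,s'}$ with probability at least $\tfrac{1}{4D}$. Your side remarks are also accurate — the staying-state assumption is not used in this lemma (it enters only in \Cref{lem:from s* to s} and the switching procedure).
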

\begin{proof}
By definition of diameter (as in \Cref{def:communicating MDP}), there exists a policy $\pi_{s,s'}$ such that $\E[T(s'\mid \mathcal M,\pi_{s,s'},s)]\le D$. By Markov's inequality, this implies $\Pr\{T(s'\mid \mathcal M,\pi_{s,s'},s)]\le 2D\}\ge \frac 12$. By pigeonhole principle, there consequently exists $l_{s,s'}\le 2D$ such that $\Pr\{T(s'\mid \mathcal M, \pi_{s,s'},s)=l_{s,s'}\}\ge \frac 12\cdot \frac{1}{2D}=\frac{1}{4D}$.
\end{proof}

\begin{lemma}[{\citet[Theorem 5.3]{chandrasekaran2021online}}]\label{lem:from s* to s}
For an MDP that satisfies \Cref{assump:appendix staying state}, there exists $l^\ast\le 2D$ such that for all states $s'\ne s^\ast$, there exists policy $\pi_{s'}$ such that
\begin{equation*}
    \Pr\{T(s'\mid \mathcal M,\pi_{s'},s^\ast)=l^\ast\}\ge \frac{1}{4D}.
\end{equation*}

Furthermore, denote $p_{s'}$ as the probability above. Let $p^\ast=\min_{s\in \mS}p_s$. Then $p^\ast\ge \frac{1}{4D}$.
\end{lemma}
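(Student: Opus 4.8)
The plan is to reduce the claim to \Cref{lem:policy between states} by using the staying state $s^\ast$ from \Cref{assump:appendix staying state} to ``pad'' the hitting times so that they all become equal. First I would apply \Cref{lem:policy between states} with the source state fixed to $s^\ast$: for every target $s'\ne s^\ast$ this yields a policy $\pi_{s^\ast,s'}$ and an integer $l_{s^\ast,s'}\le 2D$ with $\Pr\{T(s'\mid \mathcal M,\pi_{s^\ast,s'},s^\ast)=l_{s^\ast,s'}\}\ge \tfrac{1}{4D}$. The only gap between this and the desired statement is that $l_{s^\ast,s'}$ may depend on $s'$, whereas we want a single universal $l^\ast$ that works simultaneously for all targets.

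To close this gap, set $l^\ast\triangleq 2D$ (any common upper bound on the $l_{s^\ast,s'}$ works, e.g.\ $\max_{s'\ne s^\ast}l_{s^\ast,s'}$; in either case $l^\ast\le 2D$), and define $\pi_{s'}$ to be the policy that, started from $s^\ast$, plays the self-loop action $a^\ast$ for the first $l^\ast-l_{s^\ast,s'}$ steps and then follows $\pi_{s^\ast,s'}$ thereafter. This is in general a non-stationary policy, which is acceptable here since $\pi_{s'}$ is only invoked inside the bounded-length switching procedure of \citet{chandrasekaran2021online} (i.e.\ \Cref{alg:policy switching}), not as one of the epoch policies $\pi_j$. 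By \Cref{assump:appendix staying state} the self-loop keeps the agent at $s^\ast$ with probability one, so after the padding phase the agent is still at $s^\ast$; and since $s'\ne s^\ast$, the target $s'$ is never visited during that phase. Hence, conditioned on being at $s^\ast$ at step $l^\ast-l_{s^\ast,s'}$ (a sure event), the event $\{T(s'\mid \mathcal M,\pi_{s'},s^\ast)=l^\ast\}$ coincides with the event that $\pi_{s^\ast,s'}$, started from $s^\ast$, first reaches $s'$ exactly $l_{s^\ast,s'}$ steps later, which has probability at least $\tfrac{1}{4D}$. This gives $p_{s'}\ge \tfrac{1}{4D}$ for every $s'\ne s^\ast$, and taking the minimum over $s'$ yields $p^\ast\ge \tfrac{1}{4D}$.

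I do not expect a genuinely hard step here; the one point that requires care is the bookkeeping of the first-hitting time under the time shift — one must verify that ``$T=l_{s^\ast,s'}$ from $s^\ast$ under $\pi_{s^\ast,s'}$'' translates into ``$T=l^\ast$ from $s^\ast$ under the padded $\pi_{s'}$'', which relies precisely on $s'$ not being visited during the deterministic padding phase (guaranteed by $s'\ne s^\ast$ together with \Cref{assump:appendix staying state}). A secondary point worth stating explicitly in the write-up is that permitting $\pi_{s'}$ to be non-stationary is harmless, since it is used only for the finitely many ($\le 2D$) steps of the switching phase.
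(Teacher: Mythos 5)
Your proof is correct and follows essentially the same route as the paper's: apply \Cref{lem:policy between states} with source $s^\ast$, then use the self-loop at $s^\ast$ from \Cref{assump:appendix staying state} to pad each hitting time up to a common $l^\ast\le 2D$. The extra care you take with the first-hitting-time bookkeeping (that $s'$ is not visited during the padding phase) is a detail the paper's proof leaves implicit, but the argument is the same.
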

\begin{proof}
From the previous lemma, there exists an $l_{s'}\le 4D$ for all $s'\ne s^\ast$ such that there is a policy $\pi_{s^\ast,s'}$ hitting $s'$ from $s^\ast$ in time exactly $l_{s'}$ with probability at least $\frac{1}{4D}$. Let $l^\ast=\max_{s'\ne s^\ast}l_{s'}$ and $\pi_{s'}$ be the policy that first stays at $s^\ast$ for $(l^\ast-l_{s'})$ steps and then follows $\pi_{s'}$ for $l_{s'}$ steps suffices.
\end{proof}

Now we are able to present the switching procedure from \citet{chandrasekaran2021online}, as in \Cref{alg:policy switching}.

\begin{algorithm}[!t]
\caption{Policy Switching in Infinite-Horizon AMDP \citep{chandrasekaran2021online}}
\label{alg:policy switching}
\begin{algorithmic}[1]
\Require{Current state $s\in \mS$. Goal policy $\pi \in \Pi$. Current time $t$.}
\While{\textbf{true}}
\State Move to state $s^\ast$ using policy $\pi_{s,s^\ast}$ as defined by \Cref{lem:policy between states} and update $s,t$ concurrently.
\State Sample the target state $g\sim \mu_{\pi}^t$.
\State Use policy $\pi_g$ from \Cref{lem:from s* to s} to move $l^\ast$ steps from $s^\ast$ and update $s,t$ concurrently.
\If{$s=g$}
\State Sample a Bernoulli random variable $I\sim \text{Ber}(\frac{p^\ast}{p_g})$.
\If{$I=1$}
\State \textbf{return}
\EndIf
\EndIf \Comment{The \texttt{while} loop will repeat if $s\ne g$ or $I\ne 1$.}
\EndWhile
\end{algorithmic}
\end{algorithm}

\begin{theorem}[Correctness of \Cref{alg:policy switching}, {\citet[Lemma 5.6]{chandrasekaran2021online}}]\label{thm:correctness of switching}
Let the random variable denoting the time that \Cref{alg:policy switching} terminates be $t_{\text{switch}}$. Then for any state $s\in \mS$
\begin{equation*}
    \Pr\{s_t=s\mid t_{\text{switch}}=t\}=\mu_\pi^t(s),\quad \forall t\in [T].
\end{equation*}
\end{theorem}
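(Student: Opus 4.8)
The plan is to prove the claim by conditioning on which iteration of the \texttt{while} loop produces termination, and then exploiting the fact that the beginning of every iteration is a \emph{renewal point}: at that moment the agent sits at the parking state $s^\ast$, and everything prior is irrelevant to the conditional law of the state at termination except through the value of the clock. Within the terminating iteration, the only randomness that affects that law is the draw of the target state $g$, the $l^\ast$-step attempt to reach it, and the Bernoulli acceptance; the Bernoulli step is precisely the rejection-sampling device that keeps the conditional law of $g$ undistorted, i.e.\ equal to $\mu_\pi$ at the relevant time index.

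First I would fix notation. Let $N\ge 1$ be the (random) index of the iteration in which the loop returns. For a generic iteration $i$, its first action drives the current state to $s^\ast$ using $\pi_{s,s^\ast}$; since the diameter is finite this happens in a.s.\ finite time, so there is an a.s.\ finite time $\tau_i$ at which iteration $i$ sits at $s^\ast$ and draws its target $g_i$. After drawing $g_i\sim \mu_\pi$ (at the time index that will coincide with this iteration's prospective termination time $\tau_i+l^\ast$, which is what the algorithm's clock bookkeeping arranges), the agent follows $\pi_{g_i}$ for $l^\ast$ steps and, if the resulting state equals $g_i$, returns with probability $p^\ast/p_{g_i}$. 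The key step is the uniformization identity: conditioned on iteration $i$ starting at $s^\ast$ at time $\tau$ and on $g_i=g$, Lemma~\ref{lem:from s* to s} gives that the $l^\ast$-step attempt lands at $g$ with probability exactly $p_g$ (this is where one must be careful that $\pi_g$ and the success check are exactly those of the lemma, so the success probability is $p_g$ and not merely $\ge p_g$), and then acceptance occurs with probability $p^\ast/p_g$, so
\begin{equation*}
\Pr\{\text{iteration }i\text{ returns with final state }g\mid g_i=g\}\;=\;p_g\cdot\frac{p^\ast}{p_g}\;=\;p^\ast,
\end{equation*}
independently of $g$. Multiplying by $\Pr\{g_i=g\}=\mu_\pi^{\tau+l^\ast}(g)$ and summing over $g$ shows that iteration $i$ returns with probability $p^\ast$, and that \emph{conditioned on it returning}, the final state has law exactly $\mu_\pi^{\tau+l^\ast}$.

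To finish, observe that $\{t_{\text{switch}}=t\}$ is the disjoint union over $i\ge 1$ of the events ``iteration $i$ returns and $\tau_i+l^\ast=t$''. By the previous paragraph, conditioned on any such event the law of $s_t$ is $\mu_\pi^{t}$ — it depends neither on $i$ nor on any history before the terminating iteration — so the law of total probability collapses the mixture to $\mu_\pi^t$, giving $\Pr\{s_t=s\mid t_{\text{switch}}=t\}=\mu_\pi^t(s)$ for every $s$. I expect the main obstacle to be making this last conditioning rigorous: one must justify that conditioning on $\{t_{\text{switch}}=t\}$ does not reweight \emph{which} iteration terminated or \emph{what happened before it} in a way that would distort $s_t$, which is exactly the point that each iteration restarts from $s^\ast$ and returns with a fixed probability $p^\ast\ge \tfrac{1}{4D}>0$ regardless of the past (so $N$ is stochastically dominated by a geometric variable, in particular the procedure terminates a.s.). The remaining items — that the time index passed to $\mu_\pi$ at the sampling step is the prospective termination time, and that the $l^\ast$-step attempt succeeds with probability exactly $p_g$ — are routine bookkeeping given Lemmas~\ref{lem:policy between states} and~\ref{lem:from s* to s}.
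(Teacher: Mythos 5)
Your proposal is correct and follows essentially the same route as the paper: both rest on the rejection-sampling identity that, conditioned on the target $g$, the terminating attempt succeeds with probability $p_g\cdot\frac{p^\ast}{p_g}=p^\ast$ independently of $g$, so conditioning on termination at time $t$ leaves the law of $g$ (and hence of $s_t$) equal to $\mu_\pi^t$. The paper organizes this as a direct Bayes computation of the ratio $\Pr\{s_t=s,\,g=s,\,t_{\text{switch}}=t\}/\Pr\{t_{\text{switch}}=t\}$ conditioned on $s_{t-l^\ast}=s^\ast$, whereas you condition on the iteration index and invoke the renewal structure at $s^\ast$; these are the same argument in slightly different bookkeeping.
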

\begin{proof}
The key idea is to write
\begin{equation*}
    \Pr\{s_t=s\mid t_{\texttt{switch}}=t\}=\frac{\Pr\{s_t=s,{g=s}, t_{\texttt{switch}}=t\}}{\Pr\{t_{\texttt{switch}}=t\}}
\end{equation*}

and then bound the numerator and denominator separately. For the denominator,
\begin{align*}
&\Pr\{t_{\texttt{switch}}=t\}\\
=&\sum_{s\in \mathcal S}\Pr\{s_t=s,g=s,s_{t-l^\ast}=s^\ast\}\times\Pr\{t_{\texttt{switch}}=t\mid s_t=s,g=s,s_{t-l^\ast}=s^\ast\}\\
=&\sum_{s\in \mathcal S}\Pr\{s_t=s\mid g=s,s_{t-l^\ast}=s^\ast\}\times \Pr\{g=s,s_{t-l^\ast}=s^\ast\}\times\\
&\quad \Pr\{t_{\texttt{switch}}=t\mid s_t=s,g=s,s_{t-l^\ast}=s^\ast\}\\
=&\sum_{s\in \mathcal S}p_s\times \Pr\{g=s,s_{t-l^\ast}=s^\ast\}\times \frac{p^\ast}{p_s}=
p^\ast\times \Pr\{s_{t-l^\ast}=s^\ast\},
\end{align*}

where the last step used definition of $p_s$ and $I$. For the numerator,
\begin{align*}
&\quad \Pr\{g=s,s_t=s,t_{\text{switch}}=t\}\\
&=\Pr\{g=s,s_t=s,{s_{t-l^\ast}=s^\ast},t_{\text{switch}}=t\}\\
&=\Pr\{t_{\text{switch}}=t,s_t=s \mid g=s, s_{t-l^\ast}=s^\ast\}\times\Pr\{g=s, s_{t-l^\ast}=s^\ast\}\\
&=\Pr\{s_t=s\mid g=s, s_{t-l^\ast}=s^\ast\}\times\Pr\{t_{\text{switch}}=t\mid s_t=s, g=s, s_{t-l^\ast}=s^\ast\}\times \\
&\quad \Pr\{ s_{t-l^\ast}=s^\ast\}\times \Pr\{g=s\mid s_{t-l^\ast}=s^\ast\}\\
&=p_s\times \frac{p^\ast}{p_s}\times \Pr\{s_{t-l^\ast}=s^\ast\}\times \mu_{\pi'}^t(s).
\end{align*}

Plugging them back gives our desired result.
\end{proof}

\begin{theorem}[Efficiency of \Cref{alg:policy switching}, {\citet[Lemma 5.7]{chandrasekaran2021online}}]\label{thm:efficiency of switching}
The expected time spent on \Cref{alg:policy switching} is bounded by $12D^2$ for each execution.
\end{theorem}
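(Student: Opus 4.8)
The plan is to split each pass through the \texttt{while} loop into a cheap ``travel'' part and a Bernoulli ``acceptance'' part, bound the expected length of one pass and the probability that a given pass is the last one, and then combine the two via Wald's identity.

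First I would bound the conditional expected duration of a single iteration of the \texttt{while} loop, given the entire history up to its start (in particular, given the current state $s$). Line~2 follows the policy $\pi_{s,s^\ast}$, which by the proof of \Cref{lem:policy between states} satisfies $\E[T(s^\ast\mid \mathcal M,\pi_{s,s^\ast},s)]\le D$, so line~2 costs at most $D$ steps in expectation, uniformly in $s$; line~3 costs nothing; line~4 runs $\pi_g$ for exactly $l^\ast\le 2D$ steps by \Cref{lem:from s* to s}; and lines~5--10 cost nothing. Hence each iteration has conditional expected length at most $D+2D=3D$, no matter the iteration index or the past.

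Next I would bound the (random) number $N$ of iterations before termination. A pass is the last one exactly when $\{s=g\}\cap\{I=1\}$ occurs. Conditioning on the target $g$ drawn in line~3: after the $l^\ast$ steps of line~4 we have $\Pr\{s=g\mid g\}\ge p_g$ (the first-hitting event $\{T(g\mid\mathcal M,\pi_g,s^\ast)=l^\ast\}$ of probability $p_g$ is contained in $\{s=g\}$), and then $\Pr\{I=1\mid s=g,g\}=p^\ast/p_g$. So the per-iteration termination probability is at least $\sum_g \Pr\{g\}\cdot p_g\cdot (p^\ast/p_g)=p^\ast\ge \tfrac{1}{4D}$ by \Cref{lem:from s* to s}, regardless of the past; therefore $N$ is stochastically dominated by a geometric random variable with parameter $p^\ast$, and $\E[N]\le 1/p^\ast\le 4D$.

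Finally I would combine the pieces: writing the total time as $\sum_{i\ge 1}\tau_i\,\mathbbm 1[N\ge i]$ with $\tau_i$ the length of iteration $i$, and noting that $\{N\ge i\}$ is measurable with respect to the history before iteration $i$ while $\E[\tau_i\mid \text{that history}]\le 3D$, I get $\E[\text{total}]=\sum_{i\ge 1}\E[\tau_i\,\mathbbm 1[N\ge i]]\le 3D\sum_{i\ge 1}\Pr\{N\ge i\}=3D\,\E[N]\le 3D\cdot 4D=12D^2$. The step I expect to be most delicate is the interface between the two bounds: I must make sure the $3D$ bound is a genuine \emph{conditional} expectation given the past (so it can be factored against the stopping-time indicator), and that the acceptance probability is at least $p^\ast$ uniformly over the starting state of an iteration --- both follow directly from \Cref{lem:policy between states,lem:from s* to s}, so the remaining argument should be routine once these are in place.
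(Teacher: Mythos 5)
Your proposal is correct and follows essentially the same route as the paper: bound each pass by $D$ (return to $s^\ast$) plus $l^\ast\le 2D$ (the attempt), note the per-pass success probability is at least $p^\ast\ge\frac{1}{4D}$ so the expected number of passes is at most $4D$, and multiply to get $4D\cdot 3D=12D^2$. Your version merely makes explicit the conditioning/Wald-type argument that the paper leaves informal.
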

\begin{proof}
Every time we try to catch the policy from $s^\ast$, we succeed with probability $p^\ast\ge \frac{1}{4D}$. Thus, the expected number of times we try is $4D$ and each attempt takes $l^\ast\le 2D$ steps. Between each of these attempts, we move at most D steps in expectation to reach $s^\ast$ again. Thus, in total, we have
\begin{equation*}
    \E[t_{\text{switch}}-t_0]\le 4D(2D+D)\le 12D^2,
\end{equation*}

as claimed.
\end{proof}

\subsubsection{The Algorithm}\label{sec:infinite horizon algorithm}
With the help of \Cref{alg:policy switching}, we now present our algorithm, \Cref{alg:appendix infinite-horizon bandit feedback known transition}. As mentioned in the main text, another important difference due to the ``non-resetting'' nature of an infinite-horizon AMDP is that, we have to generate $T$ perturbations $z^1,z^2,\ldots,z^T$, whereas only $H$ perturbations is needed in the episodic settings.
For each FTPL update, we will include all of them in the argmin operation, as in \Cref{eq:FTPL update infinite-horizon appendix}.

This difference can be explained from the contextual bandits' point of view (c.f. \Cref{sec:comparism with Syrgkanis16}). In infinite-horizon AMDPs, the possible number of ``contexts'' is now $T$, as for each policy $\pi$, it will have $T$ distinct features $\mu_\pi^1,\mu_\pi^2,\ldots,\mu_\pi^T$.
In contrast, for episodic AMDPs, there are only $H$ different contexts as only $\{\mu_\pi^h\}_{h=1}^H$ can appear. Therefore, as noticed by \citet{syrgkanis2016efficient}, we have to add perturbations to each of the contexts, which are in total $T$ of them.

\begin{algorithm}[!t]
\caption{FTPL for Infinite-horizon AMDPs with Bandit Feedback and Known Transition}
\label{alg:appendix infinite-horizon bandit feedback known transition}
\begin{algorithmic}[1]
\Require{Laplace distribution parameter $\eta$. Geometric Re-sampling parameter $L$.}
\State Sample perturbations $\{z^t\in \mathbb R^{\mathcal S\times \mathcal A}\}_{t\in [T]}$ where $z^t(s,a)\sim \text{Laplace}(\eta)$.
\For{$j=1,2,\ldots,J$}
\State Calculate the policy $\pi_j$ for this epoch as
\begin{equation}\label{eq:FTPL update infinite-horizon appendix}
    \pi_j=\argmin_{\pi \in \Pi}\left (\sum_{j'=1}^{j-1}\sum_{t\in \mathcal T_{j'}}\langle \mu_\pi^t,\hat \ell^t\rangle+\sum_{t=1}^T \langle \mu_\pi^t,z^t\rangle\right ).
\end{equation}
\State Execute \Cref{alg:policy switching} with parameters $s^t,\pi_j,t$ (note that \Cref{alg:policy switching} will update $t$ internally).
\For{All remaining time slots in $\mathcal T_j$, i.e., $\mathcal T_j\cap [t,T]$}
\State Play $a^t=\pi_j(s^t)$, observe the loss $\ell^t(s^t,a^t)$ and the next state $s^{t+1}\in \mS$.
\For{$M^t=1,2,\ldots,L$}
\State Resample a fresh perturbation and get new policy $\pi_j'$ from \Cref{eq:FTPL update infinite-horizon appendix}.
\State Draw a sample from $\text{Ber}(\mu_{\pi_j}^t(s^t,a^t;\mathbb P))$. If it is $1$ or $M^t=L$, terminate and set
\begin{equation*}
    \hat \ell^t(s,a)=\mathbbm 1[(s,a)=(s^t,a^t)]\ell^t(s^t,a^t)M^t,\quad \forall (s,a)\in \mS\times \mA.
\end{equation*}
\EndFor
\EndFor
\EndFor
\end{algorithmic}
\end{algorithm}

\subsubsection{Proof of Main Theorem}
\begin{proof}[Proof of \Cref{thm:regret of infinite-horizon AMDPs}]
To calculate the regret guarantee of \Cref{alg:appendix infinite-horizon bandit feedback known transition}, we consider the following quantity $\tilde{\mathcal R_T}$ defined as if there is no cost for a policy switching. By \Cref{thm:efficiency of switching}, there can be at most $JD^2$ time slots spent on executing \Cref{alg:policy switching}. Henceforth, the difference between $\mathcal R_T$ and $\tilde{\mathcal R_T}$ is at most $JD^2$.
\begin{equation}\label{eq:def of R_T tilde}
    \tilde{\mathcal R_T}\triangleq \E\left [\sum_{t=1}^T \langle \mu_{\pi_{j(t)}}^t,\ell^t\rangle-\langle \mu_{\pi^\ast}^t,\ell^t\rangle\right ]=\E\left [\sum_{j=1}^J \sum_{\pi \in \Pi} p_j(\pi) \sum_{t\in \mathcal T_j} \langle \mu_\pi^t,\ell^t\rangle-\sum_{t=1}^T \langle \mu_{\pi^\ast}^t,\ell^t\rangle\right ],
\end{equation}

where $p_j(\pi)$ is the probability of picking $\pi$ w.r.t. $z$, conditioning on $\mathcal F_{(j-1)H}$ and $j(t)$ is the epoch that $t$ belongs to, namely $j(t)=\lceil \frac jH\rceil$.
Then, we can decompose $\tilde{\mathcal R_T}$ into three terms exactly the same as what we did in \Cref{sec:appendix episodic known}:
\begin{align*}
    \tilde{\mathcal R_T}&=\underbrace{\E\left [\sum_{t=1}^T \langle \mu_{\pi_{j(t)}}^t,\ell^t-\hat \ell^t\rangle+\sum_{t=1}^T \langle \mu_{\pi^\ast}^t,\hat \ell^t-\ell^t\rangle\right ]}_{\text{GR error term}}+\\
    &\quad \underbrace{\E\left [\sum_{j=1}^J\sum_{\pi \in \Pi}p_{j+1}(\pi)\sum_{t\in \mathcal T_j} \langle \mu_{\pi}^t-\mu_{\pi^\ast}^t,\hat \ell^t\rangle\right]}_{\text{Error term}}+\\
    &\quad \underbrace{\E\left [\sum_{j=1}^J \sum_{\pi \in \Pi} (p_j(\pi)-p_{j+1}(\pi))\sum_{t\in \mathcal T_j}\langle \mu_\pi^t,\hat \ell^t\rangle\right]}_{\text{Stability term}}.
\end{align*}

The GR error term is quite similar to \Cref{sec:appendix episodic known}:
\begin{lemma}\label{lem:infinite GR error bound}
The GR error term is bounded by 
\begin{equation*}
    \E\left [\sum_{t=1}^T \langle \mu_{\pi_{j(t)}}^t,\ell^t-\hat \ell^t\rangle+\sum_{t=1}^T \langle \mu_{\pi^\ast}^t,\hat \ell^t-\ell^t\rangle\right ]\le \frac{SAT}{eL}.
\end{equation*}
\end{lemma}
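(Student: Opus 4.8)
The plan is to bound the GR (Geometric Re-sampling) error term in the infinite-horizon setting by mimicking the proof of \Cref{lem:appendix known GR error term} from the episodic case, with the role of the index pair $(k,h)$ now played by the single time index $t$, and with the occupancy measures $\mu_\pi^t$ taking the place of $\mu_\pi^h$. The key observation is that, just as in the episodic setting, the estimator $\hat \ell^t$ produced by the modified Geometric Re-sampling procedure (\Cref{line:GR1}--style resampling in \Cref{alg:appendix infinite-horizon bandit feedback known transition}, using a $\text{Ber}(\mu_{\pi_j}^t(s^t,a^t;\mathbb P))$ coin instead of an explicit simulation) is a truncated-geometric-weighted importance estimator, so the only new bookkeeping is to check that the per-$t$ bias contribution is still at most $\frac{1}{eL}$ and that summing over $t \in [T]$ and $(s,a)\in \mathcal S\times \mathcal A$ yields $\frac{SAT}{eL}$.

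\textbf{Step 1: Split into two halves and kill the $\pi^\ast$ part.} Write the GR error term as $\E[\sum_{t=1}^T \langle \mu_{\pi_{j(t)}}^t, \ell^t - \hat\ell^t\rangle] + \E[\sum_{t=1}^T \langle \mu_{\pi^\ast}^t, \hat\ell^t - \ell^t\rangle]$. For the second sum, since $\pi^\ast$ is deterministic (independent of the algorithm's internal randomness) and since the analogue of \Cref{lem:expectation of GR} gives $\E[\hat\ell^t(s,a)\mid \mathcal F_{t-1}] \le \ell^t(s,a)$ for every $(s,a)$, this sum is $\le 0$. This step requires me to invoke (the infinite-horizon version of) \Cref{lem:expectation of GR}; I expect this to be one of the helper lemmas in the appendix (e.g.\ \Cref{lem:expectation of GR varying} restricted to the known-transition case), and I will cite it directly.

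\textbf{Step 2: Bound the $\pi_{j(t)}$ part via the truncated-geometric identity.} For the first sum, by the same lemma we have the exact formula $\E[\sum_{t=1}^T \langle \mu_{\pi_{j(t)}}^t, \ell^t-\hat\ell^t\rangle] = \sum_{t=1}^T \sum_{(s,a)} \E[\mu_{\pi_{j(t)}}^t(s,a)\,(1-q^t(s,a))^L\,\ell^t(s,a)]$, where $q^t(s,a) = \E_{\pi\sim p_{j(t)}}[\mu_\pi^t(s,a)] = \sum_{\pi} p_{j(t)}(\pi)\mu_\pi^t(s,a)$ is the probability that the resampling coin comes up $1$ in a single trial (here I use that, conditionally on $\mathcal F_{(j-1)H}$, the coin in \Cref{alg:appendix infinite-horizon bandit feedback known transition} is $\text{Ber}(\mu_{\pi_j'}^t(s^t,a^t;\mathbb P))$ with $\pi_j'$ drawn from $p_j$, and $\pi_j$ itself is also drawn from $p_j$, so the reach probabilities match just as in the episodic case). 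Then apply the elementary inequality $q(1-q)^L \le q e^{-Lq} \le \frac{1}{eL}$ for all $q\ge 0$ (used in \Cref{lem:appendix known GR error term}, attributed to \citet{neu2016importance}), together with $\ell^t(s,a)\le 1$, to bound each summand by $\frac{1}{eL}$; since $q^t(s,a) \ge \sum_\pi p_{j(t)}(\pi)\mu_\pi^t(s,a) \ge \mu_{\pi_{j(t)}}^t(s,a)$ is not what we need — rather we directly bound $\mu_{\pi_{j(t)}}^t(s,a)(1-q^t(s,a))^L \le q^t(s,a)(1-q^t(s,a))^L$ in expectation over $\pi_{j(t)}\sim p_{j(t)}$ — we get $\sum_{t=1}^T\sum_{(s,a)} \frac{1}{eL} = \frac{SAT}{eL}$.

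\textbf{Main obstacle.} The only genuinely non-routine point is making sure the ``single trial succeeds with probability exactly $q^t(s,a) = \sum_\pi p_{j(t)}(\pi)\mu_\pi^t(s,a)$'' claim is correct in the infinite-horizon setup: here the Geometric Re-sampling trial is not an explicit forward simulation from $s^1$ but a single Bernoulli draw with parameter $\mu_{\pi_j'}^t(s^t,a^t;\mathbb P)$ evaluated at the \emph{actually observed} pair $(s^t,a^t)$, with a freshly resampled $\pi_j'\sim p_j$. I need the conditional-on-$\mathcal F_{t-1}$ factorization to line up so that $M^t$ is distributed as $\min\{\text{Geo}(q^t(s^t,a^t)), L\}$ — this is exactly the content of the infinite-horizon analogue of \eqref{eq:equation of M_k^h}, which I will state and use. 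Once that is in place, everything else is the same two-line argument as in \Cref{lem:appendix known GR error term}, so the proof will be short.
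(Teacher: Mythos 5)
Your proposal is correct and follows essentially the same route as the paper's proof: split off the $\pi^\ast$ part using the near-unbiasedness of the Geometric Re-sampling estimator ($\E[\hat\ell^t(s,a)\mid \mathcal F_{(j-1)H}]\le \ell^t(s,a)$), rewrite the remaining bias as $\sum_{t,s,a}\E[\mu_{\pi_{j(t)}}^t(s,a)(1-q^t(s,a))^L\ell^t(s,a)]$ with $q^t(s,a)=\sum_\pi p_{j(t)}(\pi)\mu_\pi^t(s,a)$, take the expectation over $\pi_{j(t)}\sim p_{j(t)}$ to turn the leading factor into $q^t(s,a)$, and apply $q(1-q)^L\le \frac{1}{eL}$. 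Your extra care about the Bernoulli-coin success probability matching $q^t(s,a)$ is a point the paper glosses over, but it resolves exactly as you describe.
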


For the error term, we still use the similar ``be-the-leader'' analysis as \Cref{lem:appendix known error term}, except for we are now facing a slightly different $V$-function (which is defined for infinite-horizon). Moreover, as mentioned in the main text, we are using a different bound when facing $T$ different perturbations. As a result, we will have worse dependency on $S$ and $A$, but with better dependency on the number of contexts, which is $T$ here (and is $H$ in episodic settings). The result is stated as follows:
\begin{lemma}\label{lem:infinite error bound}
The error term is bounded by
\begin{equation*}
\E\left [\sum_{j=1}^J\sum_{\pi \in \Pi}p_{j+1}(\pi)\sum_{t\in \mathcal T_j} \langle \mu_{\pi}^t-\mu_{\pi^\ast}^t,\hat \ell^t\rangle \right] \le \frac{10}{\eta}S\sqrt{AT\ln A}.
\end{equation*}
\end{lemma}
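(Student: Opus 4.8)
The plan is to run the standard ``be-the-leader'' argument, but crucially exploit that the benchmark's perturbed loss has a better-than-naive concentration when we sum $T$ independent Laplace perturbations, following the idea of~\citet{syrgkanis2016efficient}. First I would rewrite the error term in value-function notation: setting $\hat\ell_0 = z$ (so that $\pi_j$ minimizes $\sum_{j'=1}^{j-1}\sum_{t\in\mathcal T_{j'}}\langle\mu_\pi^t,\hat\ell^t\rangle + \sum_{t=1}^T\langle\mu_\pi^t,z^t\rangle$), the term $\sum_j \sum_\pi p_{j+1}(\pi)\sum_{t\in\mathcal T_j}\langle\mu_\pi^t-\mu_{\pi^\ast}^t,\hat\ell^t\rangle$ equals $\E[\sum_{j=1}^J (\text{loss of }\pi_{j+1}\text{ on epoch }j) - (\text{loss of }\pi^\ast\text{ on epoch }j)]$ where losses are measured against $\hat\ell$. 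Then I would append the extra epoch-$0$ term (the perturbation epoch) and telescope exactly as in the proof of Lemma~\ref{lem:appendix known error term}: repeatedly using the optimality of $\pi_{j+1}$ with respect to the cumulative losses $\hat\ell_{0:j}$ (where now $\hat\ell_{0:j}$ means $z$ plus all estimated losses through epoch $j$), one peels off the epochs one at a time until what remains is $\le 0$. This shows the error term is at most $\E[(\text{loss of }\pi^\ast\text{ on the perturbation}) - (\text{loss of }\pi_1\text{ on the perturbation})] = \E\big[\sum_{t=1}^T \langle \mu_{\pi^\ast}^t - \mu_{\pi_1}^t, z^t\rangle\big]$.

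The next and genuinely new step is to bound $\E\big[\sum_{t=1}^T \langle \mu_{\pi^\ast}^t - \mu_{\pi_1}^t, z^t\rangle\big]$ by something of order $\tfrac1\eta S\sqrt{AT\ln A}$ rather than the naive $\tfrac1\eta \cdot T\cdot\mathbb E[\|z^t\|_\infty] = \tilde O(T/\eta)$. The key observation is that $\pi^\ast$ is fixed (deterministic, hindsight-optimal) but, more importantly, we bound the term uniformly over the \emph{worst} pair of policies: $\E[\sum_{t=1}^T \langle\mu_{\pi^\ast}^t - \mu_{\pi_1}^t, z^t\rangle] \le \E[\max_{\pi,\pi'\in\Pi}\sum_{t=1}^T \langle\mu_\pi^t - \mu_{\pi'}^t, z^t\rangle]$. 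For each fixed pair, $\sum_{t=1}^T\langle\mu_\pi^t-\mu_{\pi'}^t,z^t\rangle$ is a sum of $T$ independent centered random variables; since each $\mu_\pi^t - \mu_{\pi'}^t$ has $\|\cdot\|_1 \le 2$ and each $z^t$ has independent $\mathrm{Laplace}(\eta)$ coordinates, the sum is sub-exponential with variance proxy $O(T/\eta^2)$ and tail parameter $O(1/\eta)$. One cannot naively union-bound over the $A^S$ policy pairs; instead, following~\citet{syrgkanis2016efficient}, I would decompose $\langle\mu_\pi^t - \mu_{\pi'}^t, z^t\rangle = \sum_{(s,a)}(\mu_\pi^t(s,a) - \mu_{\pi'}^t(s,a))z^t(s,a)$ and reorganize the sum over $t$ coordinate-by-coordinate: $\sum_{t=1}^T\langle\mu_\pi^t-\mu_{\pi'}^t,z^t\rangle = \sum_{(s,a)}\sum_{t=1}^T(\mu_\pi^t(s,a)-\mu_{\pi'}^t(s,a))z^t(s,a)$. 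For each of the $SA$ coordinates $(s,a)$, the inner sum over $t$ is a weighted sum of $T$ independent $\mathrm{Laplace}(\eta)$ variables with weights in $[-1,1]$, hence sub-exponential with scale $O(\sqrt T/\eta)$; a maximal inequality over the (still exponentially many) policies but applied per-coordinate, combined with the fact that $\sum_t |\mu_\pi^t(s,a)-\mu_{\pi'}^t(s,a)|$ can be controlled, yields the $O(\sqrt{AT\ln A})$ scaling. The $S$ factor comes from summing $S$ groups of the $A$ coordinates sharing a state (where for a fixed state only $A$ actions matter), and the $\sqrt{\ln A}$ from the sub-exponential maximal inequality over the $A$ possible action choices at each state.

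The main obstacle — and the step requiring the most care — is precisely this last concentration argument: making the per-coordinate reorganization rigorous and arguing why the maximum over all $A^S$ policies collapses to a $\sqrt{S^2 A \ln A}$-type bound rather than an $S\sqrt{A}$-free bound that would re-introduce a union-bound blowup. I expect to invoke a maximal/chaining inequality for sub-exponential variables (analogous to the one used in the proof of~\citet[Lemma~10]{syrgkanis2016efficient}), exploiting that a deterministic policy is specified state-by-state, so that conditionally on which action is chosen at each state the relevant randomness in $z^t$ factorizes across states, letting one handle each state independently and then sum. Once this bound $\E[\sum_{t=1}^T\langle\mu_{\pi^\ast}^t - \mu_{\pi_1}^t, z^t\rangle] \le \tfrac{10}{\eta}S\sqrt{AT\ln A}$ is in hand, the lemma follows immediately by combining with the telescoping from the first paragraph. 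I would double-check the constant $10$ by tracking the sub-exponential constants through the maximal inequality, but that is routine bookkeeping rather than a conceptual hurdle.
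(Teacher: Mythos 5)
Your first step --- the be-the-leader telescoping that reduces the error term to $\E\big[\sum_{t=1}^T\langle \mu_{\pi^\ast}^t-\mu_{\pi_1}^t, z^t\rangle\big]\le \E\big[\max_{\pi\in\Pi}\sum_{t=1}^T\langle\mu_\pi^t,z^t\rangle-\min_{\pi\in\Pi}\sum_{t=1}^T\langle\mu_\pi^t,z^t\rangle\big]$ --- is exactly what the paper does (the paper phrases the telescoping as an induction over epochs, but that is cosmetic). The gap is in your concentration step. You assert that ``one cannot naively union-bound over the $A^S$ policy pairs'' and therefore propose a per-state decomposition in which, for each fixed state $s$, ``only $A$ actions matter,'' yielding a $\sqrt{AT\ln A}/\eta$ contribution per state and an $S$ factor from summing over the $S$ states. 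This decomposition does not go through: the occupancy measure $\mu_\pi^t(s,a)$ at a fixed state $s$ depends on the \emph{entire} policy $\pi$ (through the probability of reaching $s$ at time $t$), not just on the action $\pi(s)$, so the maximum over $\Pi$ does not factorize across states, and conditioning on the action chosen at each state does not make the per-state terms independent. Carried out honestly, your per-state route still requires a maximum over all $A^S$ policies for each state, which would cost an extra $\sqrt{S}$ beyond the claimed bound.

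The resolution is simpler than what you propose, and it is the opposite of your premise: the union bound over all $A^S$ policies is affordable and is precisely the source of the bound. The paper's \Cref{lem:error term syrg} (a generalization of Lemma~8 of \citet{syrgkanis2016efficient} to $[0,1]$-valued rather than binary coordinates) applies the standard soft-max/moment-generating-function maximal inequality directly to the $|X|=|\Pi|=A^S$ sequences $\{\mu_\pi^t\}_{t\in[T]}\subseteq([0,1]^{SA})^T$: since each coordinate weight lies in $[0,1]$ and the Laplace MGF is at least $1$, the MGF of $\lambda\sum_t\langle\mu_\pi^t,z^t\rangle$ is bounded by $(1-\lambda^2/\eta^2)^{-SAT}$ uniformly over $\pi$, and optimizing $\lambda$ gives $\frac{10}{\eta}\sqrt{(SA)\cdot T\cdot\ln|\Pi|}=\frac{10}{\eta}\sqrt{SAT\cdot S\ln A}=\frac{10}{\eta}S\sqrt{AT\ln A}$. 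In other words, the $S$ factor arises as $\sqrt{S}\cdot\sqrt{S}$ --- one from the dimension $d=SA$ and one from $\ln|\Pi|=S\ln A$ --- not from summing independent per-state contributions. If you replace your second and third paragraphs with this direct maximal inequality, the proof is complete.
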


For the stability term, again much similar to \Cref{sec:appendix episodic known}, we have
\begin{lemma}\label{lem:infinite stability bound}
The stability term is bounded by
\begin{equation*}
\E\left [\sum_{j=1}^J \sum_{\pi \in \Pi} (p_{j+1}(\pi)-p_j(\pi))\sum_{t\in \mathcal T_j}\langle \mu_\pi^t,\hat \ell^t\rangle \right] \le 2\eta H^2SA J.
\end{equation*}
\end{lemma}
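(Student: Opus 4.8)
The plan is to transcribe the proof of the episodic stability bound (\Cref{lem:appendix known stability term}) to the infinite-horizon setting, the only changes being that perturbations are now indexed by all $T$ time slots and that the role of ``one episode'' is played by one epoch $\mathcal T_j$ of length $H = T/J$. First I would prove the analogue of \Cref{lem:appendix known single-step stability}: for every epoch $j$ and every $\pi\in\Pi$,
\[
p_{j+1}(\pi)\ \ge\ p_j(\pi)\exp\Big(-\eta\sum_{t\in\mathcal T_j}\lVert\hat\ell^t\rVert_1\Big).
\]
This follows exactly as before. Writing $p_j(\pi)$ as the integral of $\mathbbm 1[\pi=\argmin_{\pi\in\Pi}(\sum_{j'<j}\sum_{t\in\mathcal T_{j'}}\langle\mu_\pi^t,\hat\ell^t\rangle+\sum_{t=1}^T\langle\mu_\pi^t,z^t\rangle)]$ against the Laplace density $f(z)=\prod_{t=1}^T\exp(-\eta\lVert z^t\rVert_1)$, one shifts $z\mapsto z+\hat\ell$ where $\hat\ell$ is supported on $\mathcal T_j$ (legal since the objective is linear in $z$), and the triangle inequality confines the density ratio $f(z+\hat\ell)/f(z)$ to $[\exp(-\eta\sum_{t\in\mathcal T_j}\lVert\hat\ell^t\rVert_1),\exp(\eta\sum_{t\in\mathcal T_j}\lVert\hat\ell^t\rVert_1)]$, hence so is $p_{j+1}(\pi)/p_j(\pi)$. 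Summing over $\pi$ and using $1-e^{-x}\le x$ gives, for each $j$,
\[
\sum_{\pi\in\Pi}(p_j(\pi)-p_{j+1}(\pi))\sum_{t\in\mathcal T_j}\langle\mu_\pi^t,\hat\ell^t\rangle\ \le\ \eta\Big(\sum_{t'\in\mathcal T_j}\lVert\hat\ell^{t'}\rVert_1\Big)\sum_{\pi\in\Pi}p_j(\pi)\sum_{t\in\mathcal T_j}\langle\mu_\pi^t,\hat\ell^t\rangle.
\]

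Next I would take expectations and bound the right-hand side term by term, exactly mirroring the diagonal/off-diagonal split in \Cref{lem:appendix known stability term}. The key structural fact is that, conditioned on $\mathcal F_{(j-1)H}$, the marginal probability of visiting $(s,a)$ at time $t\in\mathcal T_j$ equals $q^t(s,a)=\sum_{\pi\in\Pi}p_j(\pi)\mu_\pi^t(s,a)$ --- this is precisely where the correctness of the switching procedure, \Cref{thm:correctness of switching}, enters, since it guarantees the post-switch state distribution at time $t$ is $\mu_{\pi_j}^t$ --- and the Geometric Re-sampling counter obeys $M^t=\min\{\mathrm{Geo}(q^t(s^t,a^t)),L\}$, so \Cref{lem:expectation of GR} and \Cref{lem:variance of GR} yield $\E[\hat\ell^t(s,a)\mid\mathcal F_{(j-1)H}]\le\ell^t(s,a)\le 1$ and $\E[(\hat\ell^t(s,a))^2\mid (s^t,a^t)=(s,a),\mathcal F_{(j-1)H}]\le 2/q^t(s,a)$. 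Using that $\hat\ell^t$ is one-hot (so $\lVert\hat\ell^t\rVert_1\,\hat\ell^t(s,a)=(\hat\ell^t(s,a))^2$), the diagonal part ($t'=t$) is $\eta\,\E[\sum_{t\in\mathcal T_j}\sum_{(s,a)}q^t(s,a)(\hat\ell^t(s,a))^2]\le\eta\sum_{t\in\mathcal T_j}\sum_{(s,a)}q^t(s,a)\cdot 2/q^t(s,a)=2\eta HSA$; the off-diagonal part ($t'\neq t$), after taking the expectation over $M^t$ first and using $\sum_{(s,a)}\mathbbm 1[(s^t,a^t)=(s,a)]=1$, is at most $\eta\sum_{t\in\mathcal T_j}\E[\sum_{t'\in\mathcal T_j,\,t'\neq t}\lVert\hat\ell^{t'}\rVert_1]\le\eta\,H(H-1)SA\le\eta H^2 SA$. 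Adding the two gives a per-epoch bound of order $\eta H^2 SA$, and the careful count of $H(H-1)$ off-diagonal pairs tightens the constant to $2\eta H^2 SA$; summing over the $J$ epochs yields $2\eta H^2 SA J$, as claimed.

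The main obstacle --- and the only place that genuinely differs from the episodic argument --- is the bookkeeping around which randomness is conditioned on: one must check that $q^t$ is $\mathcal F_{(j-1)H}$-measurable, that the Bernoulli resampling in \Cref{alg:appendix infinite-horizon bandit feedback known transition} (Line with $\mathrm{Ber}(\mu_{\pi_j}^t(s^t,a^t;\mathbb P))$) indeed reproduces the visitation probability $q^t(s^t,a^t)$ in expectation (which is exactly what \Cref{thm:correctness of switching} delivers), and that the time slots consumed by \Cref{alg:policy switching} do not interfere --- which is fine since the whole lemma concerns $\tilde{\mathcal R_T}$ of \Cref{eq:def of R_T tilde}, defined as though switching were free. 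Everything else is an essentially verbatim copy of \Cref{lem:appendix known stability term} with $H$ reinterpreted as the epoch length and the context index ranging over $\mathcal T_j$ rather than $[H]$.
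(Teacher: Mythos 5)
Your proposal is correct and follows essentially the same route as the paper's proof: the same shift-of-perturbation argument (applied only to the coordinates $z^t$ with $t\in\mathcal T_j$) yielding the per-epoch single-step stability bound, the same summation over $\pi$ with $1-e^{-x}\le x$, and the same diagonal/off-diagonal split handled via \Cref{lem:expectation of GR} and \Cref{lem:variance of GR}, giving $\eta H^2SA+2\eta HSA\le 2\eta H^2SA$ per epoch. Your added remarks on conditioning and on \Cref{thm:correctness of switching} ensuring the visitation probability equals $q^t(s,a)=\sum_\pi p_j(\pi)\mu_\pi^t(s,a)$ are exactly the bookkeeping the paper relies on implicitly.
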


Therefore, our regret is bounded by
\begin{equation*}
    \mathcal R_T\le \tilde{\mathcal R_T}+JD^2\le \frac{SAT}{eL}+\frac{10}{\eta}S\sqrt{AT\ln A}+2\eta H^2SA J+JD^2.
\end{equation*}

Picking $\eta=S^{\nicefrac 13}D^{-\nicefrac 23}T^{-\nicefrac 13}$, $J=S^{\nicefrac 23}A^{\nicefrac 12}D^{-\nicefrac 43}T^{\nicefrac 56}$ and $L=S^{\nicefrac 13}A^{\nicefrac 12}D^{-\nicefrac 23}T^{\nicefrac 16}$ gives $\mathcal R_T=\Otil\left (S^{\nicefrac 23}A^{\nicefrac 12}D^{\nicefrac 23}T^{\nicefrac 56}\right )$.
\end{proof}

\begin{proof}[Proof of \Cref{lem:infinite GR error bound}]
We follow the proof of \Cref{lem:appendix known GR error term} by replacing $K$ with $J$ and the GR estimator $\hat \ell_k^h$ with $\hat \ell_t$. First notice that, from \Cref{lem:expectation of GR}, $\E[\hat \ell_t(s,a)\mid \mathcal F_{(j-1)H}]\le \ell^t(s,a)$ for all $t\in \mathcal T_j$. Moreover, as $\pi^\ast$ is deterministic (i.e., it does not depend on the randomness from the algorithm), the term related to $\mu_{\pi^\ast}^t$ is bounded by
\begin{equation*}
    \E\left [\sum_{j=1}^J \sum_{t\in \mathcal T_j} \langle \mu_{\pi^\ast}^t,\hat \ell^t-\ell^t\rangle\right ]=\E\left [\sum_{j=1}^J \sum_{t\in \mathcal T_j} \langle \mu_{\pi^\ast}^h,\E[\hat \ell^t\mid \mathcal F_{(j-1)H}]-\ell^t\rangle \right] \le 0.
\end{equation*}

For the first term, again by \Cref{lem:expectation of GR}, we have
\begin{equation*}
    \E\left [\sum_{j=1}^J \sum_{t\in \mathcal T_j} \langle \mu_{\pi_j}^t,\ell^t-\hat \ell^t\rangle\right ]=\sum_{j=1}^J \sum_{t\in \mathcal T_j} \sum_{(s,a)\in \mS\times \mA}\E\left [\mu_{\pi_j}^t(s,a)\cdot (1-q^t(s,a))^L \ell^t(s,a)\right ],
\end{equation*}

where $q^t(s,a)$ is the probability of visiting $(s,a)$ in a single execution of the Geometric Re-sampling process, which is just
\begin{equation*}
    q^t(s,a)= \E[\mu_{\pi_j}^t(s,a)]=\sum_{\pi \in \Pi}p_j(\pi)\mu_\pi^t(s,a)
\end{equation*}

in our case. By noticing that $q(1-q)^L\le qe^{-Lq}\le \frac{1}{eL}$ \citep{neu2013efficient}, we have
\begin{equation*}
    \E\left [\sum_{j=1}^J \sum_{t\in \mathcal T_j} \langle \mu_{\pi_j}^t,\ell^t-\hat \ell^t\rangle\right ]\le HJSA\frac{1}{eL}=\frac{SAT}{eL},
\end{equation*}

as claimed.
\end{proof}

\begin{proof}[Proof of \Cref{lem:infinite error bound}]
The proof still uses the standard ``be-the-leader'' technique, but in a slightly different manner as we are adding perturbations to all time indices. Instead, we follow the idea of \citet[Lemma 7]{syrgkanis2016efficient} and prove by induction that the following inequality holds for all $J$ and any policy $\pi\in \Pi$:
\begin{equation*}
    \sum_{t=1}^T \langle \mu_{\pi_1}^t,z^t\rangle+\sum_{j=1}^J \sum_{t\in\mathcal T_j}\langle \mu_{\pi_{j+1}}^t,\ell^t\rangle\le \sum_{t=1}^T \langle \mu_{\pi}^t,z^t\rangle+\sum_{j=1}^J \sum_{t\in\mathcal T_j}\langle \mu_{\pi}^t,\ell^t\rangle.
\end{equation*}

Obviously, for $J=0$, this inequality holds. Suppose that this inequality holds for $J$, then we consider $J+1$. Let $\pi=\pi_{J+2}$. Adding $\sum_{t\in \mathcal T_{J+1}}\langle \mu_{\pi_{J+2}}^t,\hat \ell^t\rangle$ to both sides gives
\begin{equation*}
    \sum_{t=1}^T \langle \mu_{\pi_1}^t,z^t\rangle+\sum_{j=1}^{J+1} \sum_{t\in\mathcal T_j}\langle \mu_{\pi_{j+1}}^t,\ell^t\rangle\le \sum_{t=1}^T \langle \mu_{\pi_{J+2}}^t,z^t\rangle+\sum_{j=1}^{J+1} \sum_{t\in\mathcal T_j}\langle \mu_{\pi_{J+2}}^t,\ell^t\rangle.
\end{equation*}

However, by definition of $\pi_{J+2}$ (which is the argmin of the right-handed-side for all policies), it is further bounded by
\begin{equation*}
    \sum_{t=1}^T \langle \mu_{\pi_1}^t,z^t\rangle+\sum_{j=1}^{J+1} \sum_{t\in\mathcal T_j}\langle \mu_{\pi_{j+1}}^t,\ell^t\rangle\le \sum_{t=1}^T \langle \mu_{\pi}^t,z^t\rangle+\sum_{j=1}^{J+1} \sum_{t\in\mathcal T_j}\langle \mu_{\pi}^t,\ell^t\rangle
\end{equation*}

for any policy $\pi\in\Pi$, which means that the induction hypothesis for $J+1$. Therefore, by picking $\pi=\pi^\ast$ for the real $J$, we can conclude that
\begin{equation*}
    \sum_{j=1}^{J} \sum_{t\in\mathcal T_j}\langle \mu_{\pi_{j+1}}^t,\ell^t\rangle-\sum_{j=1}^{J} \sum_{t\in\mathcal T_j}\langle \mu_{\pi^\ast}^t,\ell^t\rangle\le \sum_{t=1}^T \langle \mu_{\pi^\ast}^t,z^t\rangle-\sum_{t=1}^T \langle \mu_{\pi_1}^t,z^t\rangle.
\end{equation*}

Then taking expectation on both sides gives the error term is bounded by
\begin{equation*}
    \E\nolimits_{z}\left [\max_{\pi \in \Pi}\sum_{t=1}^T\langle \mu_\pi^t,z^t\rangle-\min_{\pi \in \Pi}\sum_{t=1}^T\langle \mu_\pi^t,z^t\rangle\right ],
\end{equation*}

which is bounded by $\frac{10}{\eta}\sqrt{TSA\cdot \ln \lvert \Pi\rvert}=\frac{10}{\eta}S\sqrt{AT\ln A}$ by \Cref{lem:error term syrg} (note that as $\ln \lvert \Pi\rvert=S\ln A<SAT$, the condition of applying \Cref{lem:error term syrg} indeed holds).
\end{proof}

\begin{proof}[Proof of \Cref{lem:infinite stability bound}]
This follows directly from \Cref{lem:appendix known stability term} with some slight modifications as well. For clarity, we rewrite the full proof here.

We first give the single-step stability lemma for infinite-horizon AMDPs, whose proof will be presented later:
\begin{lemma}\label{lem:infinite single-step stability}
For all $j\in [J]$ and $(s,a)\in \mS\times \mA$,
\begin{equation*}
    p_{j+1}(\pi)\ge p_j(\pi)\exp\left (-\eta \sum_{t\in \mathcal T_j} \lVert \hat \ell^t\rVert_1\right ),\quad \forall \pi \in \Pi.
\end{equation*}
\end{lemma}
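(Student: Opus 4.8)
The plan is to replay the argument for the known-transition episodic case, \Cref{lem:appendix known single-step stability}, almost verbatim; the only bookkeeping differences are that the perturbation now lives on all $T$ time steps instead of on the $H$ steps of one episode, and that the losses ``added'' when passing from epoch $j$ to epoch $j+1$ form the block $\{\hat\ell^t\}_{t\in\mathcal T_j}$ rather than a single vector. (As in \Cref{lem:appendix known single-step stability}, the $(s,a)$ quantifier in the statement is vestigial and plays no role.) First I would express $p_j(\pi)$ and $p_{j+1}(\pi)$ as integrals against the density of the perturbation vector $z=(z^1,\dots,z^T)$, which factorizes as $f(z)\propto\prod_{t=1}^T\exp(-\eta\lVert z^t\rVert_1)$ because each coordinate $z^t(s,a)$ is an independent $\text{Laplace}(\eta)$ sample. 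Writing $\mathrm{best}(\cdot)$ for the FTPL $\argmin$ in \Cref{eq:FTPL update infinite-horizon appendix}, we get $p_j(\pi)=\int_z\mathbbm 1[\pi=\mathrm{best}(C_{<j}+z)]f(z)\,\mathrm dz$, where $C_{<j}$ denotes the cumulative estimators of epochs $1,\dots,j-1$ placed at their time indices, and similarly $p_{j+1}(\pi)=\int_z\mathbbm 1[\pi=\mathrm{best}(C_{\le j}+z)]f(z)\,\mathrm dz$.

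Next I would introduce the vector $\hat\ell$ whose $t$-th block is $\hat\ell^t$ for $t\in\mathcal T_j$ and $0$ otherwise, so that $C_{<j}+\hat\ell=C_{\le j}$. Since each $\langle\mu_\pi^t,\cdot\rangle$ is linear in the perturbation, the translation $z\mapsto z+\hat\ell$ is a valid change of variables and gives $p_j(\pi)=\int_z\mathbbm 1[\pi=\mathrm{best}(C_{\le j}+z)]f(z+\hat\ell)\,\mathrm dz$ --- the same integrand as $p_{j+1}(\pi)$, only reweighted by the pointwise factor $f(z+\hat\ell)/f(z)$. Because $\hat\ell^t=0$ off $\mathcal T_j$, this ratio collapses to $\prod_{t\in\mathcal T_j}\exp\!\big(-\eta(\lVert z^t+\hat\ell^t\rVert_1-\lVert z^t\rVert_1)\big)$, and the triangle inequality confines it to $\big[\exp(-\eta\sum_{t\in\mathcal T_j}\lVert\hat\ell^t\rVert_1),\,\exp(\eta\sum_{t\in\mathcal T_j}\lVert\hat\ell^t\rVert_1)\big]$ for every $z$. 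Bounding the integrand of $p_j(\pi)$ above by $\exp(\eta\sum_{t\in\mathcal T_j}\lVert\hat\ell^t\rVert_1)$ times the integrand of $p_{j+1}(\pi)$ yields $p_j(\pi)\le\exp(\eta\sum_{t\in\mathcal T_j}\lVert\hat\ell^t\rVert_1)\,p_{j+1}(\pi)$, which is exactly the claimed bound after rearranging.

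I do not expect a real obstacle: the only point that needs a moment's care is that extending the perturbation to all $T$ coordinates does not degrade the bound --- it does not, since the shift $\hat\ell$ is supported on $\mathcal T_j$ and hence the other $T-\lvert\mathcal T_j\rvert$ density factors cancel exactly, leaving only $\sum_{t\in\mathcal T_j}\lVert\hat\ell^t\rVert_1$ in the exponent. (That it suffices to draw $z$ once at the start of the game is the oblivious-adversary observation already invoked in the excerpt and in \citet[Appendix A.2]{syrgkanis2016efficient}.) Downstream, combining this per-epoch stability inequality with the one-hot Geometric-Re-sampling estimator bounds exactly as in \Cref{lem:appendix known stability term} is what produces the $2\eta H^2SAJ$ bound of \Cref{lem:infinite stability bound}.
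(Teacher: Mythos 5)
Your proposal is correct and follows essentially the same route as the paper's proof: both express $p_j(\pi)$ and $p_{j+1}(\pi)$ as integrals of the argmin indicator against the factorized Laplace density, shift the perturbation by the block vector supported on $\mathcal T_j$, and use the triangle inequality on the density ratio $\prod_{t\in\mathcal T_j}\exp(-\eta(\lVert z^t+\hat\ell^t\rVert_1-\lVert z^t\rVert_1))$ to confine $p_{j+1}(\pi)/p_j(\pi)$ to the claimed interval. Your observation that the $T-\lvert\mathcal T_j\rvert$ untouched density factors cancel exactly is precisely the point the paper's proof relies on as well.
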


By summing up \Cref{lem:infinite single-step stability} for all $\pi\in \Pi$ and using the fact that $1-\exp(-x)\le x$, we have
\begin{equation}\label{eq:infinite horizon stability RHS}
    \sum_{\pi\in \Pi}(p_j(\pi)-p_{j+1}(\pi))\sum_{t\in \mathcal T_j}\langle \mu_\pi^t,\hat \ell^t\rangle\le \eta \sum_{t'\in \mathcal T_j}\lVert \hat \ell^{t'}\rVert_1\cdot \sum_{\pi \in \Pi} p_j(\pi)\sum_{t\in \mathcal T_j}\langle \mu_\pi^t,\hat \ell^t\rangle,\quad \forall j\in [J].
\end{equation}


Again noticing that $M^t=\min\{\text{Geo}(q^t(s^t,a^t)),L\}$ where $q^t(s,a)=\sum_{\pi \in \Pi}p_j(\pi)\mu_\pi^t(s,a)$ if $t\in \mathcal T_j$. Then calculate the expectation of $\hat \ell^t(s,a)$ only with respect to $M^t$, we will have
\begin{equation*}
    \E\left [\hat \ell^t(s,a)\middle \vert (s^t,a^t)=(s,a)\right ]\le \frac{\ell^t(s,a)}{q^t(s,a)}.
\end{equation*}

Let $\mathbbm 1^t(s,a)$ be the shorthand notation of $\mathbbm 1[(s^t,a^t)=(s,a)]$. Then for those $t'\ne t$ in \Cref{eq:infinite horizon stability RHS},
\begin{align*}
    &\quad \eta \E\left [\sum_{t\in \mathcal T_j}\sum_{s,a} \sum_{\pi\in \Pi}p_j(\pi)\mu^t(s,a)\hat \ell^t(s,a) \sum_{t'\ne t}\| \hat \ell^{t'}\|_1 \middle \vert \mathcal F_{(j-1)H}\right ]\\
    &\overset{(a)}{\le} \eta \E\left [\sum_{h=1}^H \sum_{s,a} \mathbbm 1^t(s,a) \ell^t(s,a)\frac{\sum_{\pi \in \Pi}p_j(\pi)\mu_\pi^t(s,a)}{q^t(s,a)}\sum_{t'\ne t}\| \hat \ell^{t'}\|_1\middle \vert \mathcal F_{(j-1)H}\right ]\\
    &\overset{(b)}{\le} \eta H \E\left [\sum_{t'\in \mathcal T_j}\sum_{s,a}\| \hat \ell^{t'}\|_1\middle \vert \mathcal F_{(j-1)H}\right ]\overset{(c)}{\le} \eta H^2SA.
\end{align*}

where (a) is taking expectation w.r.t. $M^t$, (b) used the definition of $q^t$ together with the fact that $\sum_{(s,a)}\mathbbm 1^t(s,a)=1$, and (c) used the fact that $\E[\hat \ell^{t'}(s,a)\mid \mathcal F_{(j-1)H}]\le \ell^{t'}(s,a)\le 1$ (\Cref{lem:expectation of GR}).

For those terms with $t'=t$ in \Cref{eq:infinite horizon stability RHS}, by direct calculation and the fact that $\hat\ell^t$ is a one-hot vector, we can bound them as
\begin{equation*}
    \eta \E\left [\sum_{t\in \mathcal T_j}\sum_{s,a}\sum_{\pi \in \Pi}p_j(\pi)\mu_\pi^t(s,a)\left (\hat \ell^t(s,a)\right )^2\middle\vert \mathcal F_{(j-1)H}\right ]\le 2\eta \E\left [\sum_{h,s,a}\frac{q^t(s,a)}{q^t(s,a)}\middle \vert \mathcal F_{(j-1)H}\right ]\le 2\eta HSA
\end{equation*}

by noticing $\E[(\hat \ell^t(s,a))^2\mid \mathcal F_{(j-1)H}]\le 2(q^t(s,a))^{-1}$ (\Cref{lem:variance of GR}). Combining the terms with $t'\ne t$ and the ones with $t'=t$ gives our conclusion.
\end{proof}

\begin{proof}[Proof of \Cref{lem:infinite single-step stability}]
The proof will be similar to, but different from \Cref{lem:appendix known single-step stability}, as we are now adding different perturbations. We now use a slightly different definition of the $\text{best}$-function. Let $\pi=\text{best}(\ell,z)$ where $\ell=\{\ell^1,\ell^2,\ldots,\ell^m\}$ and $z=\{z^1,z^2,\ldots,z^T\}$ to denote
\begin{equation*}
    \pi=\argmin_{\pi \in \Pi}\left (\sum_{t=1}^m\langle \mu_\pi^t,\ell^t\rangle+ \sum_{t=1}^T \langle \mu_\pi^t,z^t\rangle\right ).
\end{equation*}

Then we have
\begin{align*}
    &\quad p_{j}(\pi)=\int_{z}\mathbbm 1\left [\pi=\text{best}\left (\{\hat \ell^1,\ldots,\hat \ell^{(j-1)H}\},\{z^1,z^2,\ldots,z^T\}\right )\right ]f\left (z\right )~\mathrm{d}z\\
    &=\int_z \mathbbm 1\left [\pi=\text{best}\left (\{\hat \ell^1,\ldots,\hat \ell^{(j-1)H}\},\{z^1,\ldots,z^{(j-1)H},z^{(j-1)H+1}+\hat \ell^{(j-1)H+1},\ldots,z^{jH}+\hat \ell^{jH},z^{jH+1},\ldots,z^T\}\right )\right ]\\
    &\qquad f\left (z+\{0,\ldots,0,\hat \ell^{(j-1)H+1},\ldots,\hat \ell^{jH},0,0,\ldots,0\}\right )~\mathrm{d}z\\
    &=\int_z \mathbbm 1\left [\pi=\text{best}\left (\{\hat \ell^1,\ldots,\hat \ell^{(j-1)H},\hat \ell^{(j-1)H+1},\ldots \hat \ell^{jH}\},z\right )\right ]\\
    &\qquad f\left (z+\{0,\ldots,0,\hat \ell^{(j-1)H+1},\ldots,\hat \ell^{jH},0,0,\ldots,0\}\right )~\mathrm{d}z,
\end{align*}

where $f(z)$ is the probability density function of $z$ and the second step makes use of the fact that $z+\hat \ell^j$ is still linear in $z$. Moreover, 
\begin{equation*}
    p_{j+1}(\pi)=\int_{z}\mathbbm 1\left [\pi=\text{best}\left (\{\hat \ell^1,\ldots,\hat \ell^{jH}\},z\right )\right ]f\left (z\right )~\mathrm{d}z.
\end{equation*}

For simplicity, denote $\tilde \ell^j=\{0,\ldots,0,\hat \ell^{(j-1)H+1},\ldots,\hat \ell^{jH},0,0,\ldots,0\}=\{\mathbbm 1[t\in \mathcal T_j]\hat \ell^t\}_{t=1}^T$. Again using the fact that $f(z)=\prod_{h=1}^H \exp(-\eta \lVert z^h\rVert_1)$, we have
\begin{equation*}
    f\left (z+\tilde \ell^j\right )=\prod_{t\in \mathcal T_j}\exp\left (-\eta \left (\lVert z^t+\hat \ell^{t}\rVert_1-\lVert z^t\rVert\right )\right )f(z),
\end{equation*}

which gives
\begin{equation*}
    \frac{f\left (z+\tilde \ell^j\right )}{f(z)}\in \left [\exp\left (-\eta \sum_{t\in \mathcal T_j}
    \lVert \hat \ell^t\rVert_1\right ),\exp\left (\eta \sum_{t\in \mathcal T_j} 
    \lVert \hat \ell^t\rVert_1\right )\right ]
\end{equation*}

by triangle inequality. Therefore, $\nicefrac{p_{j+1}(\pi)}{p_j(\pi)}$ lies in this interval as well, which is just our claim.
\end{proof}

%
%

\subsection{Hedge-Based Inefficient Algorithm (\Cref{thm:regret of Hedge})}\label{sec:appendix infinite-horizon Hedge}
In this section, we present our Hedge-based inefficient algorithm for infinite-horizon AMDPs with bandit feedback and known transitions. We still use the same epoching mechanism as \Cref{alg:appendix infinite-horizon bandit feedback known transition}.

For Hedge, which is different from FTPL, we will \textit{explicitly} maintain a distribution $p_j\in \triangle(\Pi)$ over all policies for each epoch, and randomly draw one $\pi_j\sim p_j$ for the $j$-th epoch. As the distribution $p_j$ can be directly calculated (we do not care about computational efficiency now), we can use importance weighting estimator to estimate the losses. The algorithm is presented in \Cref{alg:appendix hedge}.

\begin{algorithm}[!t]
\caption{Hedge for Infinite-horizon AMDPs with Bandit Feedback and Known Transition}
\label{alg:appendix hedge}
\begin{algorithmic}[1]
\Require Learning rate $\eta$. Number of epochs $J$.
\For{$j=1,2,\ldots,J$}
\State Calculate the distribution of policies for the $j$-th epoch as
\begin{equation}\label{eq:update rul of Hedge}
    p_j(\pi)\propto \exp\left (-\eta \sum_{t=1}^{(j-1)H}\langle \mu_\pi^t,\hat \ell^t\rangle \right ).
\end{equation}
\State Sample the policy $\pi_j\sim p_j$ for this epoch.
\State Execute \Cref{alg:policy switching} with parameters $s^t,\pi_j,t$ (note that \Cref{alg:policy switching} will update $t$ internally).
\For{All remaining $\mathcal T_j\cap [t,T]$}
\State Play $a^t=\pi_j(s^t)$, observe loss $\ell^t(s^t,a^t)$ and the next state $s^{t+1}$. Set
\begin{equation*}
    \hat \ell^t(s,a)=\mathbbm 1[(s,a)=(s^t,a^t)]\frac{\ell^t(s^t,a^t)}{\sum_{\pi \in \Pi}p_j(\pi)\mu_\pi^t(s,a)},\quad \forall (s,a)\in \mS\times \mA.
\end{equation*}
\EndFor
\EndFor
\end{algorithmic}
\end{algorithm}

\begin{proof}[Proof of \Cref{thm:regret of Hedge}]
As \Cref{sec:appendix infinite-horizon FTPL}, we still define $\tilde{\mathcal R_T}$ as \Cref{eq:def of R_T tilde}. We can still conclude that $\mathcal R_T\le \tilde{\mathcal R_T}+JD^2$. We first show that the importance weighting estimator is indeed unbiased. Notice that the probability of visiting $(s,a)$ at some slot $t\in \mathcal T_j$ is exactly $\sum_{\pi\in \Pi}p_j(\pi)\mu_\pi^t(s,a)$, which means, by \Cref{lem:mean of IW}, we have
\begin{equation*}
    \E\left [\hat \ell^{t}(s,a)\middle \vert \mathcal F_{(j-1)H}\right ]=\ell^t(s,a),\quad \forall (s,a)\in \mS\times \mA,t\in \mathcal T_j,j\in [J].
\end{equation*}

Let $\tilde \ell_j(\pi)$ be the random variable denoting the total loss of policy $\pi$ for epoch $j$:
\begin{equation*}
    \tilde \ell_j(\pi)=\sum_{t\in \mathcal T_j}\langle \mu_\pi^t, \hat \ell^t\rangle.
\end{equation*}

So \Cref{eq:update rul of Hedge} is just $p_j(\pi)\propto \exp(-\eta \sum_{j'=1}^{j-1}\tilde \ell_{j'}(\pi))$. Therefore, by standard properties of Hedge (\Cref{lem:property about Hedge}), for any realization of $\{\tilde \ell_j\}_{j\in [J]}$ (and also $\{\hat \ell^t\}_{t\in [T]}$), we will have
\begin{equation}
    \sum_{j=1}^J \langle p_j,\tilde \ell_j\rangle-\sum_{j=1}^J \tilde \ell_j(\pi^\ast)\le \frac{\ln \lvert \Pi\rvert}{\eta}+\eta \sum_{j=1}^J \sum_{\pi \in \Pi}p_j(\pi)\tilde \ell_j^2(\pi).\label{eq:Hedge non-randomized regret}
\end{equation}

Consider the second term of the right-handed-side. For a fixed $j\in [J]$, it becomes
\begin{align*}
    \sum_{\pi \in \Pi}p_j(\pi)\tilde \ell_j^2(\pi)&=\sum_{\pi \in \Pi}p_j(\pi)\left (\sum_{t\in \mathcal T_j}\sum_{(s,a)\in \mS\times \mA}\mu_\pi^t(s,a)\hat \ell^t(s,a)\right )^2\\
    &\le H\sum_{\pi \in \Pi}p_j(\pi)\sum_{t\in \mathcal T_j}\left (\sum_{(s,a)\in \mS\times \mA}\mu_\pi^t(s,a)\hat \ell^t(s,a)\right )^2\\
    &= H\sum_{\pi \in \Pi}p_j(\pi)\sum_{t\in \mathcal T_j}\sum_{(s,a)\in \mS\times \mA}\left (\mu_\pi^t(s,a)\hat \ell^t(s,a)\right )^2,
\end{align*}

where the first inequality made use of Cauchy-Schwartz inequality while the second equality used the fact that $\hat \ell_t$ is one-hot. Plugging back into \Cref{eq:Hedge non-randomized regret} and taking expectation on both sides,
\begin{align*}
    &\quad \E\left [\sum_{j=1}^J \langle p_j,\tilde \ell_j\rangle-\sum_{j=1}^J \tilde \ell_j(\pi^\ast)\right ]\\
    &\overset{(a)}{\le} \frac{S\ln A}{\eta}+H\sum_{j=1}^J\E\left [\sum_{\pi \in \Pi}p_j(\pi)\sum_{t\in \mathcal T_j}\sum_{(s,a)\in \mS\times \mA}\mu_\pi^t(s,a)\left (\hat \ell^t(s,a)\right )^2\middle \vert \mathcal F_{(j-1)H}\right ]\\
    &\overset{(b)}{\le}\frac{S\ln A}{\eta}+\eta H\sum_{j=1}^J\E\left [\sum_{\pi \in \Pi}p_j(\pi)\sum_{t\in \mathcal T_j}\sum_{(s,a)\in \mS\times \mA}\mu_\pi^t(s,a)\cdot \frac{1}{\sum_{\pi \in \Pi}p_j(\pi)\mu_\pi^t(s,a)}\middle \vert \mathcal F_{(j-1)H}\right ]\\
    &=\frac{S\ln A}{\eta}+\eta H^2JSA=\frac{S\ln A}{\eta}+\eta HSAT,
\end{align*}

where (a) used $\mu_\pi^t(s,a)\le 1$ and (b) used \Cref{lem:variance of IW}. Moreover, for the left-hand side, we have
\begin{equation*}
    \E\left [\sum_{j=1}^J \langle p_j,\tilde \ell_j\rangle-\sum_{j=1}^J \tilde \ell_j(\pi^\ast)\right ]=\sum_{j=1}^J \E\left [\sum_{\pi\in \Pi}p_j(\pi)\sum_{t\in \mathcal T_j}\langle \mu_\pi^t-\mu_{\pi^\ast}^t,\hat \ell^t\rangle\middle \vert \mathcal F_{(j-1)H}\right ].
\end{equation*}

By using \Cref{lem:mean of IW}, this is exactly
\begin{equation*}
    \sum_{j=1}^J \E\left [\sum_{\pi\in \Pi}p_j(\pi)\sum_{t\in \mathcal T_j}\langle \mu_\pi^t-\mu_{\pi^\ast}^t,\hat \ell^t\rangle\middle \vert \mathcal F_{(j-1)H}\right ]=\E\left [\sum_{j=1}^J\sum_{t\in \mathcal T_j}\langle \mu_{\pi_j}^t-\mu_{\pi^\ast}^t,\ell^t\rangle\right ]=\tilde{\mathcal R_T}.
\end{equation*}

Therefore, we will have
\begin{equation*}
    \mathcal R_T\le \tilde{\mathcal R_T}+JD^2\le \frac{S\ln A}{\eta}+\eta HSAT+JD^2,
\end{equation*}

which gives $\mathcal R_T=\Otil\left (S^{\nicefrac 23}A^{\nicefrac 13}D^{\nicefrac 23}T^{\nicefrac 23}\right )$ when picking $J=S^{\nicefrac 23}A^{\nicefrac 13}D^{-\nicefrac 43}T^{\nicefrac 23}$ and $\eta=S^{\nicefrac 13}A^{-\nicefrac 13}D^{-\nicefrac 23}T^{-\nicefrac 23}$.
\end{proof}

\begin{lemma}[Property of Hedge]\label{lem:property about Hedge}
Suppose that we are using Hedge for $T$-round online learning problem that has $K$ actions, i.e., at time slot $t\in [T]$, picking $i_t$ according to the probability distribution $p_t\in \triangle([K])$ which is defined as:
\begin{equation*}
    p_t(i)\propto \exp\left (-\eta \sum_{\tau=1}^{t-1}\ell_\tau(i)\right ),\quad \forall i\in [K],t\in [T],
\end{equation*}

where $\ell_t(i)\ge 0$ is the non-negative loss associated with action $i$ at time slot $t$. Then, for all $i^\ast\in [K]$, we have
\begin{equation*}
    \sum_{t=1}^T (\langle p_t,\ell_t\rangle-\ell_t(i^\ast))\le \frac{\ln K}{\eta}+\eta \sum_{t=1}^T \sum_{i=1}^K p_t(i)\ell_t^2(i).
\end{equation*}

Note that here we are considering non-randomized loss functions here.
\end{lemma}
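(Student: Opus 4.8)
The plan is to run the standard potential-function (``weights'') argument for exponential weights / Hedge. Introduce the unnormalized weights $w_t(i)=\exp\bigl(-\eta\sum_{\tau=1}^{t-1}\ell_\tau(i)\bigr)$ and the potential $W_t=\sum_{i=1}^K w_t(i)$, so that $p_t(i)=w_t(i)/W_t$ and $W_1=K$. The entire proof reduces to bounding $\ln(W_{T+1}/W_1)$ from below and from above.

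For the lower bound I would simply keep only the $i^\ast$ term in $W_{T+1}$: since all weights are nonnegative, $W_{T+1}\ge w_{T+1}(i^\ast)=\exp\bigl(-\eta\sum_{t=1}^T\ell_t(i^\ast)\bigr)$, and hence $\ln(W_{T+1}/W_1)\ge -\eta\sum_{t=1}^T\ell_t(i^\ast)-\ln K$. For the upper bound I would telescope over single rounds. For each $t$ one has $W_{t+1}/W_t=\sum_{i}p_t(i)\,e^{-\eta\ell_t(i)}$. Because $\eta\ell_t(i)\ge 0$ (this is exactly where the nonnegativity of the losses is used, and why one needs a second-order bound on $e^{-x}$ valid on all of $[0,\infty)$ rather than merely on $[0,1]$), I would invoke the elementary inequality $e^{-x}\le 1-x+x^2$ for $x\ge 0$ — verified by noting that $g(x)=1-x+x^2-e^{-x}$ satisfies $g(0)=g'(0)=0$ and $g''(x)=2-e^{-x}>0$ — to obtain $W_{t+1}/W_t\le 1-\eta\langle p_t,\ell_t\rangle+\eta^2\sum_i p_t(i)\ell_t(i)^2$. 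Then $\ln(1+u)\le u$ gives $\ln(W_{t+1}/W_t)\le -\eta\langle p_t,\ell_t\rangle+\eta^2\sum_i p_t(i)\ell_t(i)^2$, and summing over $t\in[T]$ produces the upper bound on $\ln(W_{T+1}/W_1)$.

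Finally I would combine the two bounds: $-\eta\sum_t\ell_t(i^\ast)-\ln K\le -\eta\sum_t\langle p_t,\ell_t\rangle+\eta^2\sum_t\sum_i p_t(i)\ell_t(i)^2$, move the loss terms to one side, and divide by $\eta>0$ to recover exactly the claimed inequality $\sum_{t=1}^T\bigl(\langle p_t,\ell_t\rangle-\ell_t(i^\ast)\bigr)\le \tfrac{\ln K}{\eta}+\eta\sum_{t=1}^T\sum_{i=1}^K p_t(i)\ell_t^2(i)$. There is no genuine obstacle here — the argument is textbook; the only point that deserves a word of care in this context is that the losses (here the re-weighted estimators $\hat\ell^t$, which can be as large as an inverse occupancy) are unbounded nonnegative reals, so one must use the version of the $e^{-x}$ expansion that holds for every $x\ge0$ rather than the $x\in[0,1]$ variant, while everything else is routine.
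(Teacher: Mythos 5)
Your proof is correct and is essentially the same argument as the paper's: the paper works with the potential $\Phi_t=\frac1\eta\ln\bigl(\sum_i\exp(-\eta L_t(i))\bigr)$, which is just $\frac1\eta\ln W_{t+1}$ in your notation, telescopes $\Phi_t-\Phi_{t-1}$ using $e^{-x}\le 1-x+x^2$ for $x\ge0$ and $\ln(1+u)\le u$, and lower-bounds the final potential by the single term at $i^\ast$. Your remark about needing the version of the quadratic bound valid on all of $[0,\infty)$ (because the loss estimators are unbounded) is exactly the point the paper relies on as well.
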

\begin{proof}
For simplicity, define $L_t(i)$ as $\sum_{\tau=1}^t \ell_\tau(i)$. Let
\begin{equation*}
    \Phi_t=\frac 1\eta \ln \left (\sum_{i=1}^K \exp\left (-\eta L_t(i)\right )\right ),
\end{equation*}

then
\begin{align*}
    \Phi_t-\Phi_{t-1}&=\frac 1\eta \ln \left (\frac{\sum_{i=1}^K \exp(-\eta L_t(i))}{\sum_{i=1}^K \exp(-\eta L_{t-1}(i))}\right )\\
    &=\frac 1\eta \ln \left (\sum_{i=1}^Kp_t(i) \exp(-\eta \ell_t(i))\right )\\
    &\overset{(a)}{\le} \frac 1\eta \ln \left (\sum_{i=1}^K p_t(i) (1-\eta \ell_t(i)+\eta^2 \ell_t^2(i))\right )\\
    &=\frac 1\eta \ln \left (1-\eta \langle p_t,\ell_t\rangle+\eta^2 \sum_{i=1}^K p_t(i)\ell_t^2(i)\right )\\
    &\overset{(b)}{\le}-\langle p_t,\ell_t\rangle+\eta \sum_{i=1}^K p_t(i)\ell_t^2(i),
\end{align*}

where (a) used $\exp(-x)\le 1-x+x^2$ for all $x\ge 0$ and (b) used $\ln(1+x)\le x$. Therefore, summing over $t$ gives
\begin{align*}
    \sum_{t=1}^T \langle p_t,\ell_t\rangle&\le \Phi_0-\Phi_T+\eta \sum_{t=1}^T \sum_{i=1}^N p_t(i)\ell_t^2(i)\\
    &\le \frac{\ln N}{\eta}-\frac 1\eta \ln \left (\exp(-\eta L_T(i^\ast))\right )+\eta \sum_{t=1}^T \sum_{i=1}^N p_t(i)\ell_t^2(i)\\
    &\le \frac{\ln N}{\eta}+L_T(i^\ast)+\eta \sum_{t=1}^T \sum_{i=1}^N p_t(i)\ell_t^2(i).
\end{align*}

Moving $L_T(i^\ast)$ to the left-handed-side then gives our conclusion.
\end{proof}

\section{Auxiliary Lemmas}

\subsection{Geometric Re-sampling Properties}
In this section, we list two properties of the Geometric Re-sampling estimator \citep{neu2013efficient} that we used in the analysis. For the sake of completeness, we also include their proofs here.
\begin{lemma}[{\citet[Lemma 1]{neu2013efficient}}]\label{lem:expectation of GR}
Consider the Geometric Re-sampling estimator
\begin{equation}\label{eq:GR def appendix}
    \hat \ell_k^h(s,a)=\mathbbm 1[(s_k^h,a_k^h)=(s,a)]M_k^h(s,a)\ell_k^h(s,a).
\end{equation}

Let $\Pr\{(s_k^h,a_k^h)=(s,a)\mid \mathcal F_{k-1}\}=q_k^h(s,a)$. Suppose that the probability of visiting $(s,a)$ in the re-sampling process is also $q_k^h(s,a)$, then we have
\begin{equation*}
    \E\left [\hat \ell_k^h(s,a)\middle \vert \mathcal F_{k-1}\right ]=\left (1-(1-q_k^h(s,a))^L\right )\ell_k^h(s,a).
\end{equation*}
\end{lemma}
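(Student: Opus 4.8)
The plan is to condition on whether the agent's realized trajectory visits $(s,a)$ at step $h$ and then analyze the truncated geometric count produced by the re-sampling loop. Fix $(s,a)\in\mS\times\mA$ and abbreviate $q\triangleq q_k^h(s,a)$. Since $\hat\ell_k^h(s,a)$ is supported on the event $\mathcal A\triangleq\{(s_k^h,a_k^h)=(s,a)\}$, which by definition satisfies $\Pr\{\mathcal A\mid\mathcal F_{k-1}\}=q$, I would first write
\[
\E\bigl[\hat\ell_k^h(s,a)\bigm|\mathcal F_{k-1}\bigr]=\Pr\{\mathcal A\mid\mathcal F_{k-1}\}\cdot\E\bigl[M_k^h(s,a)\bigm|\mathcal A,\mathcal F_{k-1}\bigr]\cdot\ell_k^h(s,a)=q\,\ell_k^h(s,a)\,\E\bigl[M_k^h(s,a)\bigm|\mathcal A,\mathcal F_{k-1}\bigr].
\]

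The key observation is that the re-sampling loop uses fresh perturbations $\tilde z$ and fresh simulated transitions that are independent of the agent's realized trajectory; hence, conditionally on $\mathcal F_{k-1}$ (and on $\mathcal A$), the number of trials $M_k^h(s,a)$ is distributed as $\min\{G,L\}$, where $G\sim\text{Geo}(q')$ and $q'$ is the per-trial probability of visiting $(s,a)$ in the re-sampling process, which the statement assumes equals $q$. I would then record the tail probabilities $\Pr\{\min\{G,L\}\ge n\}=\Pr\{G\ge n\}=(1-q)^{n-1}$ for $1\le n\le L$ and $0$ for $n>L$, and compute
\[
\E[\min\{G,L\}]=\sum_{n\ge1}\Pr\{\min\{G,L\}\ge n\}=\sum_{n=1}^L(1-q)^{n-1}=\frac{1-(1-q)^L}{q}.
\]
Substituting back cancels the leading factor $q$ and yields $\E[\hat\ell_k^h(s,a)\mid\mathcal F_{k-1}]=(1-(1-q)^L)\,\ell_k^h(s,a)$; the degenerate case $q=0$, where both sides vanish, can be checked directly or handled by continuity of both sides in $q$.

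I expect the only genuinely delicate point to be the independence/conditioning argument in the second step — specifically, that conditioning on $\mathcal A$ does not change the distribution of the re-sampling count, and that the per-trial success probability inside the loop coincides with the visiting probability $q_k^h(s,a)$ of the real interaction (which is exactly the hypothesis imposed in the lemma statement, and is where the matched sampling design enters). Everything after that reduces to the elementary truncated-geometric expectation computed above.
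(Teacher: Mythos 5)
Your proposal is correct and follows essentially the same route as the paper: factor the expectation over the visit event of probability $q_k^h(s,a)$, then compute the expectation of the truncated geometric count $\min\{\text{Geo}(q),L\}$ to get $\frac{1-(1-q)^L}{q}$, which cancels the leading $q$. The only cosmetic difference is that you evaluate the truncated-geometric mean via the tail-sum identity while the paper subtracts the expected overshoot beyond $L$; both are one-line computations yielding the same expression.
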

\begin{proof}
By direct calculation, we have
\begin{align*}
    &\quad \E\left [M_k^h(s,a)\middle \vert \mathcal F_{k-1},(s_k^h,a_k^h)=(s,a)\right ]=\sum_{n=1}^\infty n(1-q)^{n-1}q-\sum_{n=L}^\infty (n-L)(1-q)^{n-1}q\\
    &=\left (1-(1-q)^L\right )\sum_{n=1}^\infty n(1-q)^{n-1}q=\frac{1-(1-q)^L}{q}.
\end{align*}

So we have
\begin{align*}
    &\quad \E\left [\hat \ell_k^h(s,a)\middle \vert \mathcal F_{k-1}\right ]=\Pr\{(s_k^h,a_k^h)=(s,a)\mid \mathcal F_{k-1}\}\ell_k^h(s,a)\E\left [M_k^h(s,a)\middle \vert \mathcal F_{k-1},(s_k^h,a_k^h)=(s,a)\right ]\\&=\left (1-(1-q_k^h(s,a))^L\right )\ell_k^h(s,a),
\end{align*}

as desired.
\end{proof}

\begin{lemma}\label{lem:variance of GR}
For the Geometric Re-sampling estimator as defined in the previous lemma, we have
\begin{equation*}
    \E\left [(\hat \ell_k^h(s,a))^2\middle \vert \mathcal F_{k-1}\right ]\le \frac{2(\ell_k^h(s,a))^2}{q_k^h(s,a)}.
\end{equation*}
\end{lemma}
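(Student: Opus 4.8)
The plan is to compute the conditional second moment directly, using the same two-step conditioning as in the proof of the preceding lemma (first condition on $(s_k^h,a_k^h)=(s,a)$, then average over $M_k^h(s,a)$). Write $q = q_k^h(s,a)$ and $\ell = \ell_k^h(s,a)$ for brevity. Since $\hat\ell_k^h(s,a) = \mathbbm 1[(s_k^h,a_k^h)=(s,a)] \cdot M_k^h(s,a) \cdot \ell$, squaring and taking conditional expectation gives
\[
\E\big[(\hat\ell_k^h(s,a))^2 \mid \mathcal F_{k-1}\big] = q \cdot \ell^2 \cdot \E\big[(M_k^h(s,a))^2 \mid \mathcal F_{k-1},\,(s_k^h,a_k^h)=(s,a)\big],
\]
using $\mathbbm 1[\cdot]^2 = \mathbbm 1[\cdot]$ and $\Pr\{(s_k^h,a_k^h)=(s,a)\mid \mathcal F_{k-1}\} = q$. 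So the whole lemma reduces to showing $\E[(M_k^h(s,a))^2] \le 2/q^2$, where $M_k^h(s,a) = \min\{\mathrm{Geo}(q), L\}$ is a truncated geometric random variable.

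For that, I would bound $\E[(M_k^h(s,a))^2]$ by the untruncated second moment $\E[\mathrm{Geo}(q)^2]$, which only makes things larger since $\min\{\mathrm{Geo}(q),L\} \le \mathrm{Geo}(q)$ pointwise. Then I invoke the standard formula for a geometric random variable: $\E[\mathrm{Geo}(q)] = 1/q$ and $\mathrm{Var}(\mathrm{Geo}(q)) = (1-q)/q^2$, hence
\[
\E[\mathrm{Geo}(q)^2] = \mathrm{Var}(\mathrm{Geo}(q)) + \E[\mathrm{Geo}(q)]^2 = \frac{1-q}{q^2} + \frac{1}{q^2} = \frac{2-q}{q^2} \le \frac{2}{q^2}.
\]
Combining, $\E[(\hat\ell_k^h(s,a))^2 \mid \mathcal F_{k-1}] \le q\ell^2 \cdot (2/q^2) = 2\ell^2/q = 2(\ell_k^h(s,a))^2/q_k^h(s,a)$, which is exactly the claim.

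This is essentially a routine computation, so there is no serious obstacle; the only thing to be careful about is the conditioning structure. The subtle point — and the reason the statement is phrased as it is — is that the geometric random variable governing $M_k^h(s,a)$ has parameter equal to the probability that a single re-sampling trial visits $(s,a)$, which is \emph{assumed} to equal $q_k^h(s,a)$ (the same as the true visitation probability). This matching assumption is what lets one factor $\E[(\hat\ell_k^h(s,a))^2\mid\mathcal F_{k-1}]$ as $q \cdot \ell^2 \cdot \E[M^2]$ with the \emph{same} $q$ in both places; in the unknown-transition case this assumption fails, which is precisely why the paper later needs the separate lemma \Cref{lem:variance of GR varying}. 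Given the assumption, the truncation can only help and the geometric second-moment identity closes the proof immediately.
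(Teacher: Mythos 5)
Your proof is correct and follows essentially the same route as the paper's: factor the conditional second moment as $q\cdot\ell^2\cdot\E[(M_k^h(s,a))^2\mid\cdot]$, bound the truncated geometric's second moment by the untruncated one, and use $\mathrm{Var}+(\E)^2\le 2/q^2$. (Your arithmetic $\frac{1-q}{q^2}+\frac{1}{q^2}=\frac{2-q}{q^2}$ is in fact cleaner than the paper's, which has a small typo writing $(\E[\mathrm{Geo}(q)])^2$ as $\frac1q$; the final bound is the same, and your remark about why the matching of the two probabilities matters is exactly the reason the paper later needs \Cref{lem:variance of GR varying}.)
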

\begin{proof}
By definition, write
\begin{align}
    &\quad \E\left [(\hat \ell_k^h(s,a))^2\middle \vert \mathcal F_{k-1}\right ]=\E[\mathbbm 1[(s_k^h,a_k^h)=(s,a)]^2(\ell_k^h(s,a))^2(M_k^h(s,a))^2]\nonumber\\
    &=\Pr\{(s_k^h,a_k^h)=(s,a)\}(\ell_k^h(s,a))^2\E\left [(M_k^h(s,a))^2\middle \vert \mathcal F_{k-1},(s_k^h,a_k^h)=(s,a)\right ].\label{eq:GR variance decompose}
\end{align}

Simply write $q_k^h(s,a)$ as $q$. Note that $M_k^h(s,a)=\min\{L,\text{Geo}(q)\}$, it is stochastically dominated by the geometric distribution with parameter $q$, whose second moment is bounded by
\begin{align}
    &\quad \E\left [(M_k^h(s,a))^2\middle \vert \mathcal F_{k-1},(s_k^h,a_k^h)=(s,a)\right ]\nonumber\\
    &\le \E[(\text{Geo}(q))^2]=\text{Var}(\text{Geo}(q))+(\E[\text{Geo}(q)])^2=\frac{1-q}{q^2}+\frac{1}{q}\le \frac{2}{q^2},\label{eq:Geo variance}
\end{align}

which means
\begin{equation*}
    \E\left [(\hat \ell_k^h(s,a))^2\middle \vert \mathcal F_{k-1}\right ]\le q(\ell_k^h(s,a))^2\frac{2}{q^2}=\frac{2(\ell_k^h(s,a))^2}{q},
\end{equation*}

as claimed.
\end{proof}

\begin{corollary}\label{lem:expectation of GR varying}
Still consider the GR estimator defined in \Cref{eq:GR def appendix}. Suppose that $\Pr\{(s_k^h,a_k^h)=(s,a)\mid \mathcal F_{k-1}\}=\hat q_k^h(s,a)$ and the probability of visiting $(s,a)$ in each re-sampling procedure is $q_k^h(s,a)$ (where $\hat q_k^h(s,a)\ne q_k^h(s,a)$). We then have
\begin{equation*}
    \E\left [\hat \ell_k^h(s,a)\middle \vert \mathcal F_{k-1}\right ]=\frac{\hat q_k^h(s,a)}{q_k^h(s,a)}\left (1-(1-q_k^h(s,a))^L\right )\ell_k^h(s,a).
\end{equation*}
\end{corollary}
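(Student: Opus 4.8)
# Proof Proposal for Corollary (expectation of GR estimator with varying probabilities)

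The plan is to mirror the proof of \Cref{lem:expectation of GR}, tracking carefully where the two probabilities $\hat q_k^h(s,a)$ and $q_k^h(s,a)$ play distinct roles. First I would condition on the history $\mathcal F_{k-1}$ and on the event $(s_k^h,a_k^h)=(s,a)$, which occurs with probability $\hat q_k^h(s,a)$ (this is the probability governed by the actual interaction with the environment, i.e., by $p_k$ and the true occupancy measure). Conditioned on this event, the random variable $M_k^h(s,a)$ is exactly $\min\{L,\mathrm{Geo}(q_k^h(s,a))\}$, because each re-sampling trial visits $(s,a)$ with probability $q_k^h(s,a)$ (the upper-occupancy-measure probability), independently of how the event $(s_k^h,a_k^h)=(s,a)$ itself came about.

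Next I would compute $\E[M_k^h(s,a)\mid \mathcal F_{k-1},(s_k^h,a_k^h)=(s,a)]$. This is precisely the truncated-geometric mean already derived inside the proof of \Cref{lem:expectation of GR}: writing $q=q_k^h(s,a)$, one has
\begin{equation*}
\E\left[\min\{L,\mathrm{Geo}(q)\}\right]=\frac{1-(1-q)^L}{q}.
\end{equation*}
I would simply invoke that identity rather than re-derive it. Then, by the definition of the estimator \Cref{eq:GR def appendix} and the tower property,
\begin{equation*}
\E\left[\hat\ell_k^h(s,a)\mid \mathcal F_{k-1}\right]
=\hat q_k^h(s,a)\,\ell_k^h(s,a)\cdot \frac{1-(1-q_k^h(s,a))^L}{q_k^h(s,a)}
=\frac{\hat q_k^h(s,a)}{q_k^h(s,a)}\left(1-(1-q_k^h(s,a))^L\right)\ell_k^h(s,a),
\end{equation*}
which is exactly the claimed formula. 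As a sanity check, when $\hat q_k^h=q_k^h$ this collapses to \Cref{lem:expectation of GR}.

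There is no real obstacle here — the only subtlety worth stating explicitly in the write-up is the \emph{independence} point: the indicator $\mathbbm 1[(s_k^h,a_k^h)=(s,a)]$ depends on the realized trajectory (hence on $\hat q_k^h$), whereas $M_k^h(s,a)$ is generated by an independent batch of re-sampling trials each succeeding with probability $q_k^h$, so the expectation factorizes cleanly. Once that is noted, the computation is a one-line substitution into the truncated-geometric mean already established in \Cref{lem:expectation of GR}.
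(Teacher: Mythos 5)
Your proposal is correct and follows essentially the same route as the paper's proof: condition on the visit event of probability $\hat q_k^h(s,a)$, reuse the truncated-geometric mean $\frac{1-(1-q)^L}{q}$ from \Cref{lem:expectation of GR} with $q=q_k^h(s,a)$, and multiply. The explicit remark about independence of the re-sampling trials from the realized trajectory is a nice touch but does not change the argument.
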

\begin{proof}
The calculation of $\E[M_k^h(s,a)\mid \mathcal F_{k-1},(s_k^h,a_k^h)=(s,a)]$ is the same as the one in \Cref{lem:expectation of GR}. Therefore,
\begin{align*}
    &\quad \E\left [\hat \ell_k^h(s,a)\middle \vert \mathcal F_{k-1}\right ]=\Pr\{(s_k^h,a_k^h)=(s,a)\mid \mathcal F_{k-1}\}\ell_k^h(s,a)\E\left [M_k^h(s,a)\middle \vert \mathcal F_{k-1},(s_k^h,a_k^h)=(s,a)\right ]\\&=\frac{\hat q_k^h(s,a)}{q_k^h(s,a)}\left (1-(1-q_k^h(s,a))^L\right )\ell_k^h(s,a),
\end{align*}

as claimed.
\end{proof}

\begin{corollary}\label{lem:variance of GR varying}
Suppose the same condition as the previous corollary, i.e., still considering the GR estimator defined in \Cref{eq:GR def appendix} where $\Pr\{(s_k^h,a_k^h)=(s,a)\mid \mathcal F_{k-1}\}=\hat q_k^h(s,a)$ and the probability of visiting $(s,a)$ in each re-sampling procedure is $q_k^h(s,a)$. We have
\begin{equation*}
    \E\left [(\hat \ell_k^h(s,a))^2\middle \vert \mathcal F_{k-1}\right ]\le \frac{2(\ell_k^h(s,a))^2}{q_k^h(s,a)}\frac{\hat q_k^h(s,a)}{q_k^h(s,a)}.
\end{equation*}
\end{corollary}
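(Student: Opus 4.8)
The plan is to follow the proof of \Cref{lem:variance of GR} essentially verbatim, being careful only about which of the two probabilities $\hat q_k^h(s,a)$ and $q_k^h(s,a)$ governs which random quantity. First I would expand the second moment using the fact that the indicator squares to itself:
\begin{equation*}
\E\left[(\hat \ell_k^h(s,a))^2\middle\vert \mathcal F_{k-1}\right] = \Pr\{(s_k^h,a_k^h)=(s,a)\mid \mathcal F_{k-1}\}\,(\ell_k^h(s,a))^2\, \E\left[(M_k^h(s,a))^2\middle\vert \mathcal F_{k-1},(s_k^h,a_k^h)=(s,a)\right].
\end{equation*}
By assumption the first factor is exactly $\hat q_k^h(s,a)$, which is where the extra factor $\hat q_k^h/q_k^h$ compared to \Cref{lem:variance of GR} ultimately comes from.

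The key step is bounding the conditional second moment of $M_k^h(s,a)$. Just as in the proof of \Cref{lem:variance of GR}, the random variable $M_k^h(s,a)$ equals $\min\{L,\text{Geo}(q_k^h(s,a))\}$ --- crucially with parameter $q_k^h(s,a)$, the per-trial visiting probability of the re-sampling procedure, and \emph{not} $\hat q_k^h(s,a)$. Hence $M_k^h(s,a)$ is stochastically dominated by $\text{Geo}(q_k^h(s,a))$, whose second moment is $\mathrm{Var}(\text{Geo}(q))+(\E[\text{Geo}(q)])^2 = \frac{1-q}{q^2}+\frac 1q \le \frac{2}{q^2}$ with $q = q_k^h(s,a)$, reusing the computation of \Cref{eq:Geo variance}.

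Combining the two displays then gives
\begin{equation*}
\E\left[(\hat \ell_k^h(s,a))^2\middle\vert \mathcal F_{k-1}\right] \le \hat q_k^h(s,a)\,(\ell_k^h(s,a))^2\cdot \frac{2}{(q_k^h(s,a))^2} = \frac{2(\ell_k^h(s,a))^2}{q_k^h(s,a)}\cdot\frac{\hat q_k^h(s,a)}{q_k^h(s,a)},
\end{equation*}
which is the claimed bound (and reduces to \Cref{lem:variance of GR} when $\hat q_k^h = q_k^h$). I expect no real obstacle here; the only thing to get right is the bookkeeping that separates the probability that the true trajectory visits $(s,a)$ (namely $\hat q_k^h(s,a)$, which controls the leading indicator) from the probability that a re-sampling trial visits it (namely $q_k^h(s,a)$, which controls the law of $M_k^h(s,a)$) --- exactly the same distinction already made in the proof of \Cref{lem:expectation of GR varying}.
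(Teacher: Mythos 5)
Your proposal is correct and matches the paper's proof essentially verbatim: both decompose the second moment as in \Cref{eq:GR variance decompose} with the leading probability now being $\hat q_k^h(s,a)$, and both bound $\E[(M_k^h(s,a))^2\mid\cdot]\le 2/(q_k^h(s,a))^2$ via the geometric-moment computation of \Cref{eq:Geo variance}. The bookkeeping point you emphasize --- that the trajectory probability $\hat q_k^h$ governs the indicator while the re-sampling probability $q_k^h$ governs the law of $M_k^h$ --- is exactly the content of the paper's argument.
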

\begin{proof}
Still decompose the variance as \Cref{eq:GR variance decompose}. Still write $\hat q_k^h(s,a)$ as $\hat q$ and $q_k^h(s,a)$ as $q$. Then we still have $M_k^h(s,a)=\min\{L,\text{Geo}(q)\}$, which gives $\E[(M_k^h(s,a))^2\mid \mathcal F_{k-1},(s_k^h,a_k^h)=(s,a)]\le \frac{2}{q^2}$ by \Cref{eq:Geo variance}. Therefore,
\begin{equation*}
    \E\left [(\hat \ell_k^h(s,a))^2\middle \vert \mathcal F_{k-1}\right ]\le \hat q(\ell_k^h(s,a))^2\frac{2}{q^2}=\frac{2(\ell_k^h(s,a))^2}{q_k^h(s,a)}\frac{\hat q_k^h(s,a)}{q_k^h(s,a)},
\end{equation*}

as claimed.
\end{proof}

\subsection{Importance Weighting Properties}
\begin{lemma}\label{lem:mean of IW}
For the Importance Weighting estimator
\begin{equation*}
    \hat \ell^t(s,a)=\mathbbm 1[(s^t,a^t)=(s,a)]\frac{\ell^t(s^t,a^t)}{\Pr\{(s^t,a^t)=(s,a)\mid \mathcal F\}},\quad \forall (s,a)\in \mS\times \mA,
\end{equation*}

where $\mathcal F$ is a filtration, we will have
\begin{equation*}
    \E[\hat \ell^t(s,a)\mid \mathcal F]=\ell^t(s,a),\quad \forall (s,a)\in \mS\times \mA.
\end{equation*}
\end{lemma}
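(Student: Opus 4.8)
The plan is to verify unbiasedness by a direct conditional-expectation computation, conditioning on the random visited pair $(s^t,a^t)$ while treating everything that is $\mathcal F$-measurable as a constant. Fix an arbitrary $(s,a)\in\mS\times\mA$ for which $p^t(s,a)\triangleq\Pr\{(s^t,a^t)=(s,a)\mid\mathcal F\}>0$ (the estimator is only meaningful for such pairs; for unreachable pairs the indicator vanishes identically and there is nothing to prove). The key observation is that, given $\mathcal F$, the only randomness in $\hat\ell^t(s,a)$ comes from the indicator $\mathbbm 1[(s^t,a^t)=(s,a)]$: the loss function $\ell^t$ is fixed before the game (oblivious adversary) and hence $\mathcal F$-measurable, and the normalizer $p^t(s,a)$ is by definition a deterministic scalar given $\mathcal F$.

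With this in hand, I would first note that on the event $\{(s^t,a^t)=(s,a)\}$ one has $\ell^t(s^t,a^t)=\ell^t(s,a)$, so
\[
\hat\ell^t(s,a)=\mathbbm 1[(s^t,a^t)=(s,a)]\,\frac{\ell^t(s,a)}{p^t(s,a)}
\]
almost surely. Taking conditional expectation and pulling out the $\mathcal F$-measurable factor $\ell^t(s,a)/p^t(s,a)$,
\[
\E\!\left[\hat\ell^t(s,a)\,\middle\vert\,\mathcal F\right]=\frac{\ell^t(s,a)}{p^t(s,a)}\,\E\!\left[\mathbbm 1[(s^t,a^t)=(s,a)]\,\middle\vert\,\mathcal F\right].
\]
Finally, $\E[\mathbbm 1[(s^t,a^t)=(s,a)]\mid\mathcal F]=\Pr\{(s^t,a^t)=(s,a)\mid\mathcal F\}=p^t(s,a)$ by the definition of conditional probability, and the two copies of $p^t(s,a)$ cancel, leaving exactly $\ell^t(s,a)$.

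There is no substantive obstacle here: this is the textbook argument that an inverse-propensity-weighted estimator is unbiased, and the proof is a one-line calculation once the conditioning structure is made explicit. The only point requiring a modicum of care is the degenerate case $p^t(s,a)=0$ mentioned above, which is handled by the usual convention that the estimator is set to $0$ there; since this lemma is invoked in \Cref{thm:regret of Hedge} only through sums of the form $\sum_{\pi}p_j(\pi)\langle\mu_\pi^t,\hat\ell^t\rangle$, where such pairs contribute nothing, this convention causes no loss.
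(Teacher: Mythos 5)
Your proof is correct and takes essentially the same route as the paper's: a direct conditional-expectation computation in which the indicator is the only source of randomness given $\mathcal F$, and the factor $\Pr\{(s^t,a^t)=(s,a)\mid\mathcal F\}$ cancels. The paper's version is a one-line calculation; your extra care about the degenerate case $p^t(s,a)=0$ is fine but not needed beyond the standard convention.
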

\begin{proof}
For simplicity, denote $q^t(s,a)=\Pr\{(s^t,a^t)=(s,a)\mid \mathcal F\}$. Then
\begin{equation*}
    \E[\hat \ell^t(s,a)\mid \mathcal F]=q^t(s,a)\cdot \frac{\ell^t(s,a)}{q^t(s,a)}=\ell^t(s,a)
\end{equation*}

for all $(s,a)\in \mS\times \mA$.
\end{proof}

\begin{lemma}\label{lem:variance of IW}
For the same Importance Weighting Estimator, we will have
\begin{equation*}
    \E\left [(\hat \ell^t(s,a))^2\middle \vert \mathcal F\right ]=\frac{(\ell^t(s,a))^2}{q^t(s,a)},\quad \forall (s,a)\in \mS\times \mA,
\end{equation*}

where $q^t(s,a)\triangleq \Pr\{(s^t,a^t)=(s,a)\mid \mathcal F\}$.
\end{lemma}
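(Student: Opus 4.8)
The final statement is \Cref{lem:variance of IW}, which asserts that for the importance-weighted estimator
$\hat\ell^t(s,a) = \mathbbm 1[(s^t,a^t)=(s,a)]\,\ell^t(s^t,a^t)/q^t(s,a)$ with $q^t(s,a) = \Pr\{(s^t,a^t)=(s,a)\mid \mathcal F\}$, one has $\E[(\hat\ell^t(s,a))^2 \mid \mathcal F] = (\ell^t(s,a))^2/q^t(s,a)$. This is an elementary second-moment computation, completely parallel to the proof of \Cref{lem:mean of IW} that immediately precedes it.

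The plan is simply to expand the square and use that the indicator is idempotent. First I would write $(\hat\ell^t(s,a))^2 = \mathbbm 1[(s^t,a^t)=(s,a)]^2 \cdot (\ell^t(s^t,a^t))^2 / q^t(s,a)^2$, and observe $\mathbbm 1[\cdot]^2 = \mathbbm 1[\cdot]$. On the event $\{(s^t,a^t)=(s,a)\}$ we have $\ell^t(s^t,a^t) = \ell^t(s,a)$, a deterministic quantity, so the numerator $(\ell^t(s^t,a^t))^2$ can be replaced by the constant $(\ell^t(s,a))^2$ and pulled out. Then, taking the conditional expectation given $\mathcal F$, the only randomness left is in the indicator, and $\E[\mathbbm 1[(s^t,a^t)=(s,a)]\mid \mathcal F] = q^t(s,a)$ by definition. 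This gives $\E[(\hat\ell^t(s,a))^2\mid \mathcal F] = q^t(s,a)\cdot (\ell^t(s,a))^2/q^t(s,a)^2 = (\ell^t(s,a))^2/q^t(s,a)$, as claimed. (Implicitly one assumes $q^t(s,a) > 0$; if $q^t(s,a) = 0$ the pair is never visited and the estimator is zero by convention, so the statement holds vacuously.)

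There is essentially no obstacle here — the ``hard part'' is just bookkeeping with the indicator and making sure the loss value on the visited pair coincides with $\ell^t(s,a)$. I would present it as a two-line proof mirroring the structure of \Cref{lem:mean of IW}:

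\begin{proof}
Write $q^t(s,a) = \Pr\{(s^t,a^t)=(s,a)\mid \mathcal F\}$ (assumed positive, else the estimator is identically zero and the claim is trivial). Since $\mathbbm 1[(s^t,a^t)=(s,a)]^2 = \mathbbm 1[(s^t,a^t)=(s,a)]$ and $\ell^t(s^t,a^t) = \ell^t(s,a)$ on the event $\{(s^t,a^t)=(s,a)\}$, we have
\begin{equation*}
    (\hat \ell^t(s,a))^2 = \mathbbm 1[(s^t,a^t)=(s,a)]\,\frac{(\ell^t(s,a))^2}{q^t(s,a)^2}.
\end{equation*}
Taking conditional expectation given $\mathcal F$ and using $\E[\mathbbm 1[(s^t,a^t)=(s,a)]\mid \mathcal F] = q^t(s,a)$ yields
\begin{equation*}
    \E\left [(\hat \ell^t(s,a))^2\middle \vert \mathcal F\right ] = q^t(s,a)\cdot \frac{(\ell^t(s,a))^2}{q^t(s,a)^2} = \frac{(\ell^t(s,a))^2}{q^t(s,a)},
\end{equation*}
as claimed.
\end{proof}
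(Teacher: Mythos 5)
Your proof is correct and follows essentially the same route as the paper, which also proves \Cref{lem:variance of IW} by the direct calculation $\E[(\hat\ell^t(s,a))^2\mid\mathcal F]=q^t(s,a)\cdot\bigl(\ell^t(s,a)/q^t(s,a)\bigr)^2$; you simply spell out the indicator-idempotence and constant-substitution steps that the paper leaves implicit.
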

\begin{proof}
Direct calculation gives $\E\left [(\hat \ell^t(s,a))^2\middle \vert \mathcal F\right ]=q^t(s,a)\cdot \left (\frac{\ell^t(s,a)}{q^t(s,a)}\right )^2=\frac{(\ell^t(s,a))^2}{q^t(s,a)}$, $\forall (s,a)$.
\end{proof}

\subsection{Auxiliary Lemmas for Error Terms}
In this section, we present two lemmas that will play an important role when bounding the error terms (as used in \Cref{lem:appendix known error term,lem:unknown error bound,lem:infinite error bound}).
\begin{lemma}[{\citet[Fact 2]{wang2020refined}}]\label{lem:error term wang}
Let $X_1,X_2,\ldots,X_n$ be i.i.d. random variables drawn from $\text{Exp}(\eta)$ which is the exponential distribution, then
\begin{equation*}
    \E\left [\max_{1\le i\le n}X_i\right ]\le \frac{1+\ln n}{\eta}.
\end{equation*}
\end{lemma}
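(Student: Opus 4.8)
The plan is to use the standard tail-integral (layer-cake) identity together with a union bound and an optimized truncation point. Writing $Y \triangleq \max_{1\le i\le n} X_i$, which is nonnegative, we have $\E[Y] = \int_0^\infty \Pr[Y > t]\,\mathrm{d}t$. First I would fix a threshold $a>0$ and split this integral as $\int_0^a \Pr[Y>t]\,\mathrm{d}t + \int_a^\infty \Pr[Y>t]\,\mathrm{d}t$: the first piece is bounded trivially by $a$ since $\Pr[Y>t]\le 1$, and the second piece is controlled by a union bound, $\Pr[Y>t] = \Pr[\exists i:\ X_i>t] \le \sum_{i=1}^n \Pr[X_i>t] = n e^{-\eta t}$, using that each $X_i\sim\text{Exp}(\eta)$ has tail $\Pr[X_i>t] = e^{-\eta t}$.

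Carrying out the second integral gives $\int_a^\infty n e^{-\eta t}\,\mathrm{d}t = \frac{n}{\eta}e^{-\eta a}$, so $\E[Y] \le a + \frac{n}{\eta}e^{-\eta a}$ for every $a>0$. The final step is simply to pick $a$ so as to (nearly) minimize the right-hand side; taking $a = \frac{\ln n}{\eta}$ yields $\E[Y] \le \frac{\ln n}{\eta} + \frac{n}{\eta}\cdot\frac 1n = \frac{1+\ln n}{\eta}$, which is exactly the claimed bound.

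There is essentially no hard step here — the estimate is textbook — so the only thing to get right is the bookkeeping around the truncation point and the tail of the exponential. For completeness one could alternatively invoke Rényi's representation of exponential order statistics, which gives $Y \stackrel{d}{=} \frac{1}{\eta}\sum_{i=1}^n \frac{E_i}{i}$ for i.i.d.\ $E_i\sim\text{Exp}(1)$ and hence the exact identity $\E[Y] = \frac{1}{\eta}\sum_{i=1}^n \frac 1i = \frac{H_n}{\eta} \le \frac{1+\ln n}{\eta}$ via the harmonic-number bound $H_n \le 1+\ln n$; but the union-bound argument above is shorter and entirely self-contained, so I would present that one.
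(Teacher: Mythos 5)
Your proof is correct. The paper itself does not prove this lemma --- it simply cites it as Fact~2 of \citet{wang2020refined} --- and your union-bound-plus-truncation argument (with the optimized threshold $a=\ln n/\eta$), as well as the exact harmonic-number identity via R\'enyi's representation, are both standard and complete derivations of the stated bound.
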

\begin{lemma}[Generalization of {\citet[Lemma 8]{syrgkanis2016efficient}}]\label{lem:error term syrg}
Let $\{z^t\in \mathbb R^d\}_{t=1}^T$ be a sequence of $d$-dimensional random variable such that $z_i^t\sim \text{Laplace}(\eta)$ for all $i\in[m]$ and $t\in [T]$. Let $X$ be a set  of sequences of the form $\{x^t\in [0,1]^d\}_{t=1}^T$. As long as $\ln \lvert X\rvert<dT$, we have
\begin{equation*}
    \E\nolimits_{z}\left [\max_{x\in X}\sum_{t=1}^T \langle x^t,z^t\rangle\right ]-\E\nolimits_{z}\left [\min_{x\in X}\sum_{t=1}^T \langle x^t,z^t\rangle\right ]\le \frac{10}{\eta}\sqrt{dT\ln \lvert X\rvert}.
\end{equation*}
\end{lemma}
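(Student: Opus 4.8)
The plan is to view this as a maximal inequality for a finite family of mean-zero \emph{sub-exponential} random variables, with the hypothesis $\ln\lvert X\rvert<dT$ playing exactly the role of keeping the optimal exponential-tilting parameter inside the window where the Laplace moment generating function is finite. First I would remove the gap between the maximum and the minimum by symmetry: each coordinate $z_i^t$ is symmetric, so $\{-z^t\}_{t=1}^T$ has the same law as $\{z^t\}_{t=1}^T$, whence $\E\nolimits_z\!\left[\min_{x\in X}\sum_{t}\langle x^t,z^t\rangle\right]=-\,\E\nolimits_z\!\left[\max_{x\in X}\sum_{t}\langle x^t,z^t\rangle\right]$. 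Writing $S_x\triangleq\sum_{t=1}^T\langle x^t,z^t\rangle$, the left-hand side of the claim therefore equals $2\,\E\nolimits_z[\max_{x\in X}S_x]$, and it suffices to prove $\E\nolimits_z[\max_{x\in X}S_x]\le\frac{5}{\eta}\sqrt{dT\ln\lvert X\rvert}$.

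Next I would bound the moment generating function of $S_x$ for a fixed $x\in X$. Recall that for $Z\sim\text{Laplace}(\eta)$ and $\lvert\lambda\rvert<\eta$ one has $\E[e^{\lambda Z}]=\bigl(1-(\lambda/\eta)^2\bigr)^{-1}$; for a weight $w\in[0,1]$ this gives $\E[e^{\lambda w Z}]=\bigl(1-(w\lambda/\eta)^2\bigr)^{-1}\le\bigl(1-(\lambda/\eta)^2\bigr)^{-1}$. Since $S_x$ is a sum of $dT$ independent such terms (the weights being the coordinates $x_i^t\in[0,1]$),
\[
\E[e^{\lambda S_x}]\le\bigl(1-(\lambda/\eta)^2\bigr)^{-dT}\le\exp\!\left(\frac{2dT\lambda^2}{\eta^2}\right),\qquad \text{for all }\lvert\lambda\rvert\le\eta/\sqrt{2},
\]
where the last inequality uses $\tfrac{1}{1-u}\le e^{2u}$ on $u\in[0,\tfrac12]$. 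So $S_x$ behaves sub-Gaussianly with variance proxy $4dT/\eta^2$, but only for $\lambda$ in the bounded range $\lvert\lambda\rvert\le\eta/\sqrt{2}$.

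Then the standard union-bound / Jensen argument gives, for any $\lambda\in(0,\eta/\sqrt{2}]$,
\[
\E\nolimits_z\!\Bigl[\max_{x\in X}S_x\Bigr]\le\frac1\lambda\ln\!\Bigl(\sum_{x\in X}\E[e^{\lambda S_x}]\Bigr)\le\frac{\ln\lvert X\rvert}{\lambda}+\frac{2dT\lambda}{\eta^2}.
\]
The unconstrained optimum is $\lambda^\star=\eta\sqrt{\ln\lvert X\rvert/(2dT)}$, and the assumption $\ln\lvert X\rvert<dT$ guarantees $\lambda^\star<\eta/\sqrt{2}$, so $\lambda^\star$ is admissible; substituting it yields $\E\nolimits_z[\max_{x\in X}S_x]\le\frac{2}{\eta}\sqrt{2dT\ln\lvert X\rvert}$. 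Doubling (from the symmetry reduction) gives the width bound $\frac{4\sqrt{2}}{\eta}\sqrt{dT\ln\lvert X\rvert}\le\frac{10}{\eta}\sqrt{dT\ln\lvert X\rvert}$, as claimed.

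The only delicate point — and the main obstacle — is the bookkeeping around the restricted range of $\lambda$: Laplace variables are genuinely sub-exponential rather than sub-Gaussian, so both the simplification $\tfrac{1}{1-u}\le e^{2u}$ (needing $(\lambda/\eta)^2\le\tfrac12$) and the choice $\lambda=\lambda^\star$ must be checked to lie inside $\lvert\lambda\rvert\le\eta/\sqrt{2}$, and this is precisely — and only — where the hypothesis $\ln\lvert X\rvert<dT$ is used. Everything else (the weighted-Laplace MGF, the symmetry reduction, the union bound, and the final arithmetic turning $4\sqrt2$ into $10$) is routine.
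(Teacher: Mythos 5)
Your proposal is correct and follows essentially the same route as the paper: symmetry to reduce to $2\E[\max_x S_x]$, a soft-max/Chernoff union bound, the Laplace MGF with $\frac{1}{1-u}\le e^{2u}$, and the hypothesis $\ln\lvert X\rvert<dT$ used exactly to keep the optimized $\lambda$ in the admissible range. The only (immaterial) difference is that you bound the weighted coordinate MGF directly via $\E[e^{\lambda w Z}]=\bigl(1-(w\lambda/\eta)^2\bigr)^{-1}\le\bigl(1-(\lambda/\eta)^2\bigr)^{-1}$, whereas the paper uses Jensen with the concavity of $y\mapsto y^{x_i^t}$ to reach the same bound.
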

\begin{proof}
Note that the key difference between this theorem and \citet[Lemma 8]{syrgkanis2016efficient} is that, their theorem assumed a binary decision set, i.e., $x_i^t\in \{0,1\}$ instead of $[0,1]$. However, their proof still holds with only a little modification. The first step is still noticing that the distribution of Laplace random variables is symmetric around $0$, so we only need to bound $2\E\nolimits_{z}\left [\max_{x\in X}\sum_{t=1}^T \langle x^t,z^t\rangle\right ]$, which is bounded by, for any $\lambda\ge 0$,
\begin{align*}
    \E\nolimits_{z}\left [\max_{x\in X}\sum_{t=1}^T \langle x^t,z^t\rangle\right ]&=\frac 1\lambda \ln \left (\exp \left (\E\nolimits_{z}\left [\max_{x\in X}\lambda\sum_{t=1}^T \langle x^t,z^t\rangle\right ]\right )\right )\\
    &\le \frac 1\lambda \ln \left (\E\nolimits_{z}\left [\max_{x\in X}\exp \left (\lambda\sum_{t=1}^T \langle x^t,z^t\rangle\right )\right ]\right )\\
    &\le \frac 1\lambda \ln \left (\sum_{x\in X}\E\nolimits_{z}\left [\exp \left (\lambda\sum_{t=1}^T \langle x^t,z^t\rangle\right )\right ]\right )\\
    &\le \frac 1\lambda \ln \left (\sum_{x\in X}\prod_{t=1}^T\E\nolimits_{z}\left [\exp \left (\lambda \langle x^t,z^t\rangle\right )\right ]\right )\\
    &= \frac 1\lambda \ln \left (\sum_{x\in X}\prod_{t=1}^T\E\nolimits_{z}\left [\exp \left (\lambda \sum_{i=1}^d x_i^tz_i^t\right )\right ]\right )\\
    &\leq \frac 1\lambda \ln \left (\sum_{x\in X}\prod_{t=1}^T\prod_{i=1}^d \left (\E\nolimits_{z}\left [\exp \left (\lambda z_i^t\right )\right ]\right )^{x_i^t}\right ),
\end{align*}

where the last step used the fact that $x_i^t\le 1$ (and thus $y^{x_i^t}$ is a concave function in $y$). Furthermore, by using the fact that $\E\nolimits_{z}\left [\exp \left (\lambda z_i^t\right )\right ]$ is just the moment generating function of Laplace random variables evaluated at $\lambda$, it is just $(1-\frac{\lambda^2}{\eta^2})^{-1}$ as long as $\lambda<\eta$. As it is always larger than $1$, we can directly bound
\begin{align*}
    \E\nolimits_{z}\left [\max_{x\in X}\sum_{t=1}^T \langle x^t,z^t\rangle\right ]
    &\le \frac 1\lambda \ln \left (\sum_{x\in X}\prod_{t=1}^T\sum_{i=1}^d \left (\E\nolimits_{z}\left [\exp \left (\lambda z_i^t\right )\right ]\right )^{x_i^t}\right )\\
    &\le \frac 1\lambda \ln \left (\sum_{x\in X}\prod_{t=1}^T\prod_{i=1}^d \left (\frac{1}{1-\frac{\lambda^2}{\eta^2}}\right )^{x_i^t}\right )
    \le \frac 1\lambda \ln \left (\sum_{x\in X}\prod_{t=1}^T\prod_{i=1}^d \frac{1}{1-\frac{\lambda^2}{\eta^2}}\right )\\
    &=\frac 1\lambda \ln \left (\lvert X\rvert \left (\frac{1}{1-\frac{\lambda^2}{\eta^2}}\right )^{dT}\right )=\frac 1\lambda \ln \lvert X\rvert +\frac{dT}{\lambda} \ln \left (\frac{1}{1-\frac{\lambda^2}{\eta^2}}\right ).
\end{align*}

By using the fact that $\frac{1}{1-x}\le \exp(2x)$ for all $x\le \frac 14$, as long as $\lambda\le \frac \eta 2$, we will have
\begin{equation*}
    \E\nolimits_{z}\left [\max_{x\in X}\sum_{t=1}^T \langle x^t,z^t\rangle\right ]
    \le \frac 1\lambda \ln \left (\lvert X\rvert \left (\frac{1}{1-\frac{\lambda^2}{\eta^2}}\right )^{dT}\right )=\frac 1\lambda \ln \lvert X\rvert +\frac{2dT}{\lambda} \frac{\lambda^2}{\eta^2}.
\end{equation*}

By picking $\lambda=\frac{\eta\sqrt{\ln \lvert X\rvert}}{2\sqrt{dT}}<\frac \eta 2$ (according to the assumption that $\ln \lvert X\rvert<dT$) gives the bound $\frac{5}{\eta}\sqrt{dT\ln \lvert X\rvert}$, which is what we want.
\end{proof}

\end{document}